\tikzset{
	->, % makes the edges directed
%	>=stealth’, % makes the arrow heads bold
	every state/.style={thick, fill=gray!10}, % sets the properties for each ’state’ node
	initial text=$ $, % sets the text that appears on the start arrow
}
\pgfplotsset{width=10cm,compat=1.9}
\newif\if@restonecol
\newtheorem{lemma}{Lemma}
\newtheorem{theorem}{Theorem}
\newtheorem*{theorem*}{Theorem}
\newtheorem{corollary}{Corollary}
\newtheorem{proposition}{Proposition}
\newtheorem{observation}{Observation}
\newtheorem{claim}{Claim}
\newcommand{\Rmnum}[1]{\expandafter\@slowromancap\romannumeral #1@}
\DeclareMathOperator*{\argmin}{arg\,min}
\DeclareMathOperator*{\argmax}{arg\,max}
\newcommand{\norm}[1]{\left\lVert#1\right\rVert}
\newcommand{\expct}[1]{\mathbb{E}\left[#1\right]}
\newcommand{\ind}[1]{\mathds{1}\left[#1\right]}
\newcommand{\return}{{J}}
\newcommand{\optpi}{\pi^{*}}
\newcommand{\optpir}{\pi^{*|\pi}}
\newcommand{\optpirb}{\pi^{*|\bpi}}
\newcommand{\polspace}{\Pi}
\newcommand{\inef}{{\Delta}}
\newcommand{\blamefunc}{{\Psi}}
\newcommand{\blame}{{\beta}}
\newcommand{\blamei}{{\beta_i}}
\newcommand{\blamej}{{\beta_j}}
\newcommand{\blamek}{{\beta_k}}
\newcommand{\weight}{{w}}
\newcommand{\bpi}{{\pi^{b}}}
\newcommand{\bpie}{{\widehat{\pi}^{b}}}
\newcommand{\prop}{{\mathcal R}}
\newcommand{\agentone}{{$ A_1$}}
\newcommand{\agenttwo}{{$ A_2$}}
\title{On Blame Attribution for Accountable Multi-Agent Sequential Decision Making}
\author{%
    Stelios Triantafyllou\\
	MPI-SWS\\
	\texttt{strianta@mpi-sws.org} \\
	\And
    Adish Singla\\
	MPI-SWS\\
	\texttt{adishs@mpi-sws.org} \\
	\And
	Goran Radanovic\\
	MPI-SWS\\
	\texttt{gradanovic@mpi-sws.org} \\
	\\
%   David S.~Hippocampus\thanks{Use footnote for providing further information
%     about author (webpage, alternative address)---\emph{not} for acknowledging
%     funding agencies.} \\
%   Department of Computer Science\\
%   Cranberry-Lemon University\\
%   Pittsburgh, PA 15213 \\
%   \texttt{hippo@cs.cranberry-lemon.edu} \\
  % examples of more authors
  % \And
  % Coauthor \\
  % Affiliation \\
  % Address \\
  % \texttt{email} \\
  % \AND
  % Coauthor \\
  % Affiliation \\
  % Address \\
  % \texttt{email} \\
  % \And
  % Coauthor \\
  % Affiliation \\
  % Address \\
  % \texttt{email} \\
  % \And
  % Coauthor \\
  % Affiliation \\
  % Address \\
  % \texttt{email} \\
}
\begin{document}

\maketitle

%%%%%%%%%%%%%%%%%%%%%%%%%%%%%%%%%%%%%
%%%%%%%%%%%%%%%%%%%%%%%%%%%%%%%%%%%%%
\newtoggle{longversion}
\settoggle{longversion}{true}
% \settoggle{longversion}{false}
\newtoggle{submission}
% \settoggle{submission}{true}
\settoggle{submission}{false}
%%%%%%%%%%%%%%%%%%%%%%%%%%%%%%%%%%%%% ABSTRACT
% !TEX root =  main.tex
%%%%%%%%%%%%%%%%%%%%%%%%%%%%%%%%%%%%%
%%%%%%%%%%%%%%%%%%%%%%%%%%%%%%%%%%%%%
\begin{abstract}
Blame attribution is one of the key aspects of accountable decision making, as it provides means to quantify the responsibility of an agent for a decision making outcome. In this paper, we study blame attribution in the context of cooperative multi-agent sequential decision making. As a particular setting of interest, we focus on cooperative decision making formalized by Multi-Agent Markov Decision Processes (MMDPs), and we analyze different blame attribution methods derived from or inspired by existing concepts in cooperative game theory. We formalize desirable properties of blame attribution in the setting of interest, and we analyze the relationship between these properties and the studied blame attribution methods. Interestingly, we show that some of the well known blame attribution methods, such as Shapley value, are not performance-incentivizing, while others, such as Banzhaf index, may over-blame agents. To mitigate these value misalignment and fairness issues, we introduce a novel blame attribution method, unique in the set of properties it satisfies, which trade-offs explanatory power (by under-blaming agents) for the aforementioned properties. 
We further show how to account for uncertainty about agents' decision making policies, and we experimentally: a) validate the qualitative properties of the studied blame attribution methods, and b) analyze their robustness to uncertainty. 
\end{abstract}

%%%%%%%%%%%%%%%%%%%%%%%%%%%%%%%%%%%%% INTRODUCTION
% !TEX root =  main.tex
%%%%%%%%%%%%%%%%%%%%%%%%%%%%%%%%%%%%%
%%%%%%%%%%%%%%%%%%%%%%%%%%%%%%%%%%%%%

{\em ... a body of people\footnote{Originally, and by modern standards outdated, Thomas Paine used phrasing with the word {\em men}.}, holding themselves accountable to nobody, ought not to be trusted by anybody.}

\hfill {\em ---Thomas Paine, A philosopher and a political activist.}

\section{Introduction}

With the widespread usage of artificial intelligence (AI) in everyday life~\cite{propublica_story, khandani2010consumer, esteva2017dermatologist}, accountability has become one of the central problems in the study of AI. Much recent research studied what constitutes accountability in the context of AI and how to design accountable AI systems~\cite{doshi2017accountability, kroll2016accountable,  wieringa2020account}, and recent policies and legislations~\cite{ethicsEU} are increasingly highlighting the importance of accountability, aiming to provide guidelines for developing and deploying accountable AI systems.

Accountability is a relatively broad term, and it typically involves an actor (or multiple actors) justifying their decisions  and facing consequences for actions taken \cite{wieringa2020account,bovens2007analysing}.
Hence, two critical aspects of accountability are explainability and blame attribution. Recent work proposed various methods for explaining, interpreting, understanding, and certifying algorithmic decision-making and its outcomes \cite{ribeiro2016model, datta2016algorithmic, lundberg2017unified, rudin2019stop, doshi2017towards, dann2019policy}. In this paper we study the other critical aspect of accountability -- \emph{blame attribution}.

In multi-agent decision making, one of the central roles of blame attribution is assigning blame for undesirable outcomes or, broadly speaking, for the system's inefficiency. Prior work on responsibility and blame in AI \cite{chockler2004responsibility,halpern2016actual,halpern2018towards,friedenberg2019blameworthiness} has recognized some of the core challenges in attributing blame, including the fact that disentangling agents' contributions to the final outcome is not a trivial task. Such challenges are particularly prominent 
in sequential settings where past decisions influence the future ones~\cite{halpern2016actual}. 

In this paper, we consider the task of allocating a score to an agent, which represents the degree of its blame, and reflects its contributions to the total inefficiency of the multi-agent system. We focus on cooperative sequential decision making, formalized by multi-agent Markov decision processes (MMDPs)~\cite{boutilier1996planning}, where the outcome of interest is the expected discounted return of the agents' joint policy. Concretely, given an MMDP and the agents' joint policy (true or estimated), we ask: {\em How to score each agent so that the agents' scores satisfy desirable properties? }

To answer this question, we turn to cooperative game theory and consider blame attribution methods that are derived from or inspired by existing concepts in the cost sharing, data valuation, and coalition formation literature~\cite{von2007theory, jain2007cost, balcan2015learning, balkanski2017statistical, jia2019towards, agarwal2019marketplace, shoham2008multiagent, chalkiadakis2004bayesian}, such as core~\cite{gillies1959solutions}, Shapley value~\cite{shapley201617, shapley1954method}, or Banzhaf index~\cite{banzhaf1964weighted, banzhaf1968one}. Taking this perspective on blame attribution, we study blame attribution for accountable multi-agent sequential decision making. More concretely: 
\begin{itemize}
    \item We formalize desirable properties that blame attribution methods should satisfy in cooperative multi-agent sequential decision making.
    We identify properties that are typically not considered in the cost-sharing literature, yet are important for decision making. In particular, we introduce two novel properties:
    a) {\em performance monotonicity}, which states that, having fixed all the other agents to their policies, the blame assigned to an agent should not increase if the agent adopts a policy that results in a higher expected discounted return (implying that the method is performance-incentivizing); b) {\em Blackstone consistency},\footnote{This property is inspired by Blackstone's ratio: ``It is better that ten guilty persons escape than that one innocent suffer"~\cite{blackstone1893commentaries}.} which states that an agent should not receive a higher blame just because the agents' policies are not fully known to the blame attribution procedure.
    \item We characterize the properties of the studied blame attribution methods. We show that some blame assignment methods, such as, Shapley value, are not performance-monotonic (and, hence, performance-incentivizing), while
    others, such as Banzhaf index, may over-blame agents. Motivated by these results, we introduce a novel blame attribution method that trade-offs explanatory power (by under-blaming agents) for the aforementioned properties.
    \item We provide algorithms for making the studied blame attribution methods Blackstone-consistent when the agents' policies are estimated. We also characterize the effect of uncertainty on blame attribution methods. 
    \item Using a simulation-based testbed, we experimentally analyze the studied blame attribution methods, their qualitative properties, as well as their robustness to uncertainty. The experiments showcase the importance of the robustness considerations we study and indicate that typically more efficient blame attribution methods (i.e., those that assign more blame in total) are less robust to uncertainty. 
\end{itemize}

%%%%%%%%%%%%%%%%%%%%%%%%%%%%%%%%%%%%% RELATED WORK
% !TEX root =  main.tex
%%%%%%%%%%%%%%%%%%%%%%%%%%%%%%%%%%%%%
%%%%%%%%%%%%%%%%%%%%%%%%%%%%%%%%%%%%%
\subsection{Other Related Work}

Apart from the works mentioned in the previous paragraphs, our work relates to different areas of moral philosophy, law, and AI, and here we highlight some of the most relevant references. Research in moral philosophy and law has extensively studied the problem of blame attribution, both in terms of human actors \cite{scanlon2009moral, shoemaker2011attributability, van2015moral}, as well as AI actors \cite{coeckelbergh2020artificial, torrance2008ethics, asaro2007robots, lima2021human}. We take some of the well known principles in moral philosophy and law in determining properties relevant for blame attribution, e.g., Blackstone consistency is inspired by Blackstone's ratio \cite{blackstone1893commentaries}. 
In AI, blame attribution has been studied through a more formal lens, utilizing causality \cite{chockler2004responsibility,halpern2016actual,halpern2018towards} and/or game theory \cite{friedenberg2019blameworthiness,ijcai2021-244}, and primarily focusing on nuances related to defining notions and degrees of responsibility, blame, and blameworthiness.
In contrast, we focus on cooperative sequential decision making, and analyze how different blame attribution methods from cooperative game theory fare under different blame attribution properties. Finally, our work is generally related to the {\em credit assignment} problem \cite{minsky1961steps, sutton2018reinforcement}, and more specifically to the credit assignment problem in multi-agent reinforcement learning~\cite{tumer2007distributed,foerster2018counterfactual,wang2020shapley}. However, our focus is not on supporting the learning processes of agents by reducing computational and statistical challenges of learning, but on evaluating the agents' contributions to the system's inefficiency, ideally in a fair and interpretable manner.

%%%%%%%%%%%%%%%%%%%%%%%%%%%%%%%%%%%%% PROBLEM FORMULATION
% !TEX root =  main.tex
%%%%%%%%%%%%%%%%%%%%%%%%%%%%%%%%%%%%%
%%%%%%%%%%%%%%%%%%%%%%%%%%%%%%%%%%%%%
\section{Formal Setting}\label{sec.setting}

\vspace{-0.1cm}
In this section, we describe our formal setting, based on multi-agent Markov decision processes (MMDPs), and we formally model the blame attribution problem in sequential decision making. This section also introduces a set of desirable formal properties of blame attribution methods.

\vspace{-0.1cm}
\subsection{Preliminaries}\label{sec.preliminaries}

\vspace{-0.1cm}
We consider a cooperative multi-agent setting, formalized as a class of MMDPs $\mathcal M$ with $n$ agents $\{1, ..., n\}$. Each MMDP in this class is a tuple $M = (\mathcal{S}, \{1, ..., n\}, \mathcal{A}, R, P, \gamma, \sigma)$ \cite{boutilier1996planning}, where: $\mathcal{S}$ is the state space; $\mathcal{A} = \times_{i = 1}^n \mathcal{A}_i$ is the action space, with $\mathcal{A}_i$ being the action space of agent $i$; $R$ is the reward function $R: \mathcal{S} \times \mathcal{A} \rightarrow \mathds R$ specifying the reward obtained when agents $\{1,..., n\}$ take a joint action; $P$ specifies transitions with $P(s, a, s')$ denoting the probability of transitioning to $s'$ from $s$ when agents $\{1,..., n\}$ take joint action $a = (a_1,..., a_n)$; $\gamma$ is the discount factor; and $\sigma$ is the initial state distribution. $\mathcal{S}$ and $\mathcal{A}$ are finite and discrete.
A (stationary) joint policy $\pi$ is a mapping 
$\pi: \mathcal{S} \rightarrow \mathcal D(\mathcal{A})$, where $\mathcal D(\mathcal{A})$ is a probability simplex over $\mathcal{A}$, 
with $\pi(a|s)$ denoting the probability of taking joint action $a$ in $s$. We assume that a joint policy $\pi$ is factorizable into agents' policies, $\pi_i$, i.e., $\pi(a|s) = \pi_1(a_1|s) \cdots \pi_n(a_n|s)$. Therefore, we can define an agent $i$'s policy  $\pi_i$ as a mapping from states to a distribution of agent $i$'s actions, i.e., $\pi_i: \mathcal{S} \rightarrow \mathcal D(\mathcal{A}_i)$. We denote the set of all policies by $\Pi = \times_{i=1}^n \Pi_i$.
We also define a standard performance measure. The expected discounted return of a joint policy $\pi$ is defined as $\return(\pi) = \expct{\sum_{t = 1}^{\infty} \gamma^{t-1} R(s_t, a_t)| s_1 \sim \sigma, \pi}$,
where the initial state $s_1$ is sampled from $\sigma$, and the state-joint action pair of time-step $t$, $(s_t, a_t)$, is obtained by executing joint policy $\pi$. We abuse our notation by denoting $\return(\pi_i',\pi_{-i}) = \return(\pi_1, ..., \pi_i',..., \pi_n)$. Similarly, $\return(\pi_S',\pi_{-S}) = \return(\pi'')$  for some $S \subseteq \{1, ..., n\}$, where $\pi_i'' = \pi_i$ if $i \notin S$ and $\pi_i'' = \pi_i'$ if $i \in S$.

\vspace{-0.1cm}
\subsection{Blame Attribution}\label{sec.blame_assignment}

\vspace{-0.1cm}
Our goal is to assign blame to agents for failing to jointly achieve optimal performance. Given the agents' behavior policy, denoted by $\bpi$, the inefficiency of the considered multi-agent system can be defined as $\inef = \return(\optpi) - \return(\bpi)$,
where $\optpi \in \argmax_{\pi} \return(\pi)$ is an optimal joint policy. Similarly, we define the marginal inefficiency of a subset of agents $S \subseteq \{1, ..., n \}$ as $\inef_S = \return(\optpirb_S,\bpi_{-S}) - \return(\bpi)$, as well as the marginal inefficiency of an agent $i$ as $\inef_i = \return(\optpirb_i,\bpi_{-i}) - \return(\bpi)$, where $\optpirb_S$ (resp. $\optpirb_i$) denotes an optimal policy of $S$ (resp. $i$) assuming all other policies are fixed, i.e., $\optpirb_S \in \argmax_{\pi_S} \return(\pi_S,\bpi_{-S})$ (resp.  $\optpirb_i \in \argmax_{\pi_i} \return(\pi_i,\bpi_{-i})$).
A blame attribution method is a mapping $\blamefunc: \mathcal M \times \polspace \rightarrow \mathds R_{\ge 0}^n$, where $\blamefunc(M, \bpi)$ distributes blame for inefficiency $\inef$ by assigning score $\blamei$ to agent $i$. The output of $\blamefunc$, i.e., the blame assignment, is denoted by $\blame$.

\textbf{Uncertainty considerations.} Since the agents' behavior policy $\bpi$ might not be known to the blame attribution procedure, we also define blame attribution under uncertainty as a mapping $\widehat{\blamefunc}: \mathcal M \times \mathcal P(\polspace) \rightarrow \mathds R^n_{\ge 0}$ that outputs a blame assignment estimate $\widehat \blame$. Here, $\mathcal P(\polspace)$ represents a set whose elements express the knowledge about $\bpi$. Inspired by the literature on robust MDPs \cite{iyengar2005robust,nilim2005robust,tamar2014scaling}, we encode such knowledge with uncertainty sets $\mathcal P(\bpi)$, one associated to each state $s$, $\mathcal P(\bpi, s)$, defined by the set of probability measures on $\mathcal{A}$. 
We assume that $\mathcal P(\bpi)$ is consistent with $\bpi$, i.e., $\bpi(\cdot|s)$ is in $\mathcal P(\bpi, s)$,\footnote{Such $\mathcal P(\bpi)$ can be derived from data containing agents' trajectories and be based on confidence intervals.}\footnote{$\bpi(\cdot|s)$ could be in $\mathcal P(\bpi, s)$ w.h.p., provided Blackstone consistency in Section \ref{sec.properties} is similarly adjusted. }
and that every $\pi(\cdot|s)$ in $\mathcal P(\bpi, s)$ factorizes to $\pi(a|s) = \pi_1(a_1|s) \cdots \pi_n(a_n|s)$. Therefore, $\mathcal P(\bpi, s)$ identifies the set of plausible stochastic actions that agent $i$ takes in state $s$.
 
\vspace{-0.1cm}
\subsection{Desirable Properties}\label{sec.properties}

\vspace{-0.1cm}
Our goal is to specify functions $\blamefunc$ and $\widehat{\blamefunc}$, such that the blame assignments $\blame$ and $\widehat{\blame}$ satisfy desirable properties. In the following text we denote these properties by $\prop$. 
Below, we define properties that are taken from or inspired by the game theory  literature~\cite{shoham2008multiagent,brandt2016handbook,jain2007cost}, but translated to our setting\footnote{Note that the terminology is slightly different.}:
\begin{itemize}
    \item {\em Validity} ($\prop_V$): We say that $\blamefunc$ is valid if it never distributes more blame than the observed inefficiency $\inef$.  More formally, $\blamefunc$ satisfies $\prop_V$ (resp. $\epsilon$-$\prop_V$) if for every $M$ and $\bpi$, $\sum_{i = 1}^n \blamei \le \inef$ (resp. $\sum_{i = 1}^n \blamei \le \inef + \epsilon$), where $\blame = \blamefunc(M, \bpi)$. 
    \item {\em Efficiency} ($\prop_E$): A more strict condition is that the total distributed blame is equal to $\inef$. That is,  $\blamefunc$ satisfies $\prop_E$ (resp. $\epsilon$-$\prop_E$) if for every $M$ and $\bpi$, $\sum_{i = 1}^n \blamei = \inef$ (resp. $|\sum_{i = 1}^n \blamei - \inef| \le \epsilon$), where $\blame = \blamefunc(M, \bpi)$.
    \item {\em Rationality} ($\prop_R$): Similar to validity is rationality, which requires that blame distributed to any subset of agents $S$ is not greater than $\inef_S$. That is,  $\blamefunc$ satisfies $\prop_R$ (resp. $\epsilon$-$\prop_R$) if for every $M$, $\bpi$, and $S \subseteq \{1, ..., n\}$, $\sum_{i\in S} \blamei \le \inef_S$ (resp. $\sum_{i\in S} \blamei \le \inef_S + \epsilon$), where $\blame = \blamefunc(M, \bpi)$.
    \item {\em Symmetry} ($\prop_S$):  We say that $\blamefunc$ is symmetric if it treats equal agents equally, i.e., agents that equally contribute to the inefficiency should receive the same blame. More formally, $\blamefunc$ satisfies $\prop_S$ (resp. $\epsilon$-$\prop_S$) if for every $M$ and $\bpi$, $\blame_i = \blame_j$ (resp. $|\blame_i - \blame_j| \le \epsilon$) whenever $\inef_{S \cup \{i \}} = \inef_{S \cup \{j \}}$ for all $S \subseteq \{1, ..., n\} \backslash \{i, j\}$, where $\blame = \blamefunc(M, \bpi)$.
    \item {\em Invariance} ($\prop_I$): We say that $\blamefunc$ is invariant if it assigns zero blame to agents who do not marginally contribute to inefficiency. More formally, $\blamefunc$ satisfies $\prop_I$ (resp. $\epsilon$-$\prop_I$) if for every $M$ and $\bpi$, $\blame_i = 0$ (resp. $\blame_i \le \epsilon$) whenever $\inef_{S \cup \{i \}} = \inef_{S}$ for all $S$, where $\blame = \blamefunc(M, \bpi)$.
\end{itemize}
Note that $\epsilon > 0$ in the definitions of  $\epsilon$-$\prop$, and that we use these properties in our characterization result for blame attribution under uncertainty.  
Additionally, we consider two properties that relate the blame attribution output to the MMDP structure and the agents' behavior policies.
\begin{itemize}
    \item {\em Contribution monotonicity} ($\prop_{CM}$)\cite{young1985monotonic}: We say that $\blamefunc$ is contribution-monotonic if the blame it assigns to an agent depends only on its marginal contributions and monotonically so. More formally, $\blamefunc$ satisfies $\prop_{CM}$ (resp. $\epsilon$-$\prop_{CM}$) if for every two 
    $(M^1, \bpi^1)$ and $(M^2, \bpi^2)$, $\blamei^1 \geq \blamei^2$ (resp. $\blamei^1 \geq \blamei^2 - \epsilon$) whenever $\inef^1_{S \cup \{i \}} - \inef^1_S \geq \inef^2_{S \cup \{i \}} - \inef^2_S$ for all $S$, where $\blame^1 = \blamefunc(M^1, \bpi^1)$ and $\blame^2 = \blamefunc(M^2, \bpi^2)$.
    \item {\em Performance monotonicity} ($\prop_{PerM}$): We say that $\blamefunc$ is performance-monotonic if it does not assign greater blame to agent $i$ for adopting a policy that results in an equal or higher performance, assuming the other agents' policies fixed. More formally, consider any MMDP $M$, and any $\bpi_{-i}$, $\pi_i$ and  $\pi_i'$ such that $\return(\pi_i,\bpi_{-i}) \le \return(\pi_i',\bpi_{-i})$. We say that $\blamefunc$ satisfies $\prop_{PerM}$ (resp. $\epsilon$-$\prop_{PerM}$) if $\blame_i \ge \blame_i'$ (resp. $\blame_i \ge \blame_i' - \epsilon$) where $\blame = \blamefunc(M, (\pi_i,\bpi_{-i}))$ and $\blame' = \blamefunc(M, (\pi_i', \bpi_{-i}))$.
\end{itemize}
The above definitions directly extend to $\widehat{\blamefunc}$ except that we require them to hold for all $\mathcal{P}(\pi_b)$. 
Additionally, we identify the following property for $\widehat{\blamefunc}$:
\begin{itemize}
\item{\em Blackstone consistency} ($\prop_{BC}$): We say that $\widehat{\blamefunc}$ is Blackstone-consistent with $\blamefunc$ if it never attributes more blame to an agent than $\blamefunc$. More formally, $\widehat{\blamefunc}$ satisfies $\prop_{BC}(\blamefunc)$ if for any $M$, $\bpi$ and $\mathcal{P}(\bpi)$, $\widehat{\blamei} \le \blamei$, where $\blame = \blamefunc(M, \bpi)$ and $\widehat{\blame} = \widehat{\blamefunc}(M, \mathcal{P}(\bpi))$.
\end{itemize}

%%%%%%%%%%%%%%%%%%%%%%%%%%%%%%%%%%%%% GAME-THEORETIC APPROACHES TO BLAME ASSIGNMENT
% !TEX root =  main.tex
%%%%%%%%%%%%%%%%%%%%%%%%%%%%%%%%%%%%%
%%%%%%%%%%%%%%%%%%%%%%%%%%%%%%%%%%%%%
\section{Game-Theoretic Approaches to Blame Attribution}\label{sec.approaches_to_blame}
In this section, we study blame attribution methods based on well known game theoretic notions, such as the core~\cite{gillies1959solutions}, Shapley value~\cite{shapley201617, shapley1954method}, or Banzhaf index~\cite{banzhaf1964weighted, banzhaf1968one}. We also introduce a novel blame attribution method, unique in the set of properties it satisfies. The proofs of our results can be found in \iftoggle{longversion}{Appendices \ref{sec.prop_proofs_123}, \ref{sec.proof_sv}, and \ref{sec.proof_ap}}{the supplementary material}.

\subsection{Max-Efficient Rationality}\label{sec.efficient_rationality}

We start with a relatively simple blame assignment method that puts rationality as a strict condition, and maximizes the efficiency of blame assignment under this constraint. We call this blame assignment method {\em max-efficient rationality}. More formally, max-efficient rationality can be defined
via the following linear program:
\begin{align}
	\label{prob.rationality}
	\tag{P1}
	&\quad \blamefunc_{MER}(M, \bpi) := \max_{\blame} \sum_{i=1}^n \blamei
	\quad \quad \mbox{ s.t. }  \quad  \sum_{i\in S} \blamei \le \inef_S \quad \forall S \subseteq \{1, ..., n\},
\end{align}
where $\inef_S$ are precomputed. 
Max-efficient rationality is inspired by the notion of core,
but unlike the core, max-efficient rationality does not require $\prop_E$ (efficiency) to hold. It is easy to show that the following properties are satisfied by any optimizer of \eqref{prob.rationality}, $\blamefunc_{MER}$.
\begin{proposition}\label{prop.mer_properties}
Every solution to the optimization problem \eqref{prob.rationality}, i.e., $\blamefunc_{MER}$, satisfies $\prop_V$ (validity), $\prop_R$ (rationality) and $\prop_I$ (invariance). 
\end{proposition}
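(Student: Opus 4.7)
The plan is to observe that all three properties follow essentially for free from either the LP constraints themselves or from a very specific substitution. Throughout, I implicitly use the fact that the LP \eqref{prob.rationality} optimizes over $\blame \in \mathbb{R}^n_{\ge 0}$, since a blame assignment is by definition a non-negative vector, and that the feasible region is non-empty (it contains $\blame = 0$) while the objective is bounded (by $\inef$), so an optimizer exists.

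First I would handle $\prop_R$ (rationality), which is essentially tautological: for any feasible $\blame$ of \eqref{prob.rationality}, the constraint $\sum_{i \in S} \blamei \le \inef_S$ is imposed for every $S \subseteq \{1,\ldots,n\}$, so in particular every optimizer satisfies the defining inequality of $\prop_R$. Validity $\prop_V$ then drops out as the special case $S = \{1,\ldots,n\}$, since $\inef_{\{1,\ldots,n\}} = \inef$ by definition.

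The only property that requires a small argument is invariance $\prop_I$. Fix an agent $i$ such that $\inef_{S \cup \{i\}} = \inef_S$ for every $S \subseteq \{1,\ldots,n\} \setminus \{i\}$. Plugging in $S = \emptyset$ yields $\inef_{\{i\}} = \inef_\emptyset$, and one verifies directly from the definition of $\inef_S$ that $\inef_\emptyset = \return(\bpi) - \return(\bpi) = 0$. Hence the LP constraint corresponding to the singleton $S = \{i\}$ reads $\blamei \le 0$, which combined with the non-negativity $\blamei \ge 0$ built into the range of $\blamefunc$ forces $\blamei = 0$, as required by $\prop_I$.

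Since each step above holds for \emph{every} feasible, hence every optimal, $\blame$, the three properties are satisfied by the solution set of \eqref{prob.rationality}. Honestly, I do not expect a main obstacle here; the only subtlety to flag in writing is the implicit non-negativity constraint, which is what makes the rationality bound $\blamei \le 0$ actually pin $\blamei$ down to zero in the invariance argument rather than allowing arbitrary negative values.
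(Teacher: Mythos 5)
Your proof is correct and, for $\prop_V$ and $\prop_R$, identical in substance to the paper's: both properties are read off directly from the LP constraints, with validity being the instance $S = \{1,\dots,n\}$ together with $\inef_{\{1,\dots,n\}} = \inef$. The only point of divergence is the final step of the invariance argument. Like the paper, you derive $\blamei \le \inef_{\{i\}} = \inef_{\emptyset} = 0$ from the singleton constraint; you then conclude $\blamei = 0$ by invoking the non-negativity built into the codomain $\mathbb{R}^n_{\ge 0}$ of a blame attribution method, so the conclusion holds for every feasible point. The paper does not lean on non-negativity of the decision variables: it first checks that for $j \ne i$ the constraints indexed by sets containing $i$ add nothing beyond $\blamei \le 0$ (since $\sum_{j \in S}\blamej \le \inef_{S\cup\{i\}} - \blamei = \inef_S - \blamei$ is never tighter than the constraint $\sum_{j\in S}\blamej \le \inef_S$ in a way that rewards negative $\blamei$), so the constraints involving $i$ decouple to $\blamei \le 0$, and then the maximization objective forces every optimizer to take $\blamei = 0$. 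Your route is shorter and rests on the reasonable reading that the feasible set is restricted to non-negative vectors (which you correctly flag as the one subtlety); the paper's route is marginally longer but yields $\blamei = 0$ for every optimizer even if non-negativity is not imposed as an explicit LP constraint. Either way the proposition is established.
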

Since there might exist multiple optimal solutions to \eqref{prob.rationality}, a tie breaking rule might be needed to decide on the method's output, $\blamefunc_{MER}$. 
We account for this fact in the experiments from Section \ref{sec.experiments}. 
Note that the constraints in \eqref{prob.rationality} are quite restrictive, leading to blame assignments that typically distribute very little blame in total. The amount of total blame assigned is important for explanatory power. Namely, a trivial blame attribution method that assigns the score of $0$ to every agent satisfies all of the properties from the previous section except $\prop_{E}$ (efficiency), but provides no information regarding the agents' contributions to the outcome. 

\subsection{Marginal Contribution}\label{sec.marginal_contribution}

Another intuitive blame assignment method is what we refer to as {\em marginal contribution}. This method simply quantifies an agent's potential to increase the performance of the system, assuming that the other agents keep their policies fixed. That is, the blame assigned to agent $i$ is equal to $\blamei = \inef_i$. The following properties hold: 
\begin{proposition}\label{prop.mc_properties}
$\blamefunc_{MC}(M, \bpi) = (\inef_1, ..., \inef_n)$ satisfies $\prop_S$ (symmetry), $\prop_I$ (invariance), $\prop_{CM}$ (contribution monotonicity) and $\prop_{PerM}$ (performance monotonicity).
\end{proposition}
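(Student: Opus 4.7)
The plan is to verify each of the four properties directly from the definition $\blamei = \inef_i = \return(\optpirb_i, \bpi_{-i}) - \return(\bpi)$, and the key observation is that in each case the relevant hypothesis, when specialized to $S = \emptyset$, already yields the required conclusion.

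For $\prop_S$ (symmetry), the hypothesis $\inef_{S \cup \{i\}} = \inef_{S \cup \{j\}}$ for all $S \subseteq \{1,\dots,n\} \setminus \{i,j\}$ instantiated at $S = \emptyset$ gives $\inef_i = \inef_j$, which is precisely $\blame_i = \blame_j$. For $\prop_I$ (invariance), the hypothesis $\inef_{S \cup \{i\}} = \inef_S$ for all $S$, instantiated at $S = \emptyset$, gives $\inef_i = \inef_{\emptyset} = 0$ (noting that $\inef_{\emptyset} = \return(\bpi) - \return(\bpi) = 0$), so $\blame_i = 0$. For $\prop_{CM}$ (contribution monotonicity), the hypothesis $\inef^1_{S \cup \{i\}} - \inef^1_S \ge \inef^2_{S \cup \{i\}} - \inef^2_S$ for all $S$, instantiated at $S = \emptyset$, gives $\inef^1_i \ge \inef^2_i$, hence $\blame^1_i \ge \blame^2_i$.

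The only case that uses the MMDP structure non-trivially is $\prop_{PerM}$ (performance monotonicity). The key observation is that $\return(\optpirb_i, \bpi_{-i})$ depends only on $\bpi_{-i}$ and not on agent $i$'s policy, since it is a maximum over $\pi_i$. Concretely, write $V^\star := \max_{\tilde{\pi}_i} \return(\tilde{\pi}_i, \bpi_{-i})$, which is common to both $(\pi_i, \bpi_{-i})$ and $(\pi_i', \bpi_{-i})$. Then $\blame_i = V^\star - \return(\pi_i, \bpi_{-i})$ and $\blame_i' = V^\star - \return(\pi_i', \bpi_{-i})$, so the assumption $\return(\pi_i, \bpi_{-i}) \le \return(\pi_i', \bpi_{-i})$ immediately yields $\blame_i \ge \blame_i'$.

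Overall there is no real obstacle: the proof is essentially a sequence of one-line verifications, and the slightly less trivial step is simply recognizing that the optimal-response value $V^\star$ against a fixed $\bpi_{-i}$ is invariant under changes of $\pi_i$, which makes performance monotonicity hold exactly (not just approximately).
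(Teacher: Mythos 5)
Your proof is correct and follows essentially the same route as the paper's: each of $\prop_S$, $\prop_I$, $\prop_{CM}$ is verified by instantiating the hypothesis at $S=\emptyset$, and $\prop_{PerM}$ follows from the fact that the best-response value against $\bpi_{-i}$ does not depend on agent $i$'s own policy (a point the paper uses implicitly and you make explicit via $V^\star$). No gaps.
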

Unlike max-efficient rationality, marginal contribution does not satisfy validity, i.e., it can over-blame a group of agents by assigning them total score that exceeds the improvement they can achieve, i.e., $\inef$. Given that an agent's marginal inefficiency is not always a good indicator of the agent's influence on the system's performance, this method can be highly inefficient (distributing very little blame) when coordination among agents is required, as we show in Section \ref{sec.experiments}.

\subsection{Shapley Value and Banzhaf Index}\label{sec.shapley_value}

In the context of the sequential decision making setting studied in this paper, Shapley value can be defined as $\blame = \blamefunc_{SV}(M, \bpi)$ such that
\begin{align}\label{eq.def_sv}
     \blame_i = \sum_{S \subseteq \{1, ..., n\} \backslash \{i\}} \weight_S \cdot \left [ \return(\optpirb_{S \cup \{ i \}}, \bpi_{-S\cup\{i\}}) - \return(\optpirb_{S}, \bpi_{-S}) \right ],
\end{align}
where coefficients $\weight_S$ are set to $\weight_S = \frac{|S|! ( n - |S| - 1)!}{n!}$. We restate (and in 
\iftoggle{longversion}{Appendix \ref{sec.proof_sv}}{the supplementary material},
 prove the claim for our setting) a well known uniqueness result for Shapley value:
\begin{theorem}\label{thm.shapley_value.unique}\cite{young1985monotonic}
$\blamefunc_{SV}(M, \bpi) = (\beta_1, ..., \beta_n)$, where $\beta_i$ is defined by Eq. \eqref{eq.def_sv} and $\weight_S = \frac{|S|! ( n - |S| - 1)!}{n!}$,  is a unique blame attribution method satisfying $\prop_E$ (efficiency), $\prop_S$ (symmetry) and $\prop_{CM}$ (contribution monotonicity). Additionally, $\blamefunc_{SV}$ satisfies $\prop_V$ (validity) and $\prop_I$ (invariance).
\end{theorem}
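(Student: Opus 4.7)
The plan is to split the argument into an existence part (showing $\blamefunc_{SV}$ satisfies all five listed properties) and a uniqueness part (an axiomatic characterization in the spirit of Young's (1985) monotonicity argument, adapted to the MMDP setting). A useful preliminary observation is that the bracketed term in Eq.~\eqref{eq.def_sv} simplifies to $\inef_{S \cup \{i\}} - \inef_S$, since the common term $\return(\bpi)$ cancels. For existence, I would sum $\beta_i = \sum_{S \not\ni i} \weight_S (\inef_{S \cup \{i\}} - \inef_S)$ over $i$ and reindex via the combinatorial identity $\sum_{i \notin S} \weight_S = \sum_{i \in S} \weight_{S \setminus \{i\}}$; this collapses the double sum telescopically to $\inef_{\{1,\ldots,n\}} = \inef$, giving $\prop_E$ (and hence $\prop_V$). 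Symmetry of $\weight_S$ in $|S|$ gives $\prop_S$; positivity of the weights gives $\prop_{CM}$; and $\prop_I$ is immediate because zero marginal contributions force $\beta_i = 0$.

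For uniqueness, let $\blamefunc$ be any method satisfying $\prop_E, \prop_S, \prop_{CM}$. The key reduction is that $\beta_i$ depends only on agent $i$'s marginal-contribution profile $(\inef_{S \cup \{i\}} - \inef_S)_S$: if two pairs $(M^1, \bpi^1)$ and $(M^2, \bpi^2)$ induce identical profiles at $i$, then applying $\prop_{CM}$ in both directions yields $\beta_i^1 = \beta_i^2$. Modulo a realizability subclaim (see below), this reduces the statement to an assertion about the abstract cooperative game $v(S) = \inef_S$. From here I would run Young's induction on the number of non-zero coefficients in the unanimity-basis expansion $v = \sum_{T \ne \emptyset} c_T \, u_T$, where $u_T(S) = 1$ if $S \supseteq T$ and $0$ otherwise. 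The base case $v \equiv 0$ combines $\prop_S$ (all agents symmetric) with $\prop_E$ to force every $\beta_i = 0$. In the inductive step, pick a minimal $T^*$ in the support of $v$; for $i \notin T^*$ the game $v' = v - c_{T^*} u_{T^*}$ has strictly fewer non-zero coefficients and induces the same marginal profile at $i$, so the reduction plus the inductive hypothesis give $\beta_i^v = \beta_i^{v'} = \beta_i^{SV,v'} = \beta_i^{SV,v}$, where the last equality uses that $\blamefunc_{SV}$ of $u_{T^*}$ vanishes off $T^*$. For $i \in T^*$, symmetry among the members of $T^*$ combined with $\prop_E$ (given the already-determined blames outside $T^*$) pins down their common blame.

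The hard part is the realizability subclaim underpinning the $\prop_{CM}$-reduction: to apply $\prop_{CM}$ meaningfully we need the MMDP class to be rich enough that every marginal-contribution profile arising in the induction is induced by some $(M', \bpi')$. The intermediate games $v'$ produced by subtracting unanimity terms need not be non-negative or monotonic, so they may not themselves correspond to any MMDP inefficiency function; the workaround is that we only ever need to realize profiles at a single agent $i$, which I would achieve by constructing simple one-shot MMDPs with tunable rewards and deterministic transitions where only $i$'s choices matter. Once realizability is established, the remainder is a mechanical instantiation of Young's combinatorial induction.
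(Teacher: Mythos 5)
Your existence half is fine and matches the paper's: the bracketed term in Eq.~\eqref{eq.def_sv} is $\inef_{S\cup\{i\}}-\inef_S$, the telescoping sum gives $\prop_E$ (hence $\prop_V$), and $\prop_S$, $\prop_{CM}$, $\prop_I$ follow from the form of the weights. The gap is in the uniqueness half, and it is exactly the issue you flag but do not resolve. Your proposed workaround --- realizing only ``the profile at a single agent $i$'' via one-shot MMDPs where only $i$'s choices matter --- does not work for two reasons. First, $\prop_{CM}$ is a statement about two full instances $(M^1,\bpi^1)$ and $(M^2,\bpi^2)$; the chain $\beta_i^v=\beta_i^{v'}=\beta_i^{SV,v'}$ requires the intermediate game $v'=v-c_{T^*}u_{T^*}$ itself to be realizable, since both $\beta_i^{v'}$ and the inductive hypothesis refer to the value of the candidate solution $\blamefunc$ \emph{at} $v'$, and $\blamefunc$ is only defined on games induced by some $(M',\bpi')$. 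Second, every realizable inefficiency function is monotone and non-negative (because $\inef_S=\max_{\pi_S}\return(\pi_S,\bpi_{-S})-\return(\bpi)$ is non-decreasing in $S$ and $\inef_\emptyset=0$), whereas $v-c_{T^*}u_{T^*}$ can fail monotonicity when $c_{T^*}>0$; and an MMDP in which only agent $i$'s choices matter yields only the \emph{constant} marginal profile $\inef_{S\cup\{i\}}-\inef_S\equiv\inef_{\{i\}}$, which cannot match the non-constant profiles arising in the induction. So Young's original unanimity-basis induction does not go through on this restricted domain.

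The paper avoids this entirely by a different route: it maps each $(M,\bpi)$ to the cooperative game $u(S)=\inef_S$, verifies a one-to-one correspondence between $\prop_E,\prop_S,\prop_{CM}$ and the axioms of Pareto optimality, equal treatment, and (unequal) marginality, observes that the resulting class of games is precisely the class of monotone games (monotonicity from the $\argmax$ structure; surjectivity from a one-shot MMDP construction with tunable rewards, Lemma~\ref{lem.inef_existence}), and then invokes Pintér's re-proof of Young's characterization, which is established \emph{on the class of monotone games} and therefore never needs to leave that class. If you want a self-contained argument, you would have to reproduce a proof of that restricted-domain characterization rather than Young's original induction; citing it, as the paper does, is the intended shortcut.
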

As we show in Section \ref{sec.experiments}, Shapley value does not satisfy properties $\prop_R$ (rationality) nor $\prop_{PerM}$ (performance monotonicity).

Banzhaf index, denoted by $\blamefunc_{BI}$, is similar to Shapley value, and in fact, it has the same functional form but different coefficients ($w_S = \frac{1}{2^{n-1}}$), leading to a slightly different uniqueness result.
\iftoggle{longversion}{Appendix \ref{sec.bi}}{The supplementary material}
discusses Banzhaf index and its properties in greater detail. Here, we note that Banzhaf index is equivalent to Shapley value for two agents. However, in general, Banzhaf index does not satisfy $\prop_E$ (efficiency), but a version of it, called $2$-efficiency~\cite{malawski2002equal}. As it is the case with Shapley value, Banzhaf index does not satisfy $\prop_{PerM}$ (performance monotonicity) nor $\prop_R$ (rationality). Interestingly, $\prop_V$ (validity) might also not hold (see Section \ref{sec.experiments}).

\subsection{Average Participation}\label{sec.avg_participation}

Motivated by the fact that $\prop_{PerM}$ (performance monotonicity) is important for incentivizing good performance and $\prop_E$ (efficiency) is important for explanatory power, we introduce a novel blame assignment method, which can be seen as a combination of marginal contribution and Shapley value. 
We first show the following result, which shows that there is an inherent trade-off between $\prop_{PerM}$ and $\prop_E$, assuming $\prop_S$ (symmetry) and $\prop_I$ (invariance) hold. 
\begin{proposition}\label{prop.imposs}
No blame attribution method $\blamefunc$ satisfies $\prop_E$ (efficiency), $\prop_S$ (symmetry), $\prop_I$ (invariance) and $\prop_{PerM}$ (performance monotonicity).
\end{proposition}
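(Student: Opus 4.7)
The plan is to build a counterexample consisting of a single $2$-agent MMDP $M$ together with two behavior policies $\bpi^1 = (\pi_1^1, \pi_2^1)$ and $\bpi^2 = (\pi_1^2, \pi_2^1)$ that differ only in agent $1$'s policy and satisfy $\return(\bpi^2) > \return(\bpi^1)$, yet for which the joint force of $\prop_E$, $\prop_S$, and $\prop_I$ pins down $\blame_1$ in each instance at values $\blame_1^{(1)} < \blame_1^{(2)}$, which directly violates $\prop_{PerM}$. The design idea is to choose the first instance so that agents $1$ and $2$ are symmetric (letting $\prop_S$ and $\prop_E$ fix the blames), and the second instance so that agent $2$'s action becomes payoff-irrelevant (letting $\prop_I$ and $\prop_E$ fix the blames).

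Concretely, I would take an MMDP with one initial state $s_0$ transitioning deterministically (under any joint action) to an absorbing terminal state of zero reward, binary action sets, and payoff table $R(s_0,(0,0)) = 0$, $R(s_0,(0,1)) = R(s_0,(1,0)) = 1$, and $R(s_0,(1,1)) = -1$. The global optimum is $\return(\optpi)=1$. For the first instance, set $\pi_1^1 = \pi_2^1 = (1,0)$; then $\return(\bpi^1)=0$, so $\inef^{(1)}=1$, and each agent can unilaterally reach the optimum by switching to action $1$, giving $\inef_1^{(1)} = \inef_2^{(1)} = 1$. Agents $1$ and $2$ are therefore symmetric in the sense required by $\prop_S$, and combining $\prop_S$ with $\prop_E$ forces $\blame_1^{(1)} = \blame_2^{(1)} = 1/2$.

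For the second instance, switch agent $1$ to $\pi_1^2 = (2/3,1/3)$. A direct expansion gives $\return(\pi_1^2, \pi_2) = 1/3$ for \emph{every} $\pi_2$, because the coefficient of $\pi_2(0)$ equals the coefficient of $\pi_2(1)$ thanks to the negative entry $R(s_0,(1,1))=-1$. Consequently $\return(\bpi^2) = 1/3 > 0$, so agent $1$'s switch strictly improves performance, and $\inef^{(2)}=2/3$. The same indifference yields $\inef_{S \cup \{2\}}^{(2)} = \inef_S^{(2)}$ for every $S \subseteq \{1\}$, so $\prop_I$ forces $\blame_2^{(2)}=0$, and $\prop_E$ then yields $\blame_1^{(2)} = 2/3$. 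But $\prop_{PerM}$ demands $\blame_1^{(2)} \leq \blame_1^{(1)} = 1/2$ because agent $1$'s return has strictly improved, and $2/3 > 1/2$ furnishes the contradiction.

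The only real obstacle is engineering the payoff matrix so that a single $\pi_1^2$ meets three demands at once: (i) it strictly improves $\return$ over $\pi_1^1$, (ii) it makes agent $2$ payoff-indifferent so that $\prop_I$ activates, and (iii) the improvement is smaller than $\inef^{(1)}/2$, so that the residual $\inef^{(2)}$ strictly exceeds $\blame_1^{(1)}$. A short calculation shows that condition (iii) cannot be met with a non-negative reward matrix---one lands exactly at the boundary $\blame_1^{(2)} = \blame_1^{(1)}$ and obtains only a weak inequality---so the negative entry $R(s_0,(1,1)) = -1$ is essential. Once the instance is in hand, every remaining verification is elementary algebra.
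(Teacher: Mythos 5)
Your proof is correct and follows essentially the same strategy as the paper's: a two-agent, one-shot MMDP in which symmetry plus efficiency pin agent 1's blame in the first instance, invariance plus efficiency pin it (strictly higher) in the second, and performance monotonicity is contradicted since agent 1's policy change strictly improves the return. The only difference is the gadget used to make agent 2 payoff-irrelevant in the second instance --- you use a mixed policy for agent 1 together with a negative reward entry, whereas the paper adds a third action and a constant reward row, so the deviation can stay deterministic --- but the logical skeleton and all verifications match.
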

Given this result, we instead consider two new properties $\prop_{AE}$ (average efficiency) and $\prop_{cPerM}$ (c-performance monotonicity), which are weaker variants of $\prop_E$ (efficiency) and $\prop_{PerM}$ (performance monotonicity) respectively. Importantly, $\prop_{AE}$ is not satisfied by $\blamefunc_{MC}$ and $\prop_{cPerM}$ by $\blamefunc_{SV}$. In addition, we also consider two variants of $\prop_{CM}$ (contribution monotonicity):
$\prop_{cParM}$ (c-participation monotonicity) and $\prop_{RcParM}$ (relative c-participation monotonicity). 
To define the new properties, we introduce a contribution function 
$c: \mathcal{M} \times \Pi \times \{1,..., n\} \rightarrow \{0,1\}$
that indicates whether an agent is {\em pivotal}, i.e., marginally contributes to the inefficiency of some subset of $\{1, ...,n\}$:
\begin{align*}
    c(M, \bpi, i) = \begin{cases}
            0  \quad &\mbox{ if } \return(\optpirb_{S \cup \{ i \}}, \bpi_{-S\cup\{i\}}) = \return(\optpirb_{S}, \bpi_{-S}) \quad \forall S \subseteq \{1, ..., n\} \\ 
            1 \quad &\mbox{ otherwise  }
    \end{cases}.
\end{align*} 
Alternatively, an agent $i$ is pivotal if and only if its Shapley value is strictly greater than $0$, i.e., $c(M, \bpi, i) = \ind{\blamei > 0}$, where $\beta = \blamefunc_{SV}(M, \bpi)$ and $\ind{.}$ is an indicator function. The new properties are then defined as follows:

\begin{itemize}
\item {\em Average efficiency} ($\prop_{AE}$): $\blamefunc$ satisfies $\prop_{AE}$ (resp. $\epsilon$-$\prop_{AE}$) if for every $M$ and $\bpi$, $\sum_{i = 1}^n \blamei = \sum_{S \subseteq \{1, ..., n\}} \frac{1}{2^n - 1} \cdot \inef_S$ (resp. $|\sum_{i = 1}^n \blamei - \sum_{S \subseteq \{1, ..., n\}} \frac{1}{2^n - 1} \cdot \inef_S| \le \epsilon$), where $\blame = \blamefunc(M, \bpi)$.
\item {\em c-Performance monotonicity} ($\prop_{cPerM}$): Consider any MMDP $M$, and any $\bpi_{-i}$, $\pi_i$ and $\pi_i'$ s.t. $\return(\pi_i,\bpi_{-i}) \le \return(\pi_i',\bpi_{-i})$ and $c(M, (\pi_i,\bpi_{-i}), j) = c(M, (\pi_i',\bpi_{-i}), j)$ for every $j$. We say that $\blamefunc$ satisfies $\prop_{cPerM}$ (resp. $\epsilon$-$\prop_{cPerM}$) if $\blame_i \ge \blame_i'$ (resp. $\blame_i \ge \blame_i' - \epsilon$) where $\blame = \blamefunc(M, (\pi_i,\bpi_{-i}))$ and $\blame' = \blamefunc(M, (\pi_i', \bpi_{-i}))$. 
\item {\em c-Participation monotonicity} ($\prop_{cParM}$): $\blamefunc$ satisfies $\prop_{cParM}$ (resp. $\epsilon$-$\prop_{cParM}$) if for every $(M^1, \bpi^1)$ and $(M^2, \bpi^2)$
s.t. 
$c(M^1, \bpi^1, i) = c(M^2, \bpi^2, i)$ for every $i$, $\blamej^1 \geq \blamej^2$ (resp. $\blamej^1 \geq \blamej^2 - \epsilon$) whenever $\inef^1_{S \cup \{j \}} \geq \inef^2_{S \cup \{j \}}$ for all $S$, where $\blame^1 = \blamefunc(M^1, \bpi^1)$ and $\blame^2 = \blamefunc(M^2, \bpi^2)$.   
\item {\em Relative c-participation monotonicity} ($\prop_{RcParM}$): $\blamefunc$ satisfies $\prop_{RcParM}$ (resp. $\epsilon$-$\prop_{RcParM}$) if  for every $(M^1, \bpi^1)$ and $(M^2, \bpi^2)$ s.t. $c(M^1, \bpi^1, i) = c(M^2, \bpi^2, i)$  for every $i$, $\blamej^1 - \blamej^2 \geq \blamek^1 - \blamek^2$ (resp. $\blamej^1 - \blamej^2 \geq \blamek^1 - \blamek^2 - \epsilon$) whenever $c(M^1, \bpi^1, j) = c(M^1, \bpi^1, k)$ and $\inef^1_{S \cup \{j \}} - \inef^2_{S \cup \{j \}}\geq \inef^1_{S \cup \{k \}} - \inef^2_{S \cup \{k \}}$ for all $S \in \{1,...,n\} \backslash \{j,k\}$, where $\blame^1 = \blamefunc(M^1, \bpi^1)$ and $\blame^2 = \blamefunc(M^2, \bpi^2)$.
\end{itemize}

Before describing the main results of this subsection, we briefly outline the intuition behind the above definitions. $\prop_{AE}$ (average efficiency) is similar to $\prop_E$ (efficiency), however it requires less total blame to be distributed. Whereas $\prop_E$ requires that the total blame is equal to the total inefficiency $\inef$, $\prop_{AE}$ requires that the total blame is equal to the average marginal inefficiency of subsets of agents, i.e., the average value of $\inef_S$.\footnote{This average does not include $\inef_{\emptyset}$, which is equal to $0$. Note also that $\inef \ge \inef_S$ for every $S \subseteq \{1, ..., n\}$, so this average is upper bounded by $\inef$.} Compared to $\prop_{PerM}$ (performance monotonicity), $\prop_{cPerM}$ (c-performance monotonicity) additionally accounts for the pivotality of agents through contribution function $c$, treating each set of pivotal agents as a separate case. $\prop_{cParM}$ (c-participation monotonicity) accounts for agents' pivotality in a similar manner. Moreover, $\prop_{cParM}$ resembles contribution monotonicity $\prop_{CM}$, but instead of requiring blame monotonicity to hold w.r.t. the agent's influence on the marginal inefficiency of subsets $S$ (i.e., $\inef_{S \cup \{i \}} - \inef_S$), it considers blame monotonicity w.r.t. the marginal inefficiency of subsets that contain the agent (i.e., $\inef_{S \cup \{j \}}$).
Relative c-participation monotonicity $\prop_{RcParM}$ is similar to $\prop_{cParM}$, but its blame monotonicity requirement is based on a pairwise comparison of agents with the same pivotality degree. In particular, $\prop_{RcParM}$ requires that the blame increase is higher for an agent who is in subsets with a greater marginal inefficiency increase (i.e., $\blamej^1 - \blamej^2 \ge \blamek^1 - \blamek^2$ whenever $\inef^1_{S \cup \{j \}} - \inef^2_{S \cup \{j \}} \ge \inef^1_{S \cup \{k \}} - \inef^2_{S \cup \{k \}}$).

{\bf Average participation}: Now, we describe the new blame assignment method, which we call {\em average participation}. This blame assignment method can be defined as $\blame = \blamefunc_{AP}(M, \bpi)$ such that
\begin{align}\label{eq.def_avg_part}
     \blame_i = \sum_{S \subseteq \{1, ..., n\} \backslash \{i\}} \weight \cdot \frac{c(M, \bpi, i)}{\sum_{j \in S} c(M, \bpi, j) + 1} \cdot \inef_{S \cup \{ i \}}, 
\end{align}
where coefficient $w$ is set to $\weight = \frac{1}{2^n - 1}$. 
Intuitively, $\blamefunc_{AP}$ equally distributes blame for the marginal inefficiency of a subset of agents among the pivotal agents in that subset.
Hence, an agent $i$ that is pivotal receives blame for each subset $S \cup \{i\}$ equal to $\inef_{S \cup \{i\}}$ divided by the number of pivotal agents in $S \cup \{i\}$ and scaled by coefficient $\weight$. 
Agents that are not pivotal, obtain $0$ blame.
Average participation uniquely satisfies the following properties.
\begin{theorem}\label{thm.average_participation.unique}
$\blamefunc_{AP}(M, \bpi) = (\beta_1, ..., \beta_n)$, where $\beta_i$ is defined by Eq. \eqref{eq.def_avg_part} and $\weight = \frac{1}{2^n - 1}$,  is a unique blame attribution method that satisfies $\prop_{AE}$ (average-efficiency), $\prop_{S}$ (symmetry), $\prop_{I}$ (invariance), $\prop_{cParM}$ (c-participation monotonicity) and $\prop_{RcParM}$ (relative c-participation monotonicity). Furthermore, $\blamefunc_{AP}$ satisfies $\prop_{cPerM}$ (c-performance monotonicity) and $\prop_{V}$ (validity).
\end{theorem}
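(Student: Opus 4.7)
I would split the theorem into an existence part (each of the seven properties is verified directly for $\blamefunc_{AP}$) and a uniqueness part (the five axioms $\prop_{AE}$, $\prop_S$, $\prop_I$, $\prop_{cParM}$, $\prop_{RcParM}$ pin down $\blamefunc_{AP}$ among all blame attribution methods).

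\textbf{Existence.} The key calculation for $\prop_{AE}$ is to swap the order of summation in $\sum_i \blame_i$, reindexing by $T = S \cup \{i\}$. For each nonempty $T$ the coefficient of $\inef_T$ collapses to $w \cdot \sum_{i\in T}\frac{c(M,\bpi,i)}{\sum_{j\in T, j\ne i} c(M,\bpi,j)+1}$, which equals $w$ whenever $T$ contains at least one pivotal agent, since the pivotal agents in $T$ then share a total mass of $1$. If $T$ has no pivotal agent, a short induction on $|T|$ using the definition of $c$ shows that $\inef_T = 0$ (any non-pivotal $i$ can be removed from $T$ without changing the marginal inefficiency), so the corresponding term vanishes. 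Hence $\sum_i \blame_i = w\sum_T \inef_T = \frac{1}{2^n-1}\sum_S \inef_S$, and $\prop_V$ follows from the bound $\inef_S \le \inef$. The checks for $\prop_S$ and $\prop_I$ first reduce their hypotheses (stated in terms of $\inef_S$) to the analogous identities on $c$; once this is done, the formula \eqref{eq.def_avg_part} matches term-by-term on the two agents. For $\prop_{cParM}$ and $\prop_{RcParM}$, matching pivotality freezes every coefficient in \eqref{eq.def_avg_part}, so the desired blame inequalities reduce to the hypothesized subset-wise inequalities on $\inef$. Finally, $\prop_{cPerM}$ is relatively clean: the sum defining $\blame_i$ runs over $S \not\ni i$, so every $T = S\cup\{i\}$ appearing in the sum contains $i$, whence $\optpirb_T$ is independent of $\pi_i$ and only the baseline $\return(\bpi)$ shifts; the hypothesis $\return(\pi_i,\bpi_{-i}) \le \return(\pi_i',\bpi_{-i})$ then gives $\inef_T^{\pi_i'} \le \inef_T^{\pi_i}$, and pivotality preservation keeps the coefficients equal.

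\textbf{Uniqueness and main obstacle.} For uniqueness I would adapt Young's monotonicity technique (used to prove Theorem~\ref{thm.shapley_value.unique}) to our contribution-based axioms. Fix a pivotality pattern $c:\{1,\dots,n\}\to\{0,1\}$; games of a given pattern are parametrized by the vector $(\inef_S)_S$, subject to the constraints that force exactly this pattern of pivotality. I would induct on the number of subsets with $\inef_S \ne 0$. In the base case, $\prop_{AE}$ fixes the total blame, $\prop_I$ zeros out non-pivotal agents, and $\prop_S$ equalizes the shares among pivotal ones. In the inductive step, I would compare the target game to a ``smaller'' game in the same pivotality class with one fewer nonzero $\inef_S$: $\prop_{cParM}$ applied in both directions yields two-sided monotone bounds on the blames of pivotal agents, $\prop_{RcParM}$ enforces the correct proportional splits between symmetric pivotal agents, and $\prop_{AE}$ closes the system by fixing the total. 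The main obstacle is precisely the construction of such reduced games: because pivotality is a nonlinear, global functional of the whole family $(\inef_S)_S$, naively zeroing out a single $\inef_S$ may flip some agent from pivotal to non-pivotal (or the converse), exiting the pivotality class. This is exactly why the axioms carry the pivotality-preservation clause $c(M^1,\bpi^1,i) = c(M^2,\bpi^2,i)$ rather than the unconditional clauses in $\prop_{CM}$/$\prop_{PerM}$, and the crux of the argument is to exhibit admissible reductions that always stay inside the prescribed pivotality class.
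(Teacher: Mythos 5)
Your existence half is essentially the paper's Part~1 and is correct: the reindexing $T=S\cup\{i\}$ for $\prop_{AE}$ (together with the observation that $\inef_T=0$ whenever $T$ contains no pivotal agent, which the paper uses implicitly in its last equality), the term-by-term matching for $\prop_S$ and $\prop_I$ after first transferring the hypotheses to the contribution function $c$, the frozen coefficients under matching pivotality for $\prop_{cParM}$ and $\prop_{RcParM}$, the fact that $\return(\optpirb_{S\cup\{i\}},\bpi_{-S\cup\{i\}})$ does not depend on $\pi_i$ for $\prop_{cPerM}$, and $\prop_V$ from $\prop_{AE}$ and $\inef_S\le\inef$ all match the paper.

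The uniqueness half has a genuine gap, one that you explicitly flag as ``the crux of the argument'' but do not close: you never exhibit the admissible reductions that stay inside a fixed pivotality class. The paper's resolution is an $\epsilon$-perturbation rather than a reduction to fewer nonzero entries. It defines $(M^\epsilon,\bpi^\epsilon)$ with $\inef^\epsilon_S=\inef_S+\epsilon$ for every $S$ meeting the pivotal set $C_1$ and $\inef^\epsilon_S=\inef_S$ otherwise, shows (Lemma~\ref{lem.ap_unq_ifeps}) that agreement with $\blamefunc_{AP}$ on the perturbed game forces agreement on the original via $\prop_{RcParM}$ and $\prop_{AE}$, and then proves agreement on the perturbed game (Lemma~\ref{lem.ap_unq_eps}) by an induction over the nonempty subsets $S_\iota\subseteq C_1$ ordered by \emph{decreasing} cardinality, turning on the true values $\inef_S$ one $C_1$-trace at a time starting from a fully symmetric game where every such trace has inefficiency exactly $\epsilon$. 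The additive $\epsilon$ guarantees $\inef^\iota_{\{i\}}\ge\epsilon>0$ for each pivotal $i$ at every stage, so pivotality can never be lost, and a direct check shows non-pivotal agents stay non-pivotal; this is precisely the invariant your sketch identifies as needed but does not construct. Moreover, your plan to ``zero out a single $\inef_S$'' faces a second obstruction you do not mention: a profile $(\inef_S)_S$ is realizable by an actual MMDP and behavior policy if and only if it vanishes on $\emptyset$ and is monotone under set inclusion (the paper's Lemma~\ref{lem.inef_existence}, proved with an explicit one-step MMDP), so arbitrarily annihilating one entry while keeping a nonzero superset generally exits the class of realizable games altogether, independently of pivotality. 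Without these two constructions the inductive step of your uniqueness argument cannot be carried out.
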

Unlike marginal contribution, average participation is valid (never over-blames agents), however it satisfies a weaker version of performance monotonicity. Still, this version is not satisfied by Shapley value. On the other hand, Shapley value is efficient, unlike average participation, which satisfies a weaker requirement---average efficiency. We also showcase these trade-offs in Section \ref{sec.experiments}.

%%%%%%%%%%%%%%%%%%%%%%%%%%%%%%%%%%%%% UNCERTAINTY IN EXECUTION BEHAVIOR
% !TEX root =  main.tex
%%%%%%%%%%%%%%%%%%%%%%%%%%%%%%%%%%%%%
%%%%%%%%%%%%%%%%%%%%%%%%%%%%%%%%%%%%%
\section{Blame Attribution under Uncertainty}\label{sec.blame_under_uncertainty}

In this section, we study blame attribution methods that do not have direct access to $\bpi$. As mentioned in Section \ref{sec.setting}, we focus on the case where the knowledge about $\bpi$ is defined by the uncertainty set $\mathcal P(\bpi)$, and it is defined state-wise so that each state is associated with a set of probability measures on $\mathcal{A}$ identifying plausible candidates for $\bpi(\cdot|s)$.\footnote{Such definition implies a rectangularity of the uncertainty set \citep{iyengar2005robust,tamar2014scaling}.} We denote $\pi \in \mathcal P(\bpi)$ if $\pi$ is plausible by $\mathcal P(\bpi)$. 

\subsection{Shapley Value under Uncertainty}\label{sec.sv_under_uncertainty}

In explaining approaches to handling uncertainty, we focus on Shapley value. Arguably, the simplest way to operate under uncertainty is to derive a point estimate of $\bpi$, denoted by $\bpie$,\footnote{For example, this estimate can be derived from data containing the agents' trajectories.} and apply $\blamefunc_{SV}$ on this estimate to obtain blame assignment $\widehat \blame = \blamefunc_{SV}(M, \bpie)$. Albeit being simple, this approach does not satisfy desirable properties, most notably, $\prop_V$ (validity) and $\prop_{BC}$ (Blackstone consistency). 

\textbf{Validity.}
Now, note that $\widehat \blame = \blamefunc_{SV}(M, \bpie)$ satisfies $\sum_{i=1}^n \widehat \blamei = \return(\optpi) - \return(\bpie)$. Therefore, instead of relying on a point estimate $\bpie$, we could utilize a policy $\bpie$ for which $\return(\optpi) - \return(\bpie) \le \inef$. Namely, in that case $\widehat \blame = \blamefunc_{SV}(M, \bpie)$ results in a blame assignment that satisfies $\prop_V$ (validity). 
Since this inequality holds for a solution to the optimization problem $\max_{\pi \in \mathcal P(\bpi)} \return(\pi)$, we obtain:
\begin{proposition}\label{prop.uncertainty_validity}
Let $\bpie$ be a solution to the optimization problem $\max_{\pi \in \mathcal P(\bpi)} \return (\pi)$. Then
$\widehat{\blamefunc}_{SV, V}(M, \mathcal P(\bpi)) = \blamefunc_{SV}(M, \bpie)$ satisfies $\prop_V$ (validity).
\end{proposition}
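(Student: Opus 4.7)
The plan is to reduce the claim to an immediate consequence of the efficiency of Shapley value (Theorem~\ref{thm.shapley_value.unique}) together with the optimality of $\bpie$ inside $\mathcal{P}(\bpi)$. The only thing to verify is that the total blame distributed by $\blamefunc_{SV}(M, \bpie)$ is no larger than the true inefficiency $\inef = \return(\optpi) - \return(\bpi)$.

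First I would unpack what $\widehat{\blamefunc}_{SV,V}$ computes: by definition it applies the exact (no-uncertainty) Shapley rule to the surrogate behavior policy $\bpie$. Since $\blamefunc_{SV}$ is known to satisfy $\prop_E$ (efficiency) — this is part of the statement of Theorem~\ref{thm.shapley_value.unique} — we have the identity
\begin{align*}
\sum_{i=1}^n \widehat{\blamei} \;=\; \sum_{i=1}^n \bigl(\blamefunc_{SV}(M,\bpie)\bigr)_i \;=\; \return(\optpi) - \return(\bpie),
\end{align*}
because the inefficiency that $\blamefunc_{SV}$ exactly distributes when applied to behavior policy $\bpie$ is precisely $\return(\optpi) - \return(\bpie)$.

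Next I would use the two key facts about $\bpie$: (i) by the consistency assumption stated in Section~\ref{sec.setting}, $\bpi \in \mathcal{P}(\bpi)$, i.e., the true behavior policy is itself a feasible candidate; and (ii) $\bpie$ is by definition a maximizer of $\return(\cdot)$ over $\mathcal{P}(\bpi)$. Combining these gives $\return(\bpie) \ge \return(\bpi)$, and hence
\begin{align*}
\sum_{i=1}^n \widehat{\blamei} \;=\; \return(\optpi) - \return(\bpie) \;\le\; \return(\optpi) - \return(\bpi) \;=\; \inef,
\end{align*}
which is exactly the validity requirement $\prop_V$ for $\widehat{\blamefunc}_{SV,V}$, and the statement holds for every $\mathcal{P}(\bpi)$ as required by the uncertainty-extended definition.

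There is essentially no hard step here; the only subtlety I would explicitly flag is that the argument uses $\prop_E$ of $\blamefunc_{SV}$ applied at the policy $\bpie$ (not at $\bpi$), and that the non-negativity of each $\widehat{\blamei}$ — implicit in the codomain $\mathbb{R}_{\ge 0}^n$ of a blame attribution method — follows because each marginal term $\return(\optpirb_{S\cup\{i\}},\bpie_{-S\cup\{i\}}) - \return(\optpirb_S,\bpie_{-S})$ in the Shapley sum is non-negative, as enlarging the set of agents that switch to their best response can only weakly increase the return. With both observations in place, the proof is a two-line chain of (in)equalities.
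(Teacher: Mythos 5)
Your proof is correct and follows essentially the same route as the paper's: invoke $\prop_E$ of $\blamefunc_{SV}$ at $\bpie$ to identify the total blame with $\return(\optpi)-\return(\bpie)$, then use consistency of $\mathcal P(\bpi)$ (so $\bpi\in\mathcal P(\bpi)$) and optimality of $\bpie$ to get $\return(\bpie)\ge\return(\bpi)$ and hence the bound by $\inef$. The extra remark on non-negativity of the individual Shapley terms is a harmless addition not needed for $\prop_V$.
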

\textbf{Blackstone consistency.} As we show in Section \ref{sec.experiments}, although $\widehat{\blamefunc}_{SV, V}$ is valid, it might not be Blackstone consistent w.r.t. $\blamefunc_{SV}$. In particular, although the total blame is never overestimated, an agent $i$ might receive higher blame than it would receive under $\blamefunc_{SV}$. To ensure Blackstone consistency, we can assign blame to agent $i$ equal to 
$\min_{\pi \in \mathcal P(\bpi)} \blamei^\pi$ s.t. $\blame^\pi = \blamefunc_{SV}(M, \pi)$. 
Together with Eq. \eqref{eq.def_sv}, this implies that agent $i$'s blame is obtained by solving
\begin{align}
	\label{prob.robust_sv}
	\tag{P2}
	&\quad \min_{\pi \in \mathcal P(\bpi)} \sum_{S \subseteq \{1, ..., n\} \backslash \{i\}} \weight_S \cdot \left [ \return(\optpir_{S \cup \{ i \}}, \pi_{-S\cup\{i\}}) - \return(\optpir_{S}, \pi_{-S}) \right ],
\end{align}
where $\weight_S = \frac{|S|! ( n - |S| - 1)!}{n!}$ and $\optpir_S \in \argmax_{\pi'_S} \return(\pi'_S,\pi_{-S})$. We have the following result:
\begin{proposition}\label{prop.uncertainty_consistency}
Let $\blame^i_i$ be the minimum value of the objective in \eqref{prob.robust_sv}.
Then $\widehat{\blamefunc}_{SV, BC}(M, \mathcal P(\bpi)) = (\blame^1_1, ..., \blame^n_n)$ satisfies $\prop_V$ (validity) and $\prop_{BC}(\blamefunc_{SV})$ (Blackstone consistency  $w.r.t.$ $\blamefunc_{SV}(M, \bpi)$).
\end{proposition}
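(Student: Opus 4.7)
The plan is to prove the two desired properties in order of increasing difficulty, exploiting the fact that each coordinate $\blame^i_i$ is obtained as a minimum over the uncertainty set.

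\textbf{Blackstone consistency.} First I would establish $\prop_{BC}(\blamefunc_{SV})$, which is essentially immediate from the construction. By assumption, $\bpi$ is consistent with $\mathcal{P}(\bpi)$, i.e., $\bpi \in \mathcal{P}(\bpi)$. Hence $\bpi$ is a feasible point of the minimization in \eqref{prob.robust_sv} for each agent $i$, which implies
\[
\blame^i_i \;=\; \min_{\pi \in \mathcal{P}(\bpi)} \blamefunc_{SV}(M, \pi)_i \;\le\; \blamefunc_{SV}(M, \bpi)_i \;=\; \blame_i,
\]
where the middle equality uses that the objective in \eqref{prob.robust_sv} is exactly the $i$-th component of $\blamefunc_{SV}(M, \pi)$ per Eq.~\eqref{eq.def_sv}. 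This gives $\prop_{BC}(\blamefunc_{SV})$ coordinate-wise.

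\textbf{Validity.} The more delicate point is that the different coordinates $\blame^i_i$ may be attained at different minimizers $\pi^{*,i} \in \mathcal{P}(\bpi)$, so summing the minima does not obviously telescope. The idea is to upper-bound the sum by evaluating all coordinates at a single, cleverly chosen $\pi \in \mathcal{P}(\bpi)$ and then invoke the efficiency of Shapley value (Theorem~\ref{thm.shapley_value.unique}). Concretely, let $\bpie \in \argmax_{\pi \in \mathcal{P}(\bpi)} \return(\pi)$, as in Proposition~\ref{prop.uncertainty_validity}. Since each $\blame^i_i$ is a minimum over $\mathcal{P}(\bpi)$, and $\bpie$ is feasible,
\[
\sum_{i=1}^{n} \blame^i_i \;\le\; \sum_{i=1}^{n} \blamefunc_{SV}(M, \bpie)_i \;=\; \return(\optpi) - \return(\bpie),
\]
where the equality uses $\prop_E$ for $\blamefunc_{SV}$. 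Because $\bpi \in \mathcal{P}(\bpi)$, we have $\return(\bpie) \ge \return(\bpi)$, and therefore $\return(\optpi) - \return(\bpie) \le \return(\optpi) - \return(\bpi) = \inef$. Chaining these inequalities yields $\sum_{i=1}^n \blame^i_i \le \inef$, which is exactly $\prop_V$.

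\textbf{Where the argument is subtle.} The only nontrivial step is the validity bound: naively one might worry that minimizing each $\blame^i_i$ independently could ``misalign'' the minimizers and collectively overshoot $\inef$. The resolution is that any min is dominated by the value at a fixed feasible point, so picking the same $\bpie$ for every coordinate allows us to replace the collection of mismatched minimizers with a single policy on which Shapley efficiency applies. No further structural properties of $\mathcal{P}(\bpi)$ (such as rectangularity) are needed for either conclusion.
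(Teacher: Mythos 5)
Your proposal is correct, and the Blackstone consistency part is identical to the paper's argument: $\bpi \in \mathcal{P}(\bpi)$ makes $\bpi$ feasible for the minimization, so $\blame^i_i \le \blame_i$ coordinate-wise. For validity, you take a slightly different (but equally valid) route. The paper simply sums the coordinate-wise inequalities $\blame^i_i \le \blame_i$ it has already established and invokes $\prop_E$ of $\blamefunc_{SV}$ at $\bpi$ itself, getting $\sum_i \blame^i_i \le \sum_i \blame_i = \inef$ in one line; the ``common feasible point'' that dominates all the mismatched minimizers is just $\bpi$. You instead choose $\bpie \in \argmax_{\pi \in \mathcal{P}(\bpi)} \return(\pi)$ as the common point and then re-run the argument of Proposition~\ref{prop.uncertainty_validity} ($\return(\optpi) - \return(\bpie) \le \inef$). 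Both are sound; your version is marginally more roundabout, since the extra optimization over $\mathcal{P}(\bpi)$ buys nothing here --- the bound you ultimately prove is the same $\inef$, and the observation you flag as the ``subtle'' step (a min is dominated by the value at any fixed feasible point) already resolves the misalignment worry with the choice $\pi = \bpi$. Your closing remark that no rectangularity of $\mathcal{P}(\bpi)$ is needed is accurate and consistent with the paper.
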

Note that $\widehat{\blamefunc}_{SV, BC}$ distributes less total blame than $\widehat{\blamefunc}_{SV, V}$, since it takes the worst case perspective for each agent separately, while under $\widehat{\blamefunc}_{SV, V}$ the blame assigned to all agents is computed with the same joint behavior policy. Moreover, the objective function in \eqref{prob.robust_sv} is more complex than in classical robust MDP settings~\cite{iyengar2005robust,nilim2005robust}, making classical approaches for robust MDPs hard to apply. In practice, we can relax \eqref{prob.robust_sv} and optimize a lower bound of the objective; this preserves $\prop_{BC}(\blamefunc_{SV})$, but at the expense of distributing less blame to the agents. In our experiments, we solve $\min_{\pi \in \mathcal P'(\bpi)} \return(\optpir_{S \cup \{ i \}}, \pi_{-S\cup\{i\}})$ and $\max_{\pi \in \mathcal P'(\bpi)}\return(\optpir_{S}, \pi_{-S})$ for each subset $S$ and with appropriately chosen $\mathcal P'(\bpi) \supseteq \mathcal P(\bpi)$ (see 
\iftoggle{longversion}{Appendix \ref{sec.app_uncertainty}}{the supplementary material}),
and we apply Eq. \eqref{eq.def_sv} to obtain the blame assignment. 
This implies that agent $i$'s blame is obtained by solving
\begin{align*}
	&\quad \sum_{S \subseteq \{1, ..., n\} \backslash \{i\}} \weight_S \cdot \left [ \min_{\pi \in \mathcal P'(\bpi)} \return(\optpir_{S \cup \{ i \}}, \pi_{-S\cup\{i\}}) - \max_{\pi \in \mathcal P'(\bpi)}\return(\optpir_{S}, \pi_{-S}) \right ].
\end{align*}

\textbf{Other Blame Attribution Methods.} Similar approaches also work for other blame assignment methods discussed in Section \ref{sec.approaches_to_blame}. For example, and focusing on Blackstone consistency, $\widehat{\blamefunc}_{BI, BC}(M, \mathcal P(\bpi))$ can be obtained in the same way as $\widehat{\blamefunc}_{SV, BC}(M, \mathcal P(\bpi))$, but with $\weight_S = \frac{1}{2^{n-1}}$, while $\widehat{\blamefunc}_{MC, BC}(M, \mathcal P(\bpi))$ can be implemented as $\widehat{\blamefunc}_{MC, BC}(M, \mathcal P(\bpi)) = (\tilde \inef_1, ..., \tilde \inef_n)$ where 
$\tilde \inef_i = \min_{\pi \in \mathcal P'(\bpi)} \return(\optpir_i,\pi_{-i}) - \max_{\pi \in \mathcal P'(\bpi)} \return(\pi)$.
Implementing Blackstone consistent $\widehat{\blamefunc}_{MER, BC}(M, \mathcal P(\bpi))$ and $\widehat{\blamefunc}_{AP, BC}(M, \mathcal P(\bpi))$ is more nuanced, and we discuss it in
\iftoggle{longversion}{Appendix \ref{sec.app_uncertainty}}{the supplementary material}.

\subsection{Characterization Result}\label{sec.approx_satisfability}

Notice that the described Blackstone consistent methods $\widehat \blamefunc(M, \mathcal P(\bpi))$ are not guaranteed to satisfy the properties that their counterparts $\blamefunc(M, \bpi)$ satisfy. However, as long as $\widehat \blamefunc(M, \mathcal P(\bpi))$ and $\blamefunc(M, \bpi)$ output similar enough blame assignments, properties that hold under $\blamefunc(M, \bpi)$ will approximately hold under $\widehat \blamefunc(M, \mathcal P(\bpi))$. 
More formally, we have the following results. 
\begin{theorem}\label{thm.approx_satisfability}
Consider $\widehat \blamefunc$ and $\blamefunc$ s.t. $\norm{\widehat \blamefunc(M, \mathcal P(\bpi)) -\blamefunc(M, \bpi)}_{1} \le \epsilon$ for any $M$, $\bpi$, and  $\mathcal P(\bpi)$. Then if $\blamefunc$ satisfies a property $\prop \in \{ \prop_V, \prop_E, \prop_R, \prop_S, \prop_I,  \prop_{AE}\}$,  $\widehat \blamefunc$ satisfies $\epsilon$-$\prop$. Moreover, if $\blamefunc$ satisfies a property $\prop \in \{ \prop_{CM} ,\prop_{PerM}, \prop_{cPerM}, \prop_{cParM}, \prop_{RcParM}\}$,  $\widehat \blamefunc$ satisfies $2\epsilon$-$\prop$.
\end{theorem}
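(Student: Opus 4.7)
The plan is to reduce every claim to a direct triangle-inequality calculation. The key observation is that the hypothesis $\norm{\widehat\blamefunc(M,\mathcal{P}(\bpi)) - \blamefunc(M,\bpi)}_1 \le \epsilon$ implies both the pointwise bound $|\widehat\blamei - \blamei| \le \epsilon$ for each $i$ and the subset bound $\sum_{i \in S} |\widehat\blamei - \blamei| \le \epsilon$ for every $S \subseteq \{1,\dots,n\}$, which matches exactly the kind of slack that appears in the $\epsilon$-versions of the properties. The proof splits into two groups depending on whether a property is a statement about a single blame vector (which yields $\epsilon$-slack) or a comparison between two blame vectors (which yields $2\epsilon$-slack).

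For the first group $\{\prop_V, \prop_E, \prop_R, \prop_S, \prop_I, \prop_{AE}\}$, I would fix arbitrary $M$, $\bpi$, $\mathcal{P}(\bpi)$ and set $\blame = \blamefunc(M,\bpi)$, $\widehat\blame = \widehat\blamefunc(M,\mathcal{P}(\bpi))$. For $\prop_V$, write $\sum_i \widehat\blamei \le \sum_i \blamei + \sum_i |\widehat\blamei - \blamei| \le \inef + \epsilon$; the argument for $\prop_R$ is identical but restricted to $i \in S$, and for $\prop_E$ and $\prop_{AE}$ the same bound applied to both sides of the equality yields the $\epsilon$-relaxation. For $\prop_S$ and $\prop_I$, I would use the pointwise bound $|\widehat\blamei - \blamei| \le \epsilon$ at the relevant coordinates; e.g.\ $|\widehat\blamei - \widehat\blamej| \le |\blamei - \blamej| + |\widehat\blamei - \blamei| + |\widehat\blamej - \blamej|$, where the last two terms together are bounded by $\norm{\widehat\blame - \blame}_1 \le \epsilon$.

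For the second group $\{\prop_{CM}, \prop_{PerM}, \prop_{cPerM}, \prop_{cParM}, \prop_{RcParM}\}$, I would consider two configurations $(M^1, \bpi^1)$ and $(M^2, \bpi^2)$ (possibly sharing an MMDP, as for the performance-monotonicity properties), set $\blame^\ell = \blamefunc(M^\ell, \bpi^\ell)$ and $\widehat\blame^\ell = \widehat\blamefunc(M^\ell, \mathcal{P}(\bpi^\ell))$, and apply the hypothesis to each configuration separately. For $\prop_{CM}$, $\prop_{PerM}$, $\prop_{cPerM}$, $\prop_{cParM}$, invoking the exact property $\blamei^1 \ge \blamei^2$ together with $|\widehat\blamei^1 - \blamei^1| \le \epsilon$ and $|\widehat\blamei^2 - \blamei^2| \le \epsilon$ gives $\widehat\blamei^1 \ge \widehat\blamei^2 - 2\epsilon$. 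For the four-term inequality in $\prop_{RcParM}$, the same idea applies: the exact inequality $\blamej^1 - \blamej^2 \ge \blamek^1 - \blamek^2$ combined with $|\widehat\blamej^1 - \blamej^1| + |\widehat\blamek^1 - \blamek^1| \le \norm{\widehat\blame^1 - \blame^1}_1 \le \epsilon$ and the analogous bound for configuration $2$ absorbs at most $2\epsilon$ of slack overall.

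The main obstacle is not a conceptual one but rather bookkeeping: one must check, for each of the eleven properties, that the triangle inequality soaks up exactly the claimed amount of slack and never more. The crucial observation that makes everything line up is that for any single property, the error terms that appear are either a single $|\widehat\blamei - \blamei|$ or the $L_1$-norm restricted to some subset, both of which are bounded by $\epsilon$. I would write out one representative calculation for each group in detail and note that the remaining cases follow by an essentially identical triangle-inequality argument, with the factor of two arising precisely when the property compares blame vectors at two distinct $(M, \bpi)$ inputs.
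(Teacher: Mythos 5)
Your proposal is correct and follows essentially the same route as the paper's proof: a case-by-case triangle-inequality argument in which the single-vector properties ($\prop_V$, $\prop_E$, $\prop_R$, $\prop_S$, $\prop_I$, $\prop_{AE}$) absorb at most $\epsilon$ from one application of the $L_1$ bound, while the comparison properties ($\prop_{CM}$, $\prop_{PerM}$, $\prop_{cPerM}$, $\prop_{cParM}$, $\prop_{RcParM}$) absorb $2\epsilon$ from applying the bound once at each of the two $(M,\bpi)$ configurations. Your handling of $\prop_S$ and $\prop_{RcParM}$ via the restricted sums $|\widehat\blamei - \blamei| + |\widehat\blamej - \blamej| \le \epsilon$ matches the paper's inequalities exactly.
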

\vspace{-0.1cm}
This theorem allows us to quantify the robustness of the blame attribution methods---the closer $\widehat \blamefunc$ is to $\blamefunc$, the more robust it is to uncertainty. 
Interestingly, a trivial blame attribution method that assigns $0$ blame to all the agents is robust in this sense. However, as we already mentioned, this trivial blame assignment is not informative as it does not attribute any blame. 
In fact, if agents receive no penalties for bad behavior, such a blame attribution method might have adverse effects. 
We provide a broader discussion on the negative side-effects of under-blaming in 
\iftoggle{longversion}{Appendix \ref{sec.disc}}{the supplementary material}.
Importantly, this example suggests that efficiency (in a broad sense, i.e., how much blame is being distributed) and robustness are sometimes at odds, which we also demonstrate in the experiments.

%%%%%%%%%%%%%%%%%%%%%%%%%%%%%%%%%%%%% EXPERIMENTS
% !TEX root =  main.tex
%%%%%%%%%%%%%%%%%%%%%%%%%%%%%%%%%%%%%
%%%%%%%%%%%%%%%%%%%%%%%%%%%%%%%%%%%%%

\section{Experiments}\label{sec.experiments}

To demonstrate the efficacy of the studied blame attribution methods, we consider two environments, {\em Gridworld} and {\em Graph}, depicted in Fig. \ref{fig: gridworld} and Fig. \ref{fig: graph}. Both environments are adapted from \cite{voloshin2019empirical} and modified to be multi-agent.
The experiments evaluate blame attribution methods along three axis:
\begin{itemize}
    \item {\em Performance monotonicity:} First, we test blame attribution methods for the $\prop_{PerM}$ (performance monotonicity) property, which we deem important for accountability. To do that, we consider the 
    gridworld environment: this is a two-agent environment in which one of the agents, \agenttwo, optimizes its policy using a model of the other agent, \agentone. Importantly, by controlling the correctness of \agenttwo's model of \agentone, we can validate whether a blame attribution method satisfies $\prop_{PerM}$. Namely, if \agenttwo~does not receive the minimum blame when its model of \agentone~is the correct model, the corresponding method is not performance incentivizing, i.e., it does not satisfy $\prop_{PerM}$.
    \item {\em Coordination:} Second, we evaluate the efficacy of  blame attribution methods when a higher degree of coordination among agents is needed to yield improvements over the baseline behavior. 
    For this, we consider the graph environment, which includes configurations where an agent cannot improve the joint performance by unilaterally changing its policy. 
    Thus, this environment is suitable for evaluating whether blame attribution methods incorporate more nuanced counterfactual reasoning.
    \item {\em Robustness:} Finally, we evaluate the robustness of blame attribution methods under uncertainty. In this case, both environments (Gridworld and Graph) are used for testing purposes, and we control for the level of uncertainty over the agents' behavior policies. 
\end{itemize}
\iftoggle{longversion}{Appendix \ref{sec.appendix_exp}}{The supplementary material}
provides more details on the experimental setup and implementation. Below we provide a more detailed description of the considered environments and discuss our findings.

\textbf{Environment 1}: This is a gridworld environment, in which two agents control the same actor but 
\begin{minipage}{0.79\textwidth}
with different priorities. In the single-agent version of the environment, an agent, agent \agentone, controls the movement of the actor. 
In our multi-agent version, there is an additional agent, agent \agenttwo, who can intervene and override \agentone's actions. The two agents select their actions simultaneously.
Cells denoted with $S$ are the initial states, blank cells indicate areas of small negative reward, $F$ cells indicate areas of slightly increased cost and $H$ cells are areas of severe penalty. The cell denoted by $G$ is the terminal state of the environment and has a positive reward.
When agent \agenttwo~intervenes in some state, the actor takes the action that an optimal policy would select in the single-agent mode, but also pays a cost of
\end{minipage}
\hfill
\begin{minipage}{0.19\textwidth}
\captionsetup{type=figure}
\captionsetup{aboveskip=2pt,belowskip=2pt}
\includegraphics[width=\textwidth]{./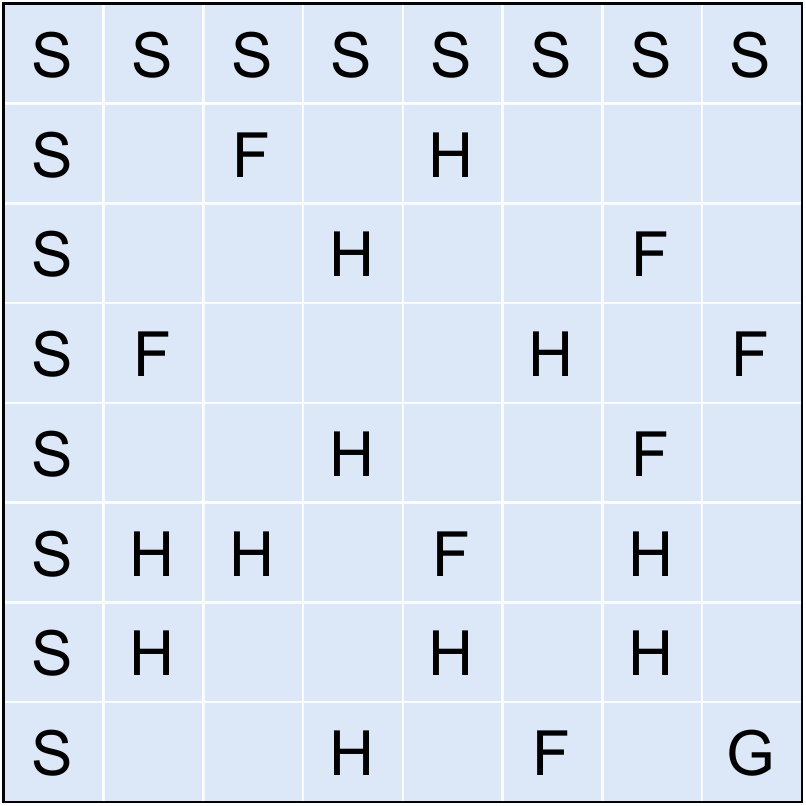}
\captionof{figure}{Gridworld}
\label{fig: gridworld}
\end{minipage}
intervention $C$. The behavior policy $\bpi_1$ of agent \agentone~is parameterized by variable $\alpha$, which specifies the probability that \agentone~takes an action determined by an optimal single-agent policy, instead of its personal policy.
The personal policy of \agentone~is a mixture of an optimal single-agent policy for correctly specified costs and a single-agent policy that is optimal but for misspecified costs of $F$ and $H$ cells---it assumes that they have the same cost as the blank cells.
\agenttwo's behavior policy $\bpi_2$~optimizes the expected discounted return and is trained with a model of \agentone~specified by the true personal policy of \agentone~and variable $\alpha'$ (not necessarily equal to $\alpha$). \agenttwo~is meant to rectify potential mistakes of \agentone~that could inflict cost greater than $C$.
In $\prop_{PerM}$ experiments we set $\alpha=0.4$. In robustness experiments, we only consider uncertainty over the personal policy of \agentone, and we set $\alpha=0.2$ and $\alpha'=0.5$.

{\bf Performance monotonicity}:
Fig. \ref{fig: per-monotonicity} validates our theoretical results regarding $\prop_{PerM}$ (performance monotonicity). More specifically, methods $\blamefunc_{AP}$ and $\blamefunc_{MC}$ assign the minimum blame to \agenttwo~when it acts optimally w.r.t. the true policy of \agentone~, i.e., when $\alpha'=\alpha$.
However, this is not the case for methods $\blamefunc_{SV}$ and $\blamefunc_{BI}$, which implies that these methods are not incentivizing \agenttwo~to act optimally w.r.t. its belief about \agentone. 
$\blamefunc_{MER}$ and $\blamefunc_{BI}$ assign the same blame to \agenttwo~as $\blamefunc_{MC}$ and $\blamefunc_{SV}$, respectively.

\begin{minipage}{0.79\textwidth}
\textbf{Environment 2}: This is a graph environment in which 4 agents simultaneously select actions. The graph consists of one starting and one terminal node, as well as 8 intermediate nodes that can be grouped according to their index number; nodes with even index number are located on the upper level of the graph and nodes with odd index number on the lower level. At each time-step every agent
\end{minipage}
\hfill
\begin{minipage}{0.19\textwidth}
\captionsetup{type=figure}
\captionsetup{aboveskip=2pt,belowskip=2pt}
\includegraphics[width=\textwidth]{./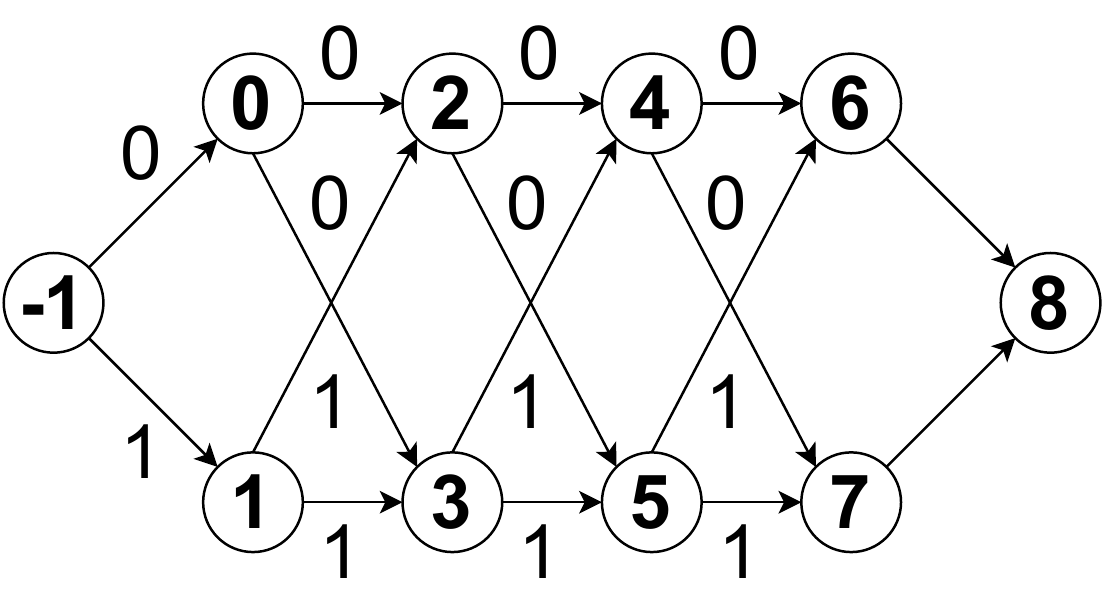}
\captionof{figure}{Graph}
\label{fig: graph}
\end{minipage}
chooses to take either action $0$ and move to the upper level or action $1$ and move to the lower level. We test multiple variants of this environment, each of which defines a different reward function. In all variants, the reward at each time-step is $+1$ if some formation constraint is satisfied and $-1$ if not.
In the first set of experiments (Coordination), we consider $4$ different formation constraints: in formation constraint $m \in \{1, ..., 4\}$, at least $m$ agents need to select action $1$ for the constraint to be satisfied. Each behavior policy $\bpi_i$ takes action $a_i = 0$ in every node. 
In the second set of experiments (Robustness), we consider one formation constraint that is satisfied if the agents are arranged equally between the two levels. In states where agents are balanced between the levels, each behavior policy $\bpi_i$ takes the action from the previous time-step with probability $p_i$;  in unbalanced states, the action that leads to the level with the least number of agents is taken with probability $p_i$.

\textbf{Coordination}: 
Fig. \ref{fig: graph-coordination} shows how much blame in total the blame attribution methods assign for the four different levels of required coordination ($m = 1,..., 4$). Observe, that when the constraint can be satisfied by every agent ($m = 1$), $\blamefunc_{MC}$ violates $\prop_V$ (validity). For $m = 2$, $\blamefunc_{MER}$ and $\blamefunc_{MC}$ assign zero blame to all agents, while $\blamefunc_{BI}$ violates $\prop_V$ (validity). Although always valid, $\blamefunc_{AP}$ assigns significantly less blame as $m$ increases. $\blamefunc_{SV}$ is always efficient, and its total blame 
does not vary with $m$. $\blamefunc_{SV}$, $\blamefunc_{BI}$, $\blamefunc_{MC}$ and $\blamefunc_{AP}$ do not satisfy $\prop_R$ 
(they assign more total blame than $\blamefunc_{MER}$).

\captionsetup[figure]{belowskip=-17pt}
\begin{figure*}
\captionsetup[subfigure]{aboveskip=0pt,belowskip=1pt}
\centering
\begin{subfigure}[b]{0.32\textwidth}
\centering
    \includegraphics[width=\textwidth, height=0.035\textheight]{./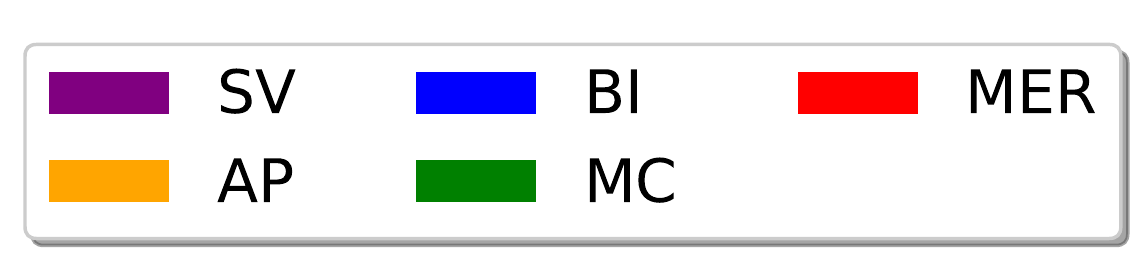}
\end{subfigure}
\begin{subfigure}[b]{0.32\textwidth}
\centering
    \includegraphics[width=\textwidth, height=0.035\textheight]{./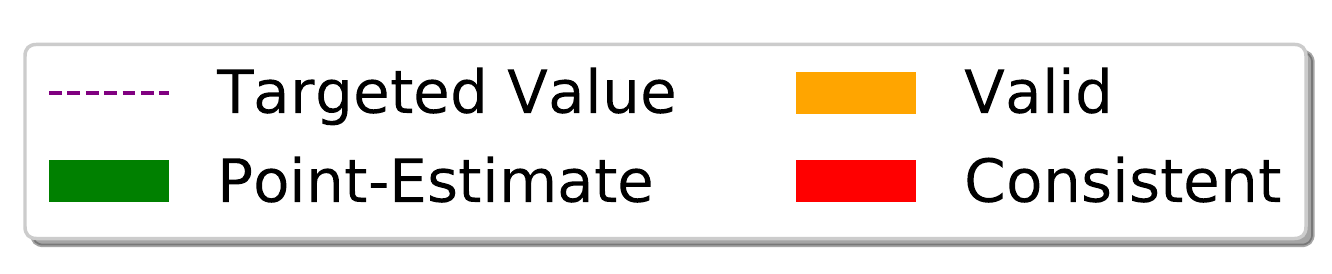}
\end{subfigure}
\begin{subfigure}[b]{0.32\textwidth}
    \includegraphics[width=\textwidth, height=0.035\textheight]{./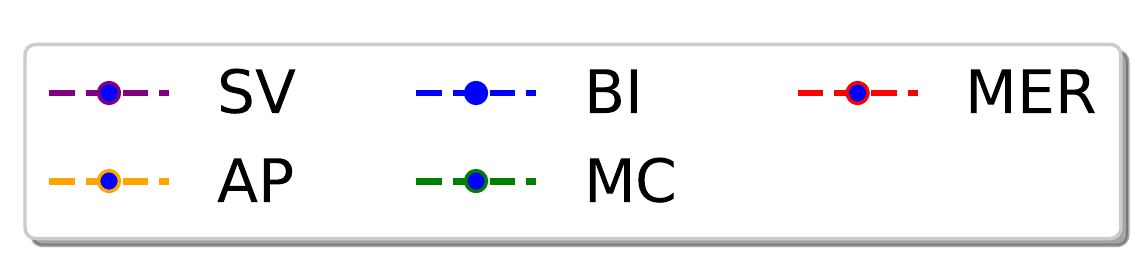}
\end{subfigure}
\\[-0.75ex]
\begin{subfigure}[b]{0.24\textwidth}
    \includegraphics[width=\textwidth]{./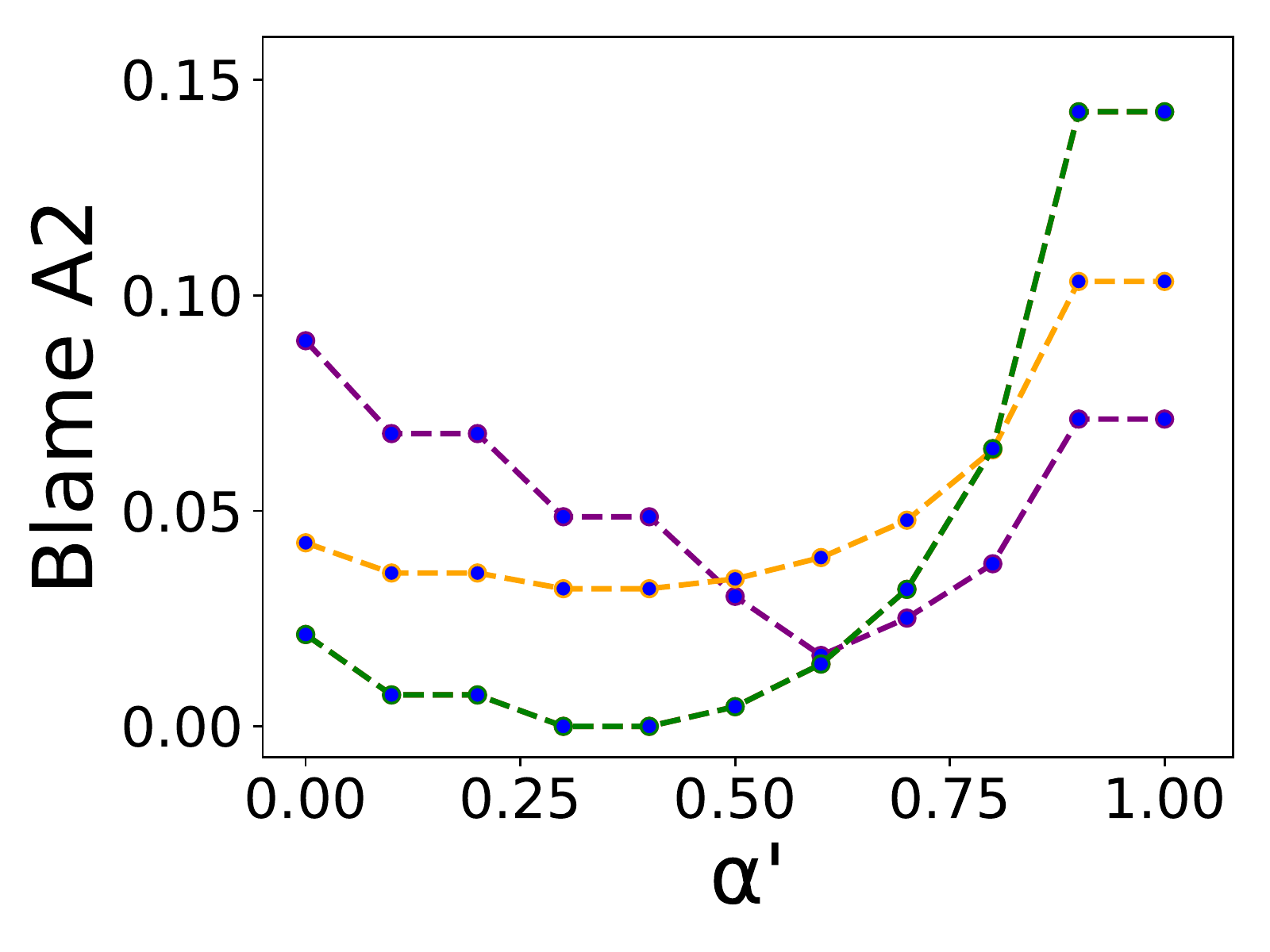}
    \caption{Gridworld: $\prop_{PerM}$}
    \label{fig: per-monotonicity}
\end{subfigure}
\begin{subfigure}[b]{0.24\textwidth}
    \includegraphics[width=\textwidth]{./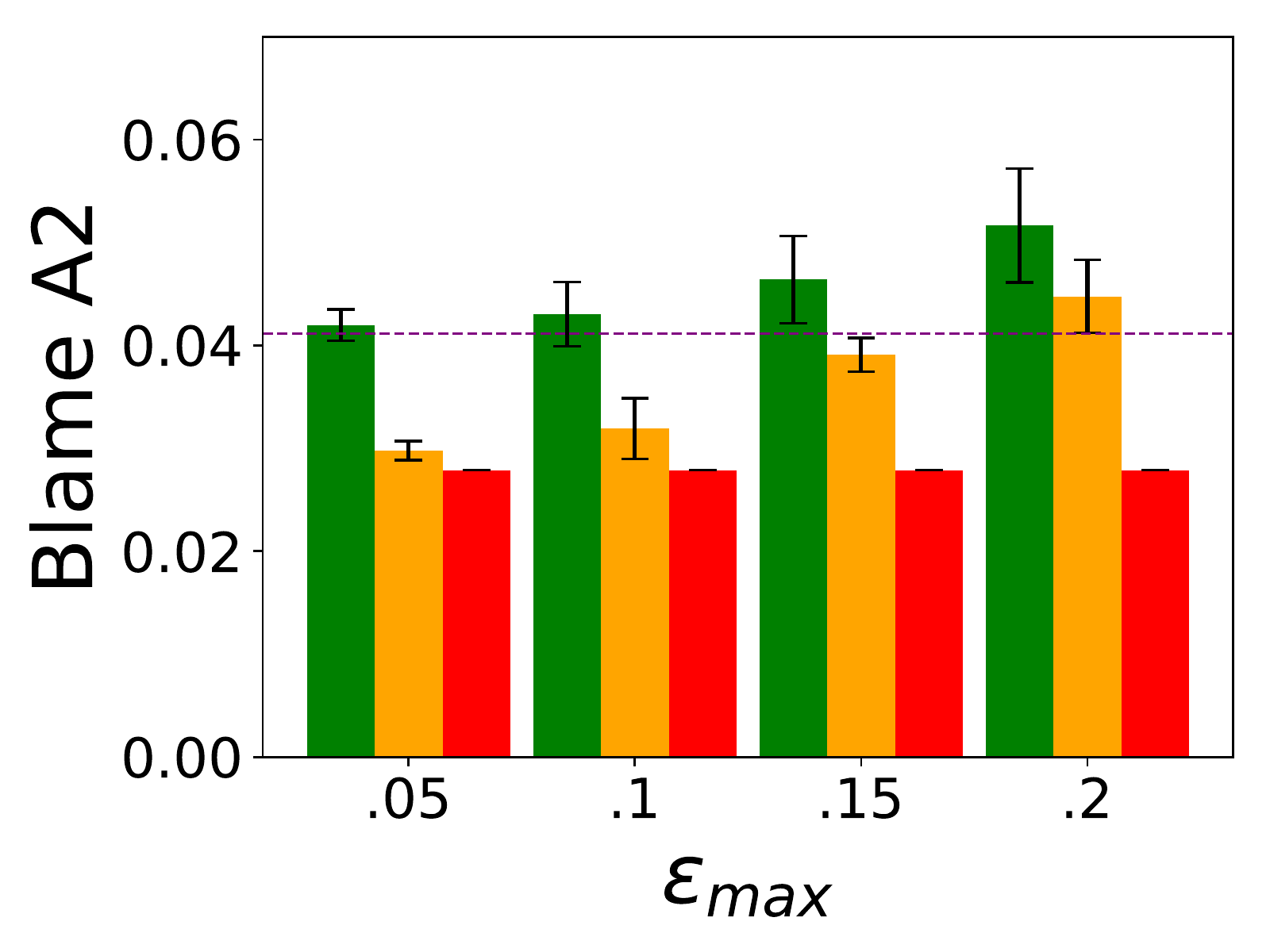}
    \caption{Gridworld: SV}
    \label{fig: gridworld-sv-methods}
\end{subfigure}
\begin{subfigure}[b]{0.24\textwidth}
    \includegraphics[width=\textwidth]{./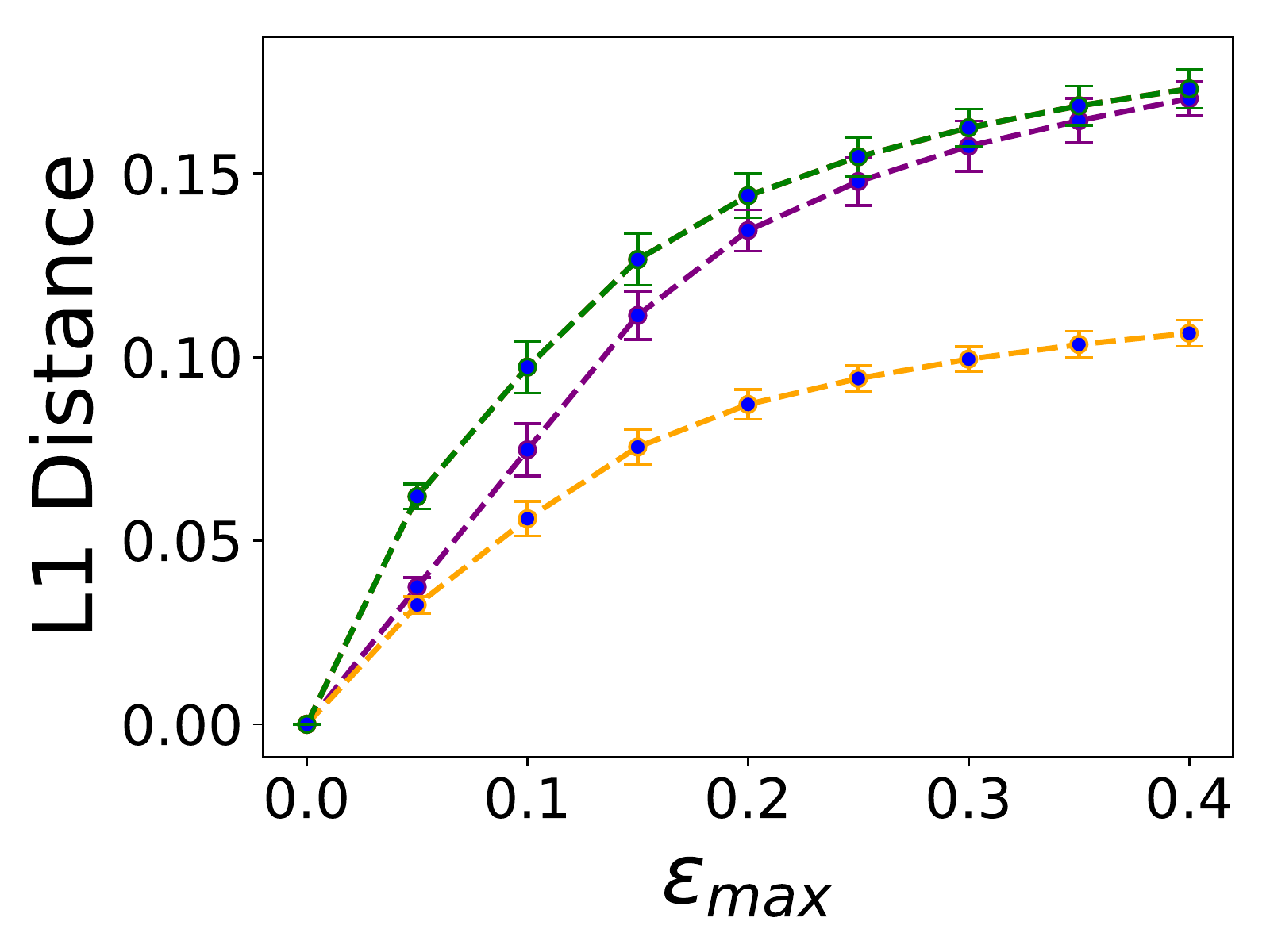}
    \caption{Gridworld: L1 Distance}
    \label{fig: gridworld-distance}
\end{subfigure}
\begin{subfigure}[b]{0.24\textwidth}
    \includegraphics[width=\textwidth]{./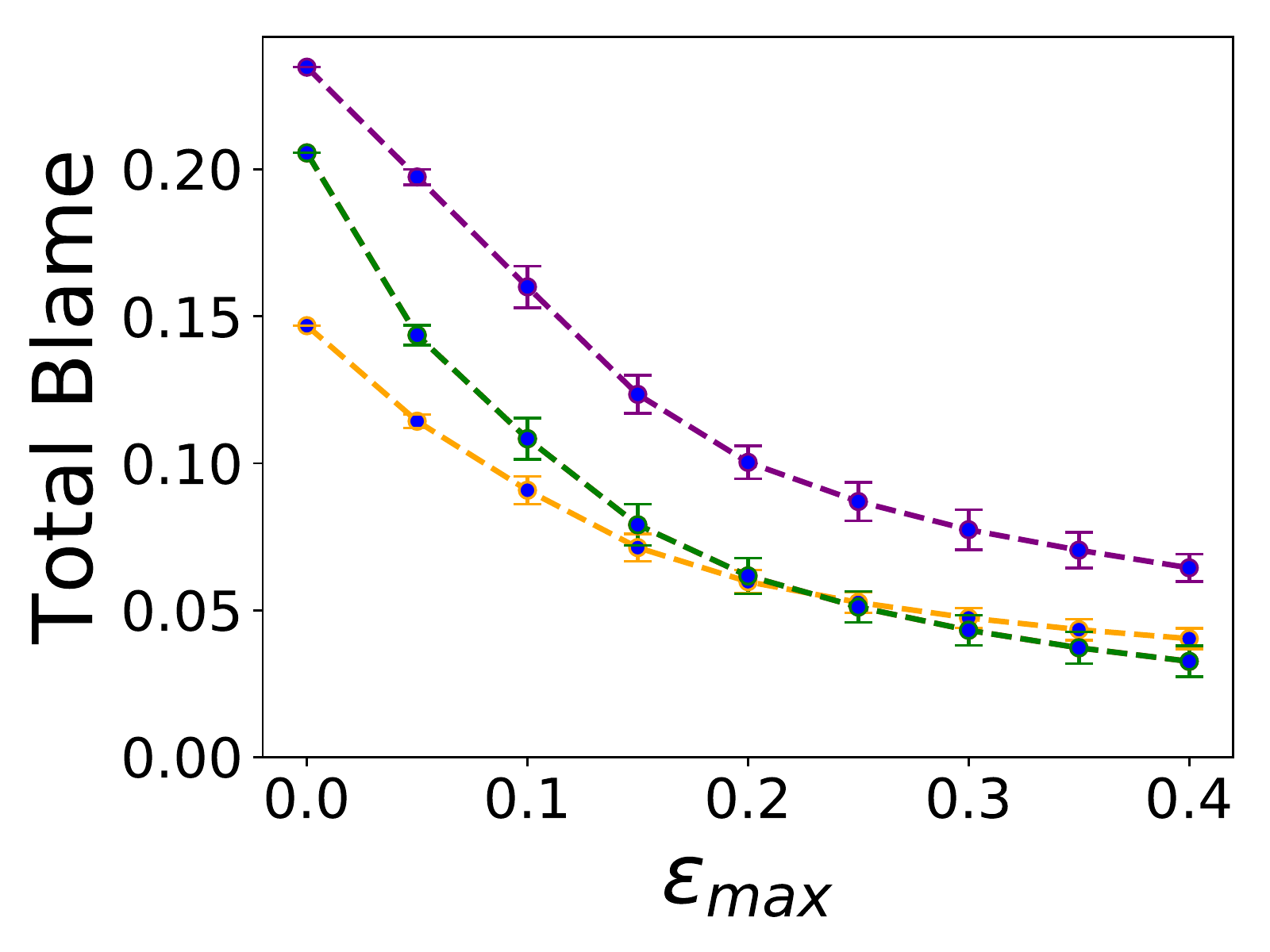}
    \caption{Gridworld: Blame}
    \label{fig: gridworld-total-blame}
\end{subfigure}
\begin{subfigure}[b]{0.24\textwidth}
    \includegraphics[width=\textwidth]{./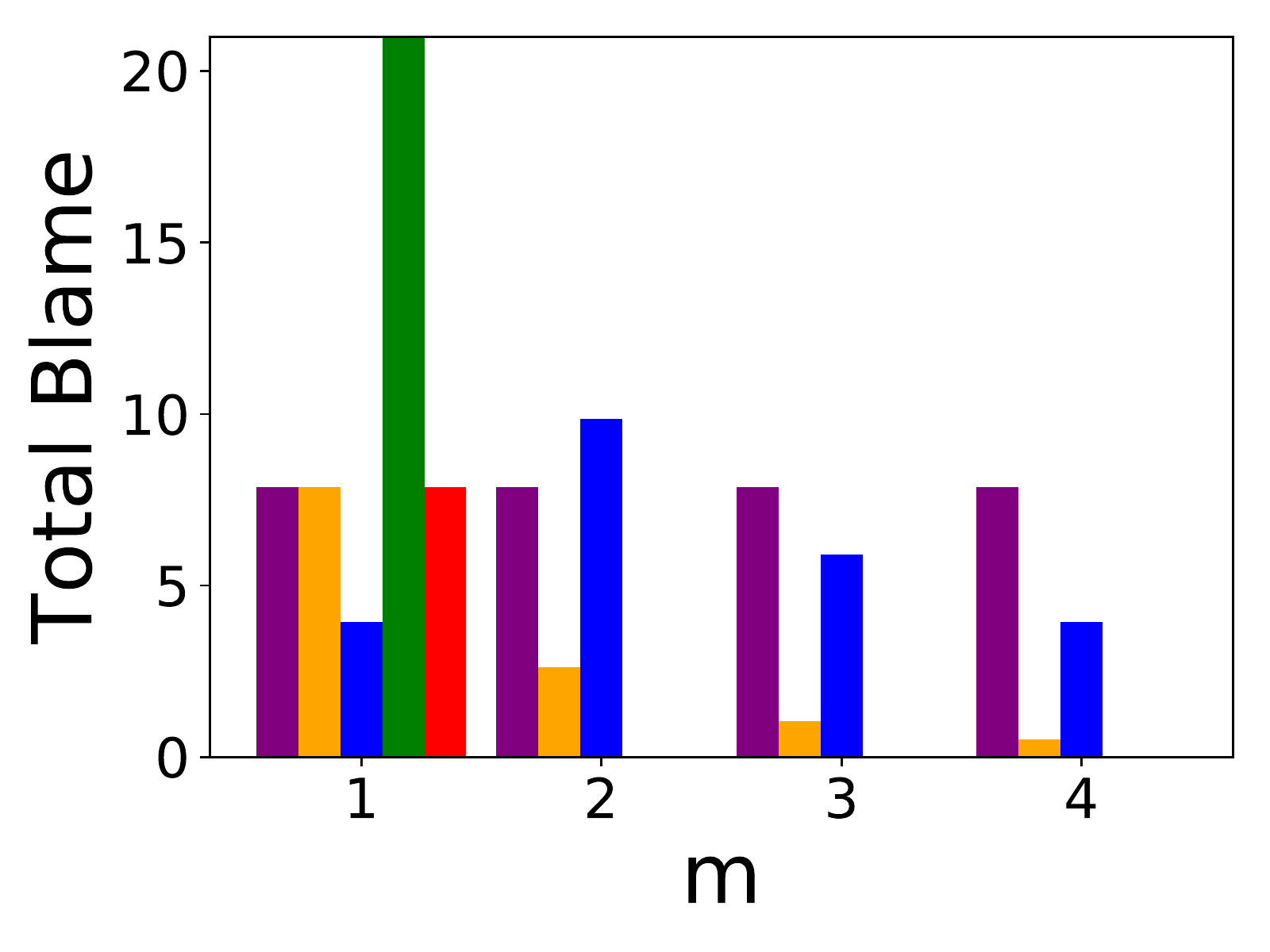}
    \caption{Graph: Coordination}
    \label{fig: graph-coordination}
\end{subfigure}
\begin{subfigure}[b]{0.24\textwidth}
    \includegraphics[width=\textwidth]{./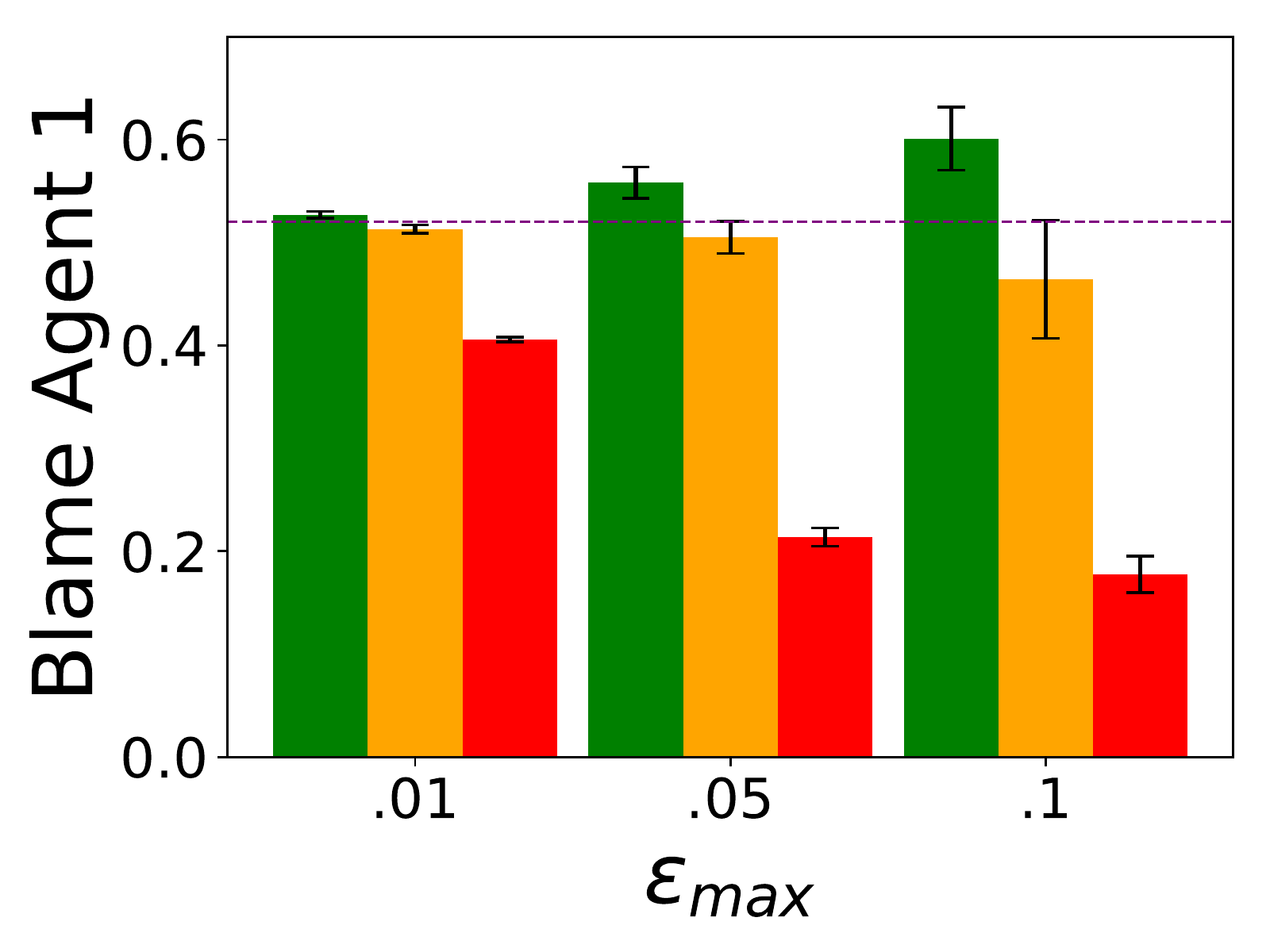}
    \caption{Graph: SV}
    \label{fig: graph-sv-methods}
\end{subfigure}
\begin{subfigure}[b]{0.24\textwidth}
    \includegraphics[width=\textwidth]{./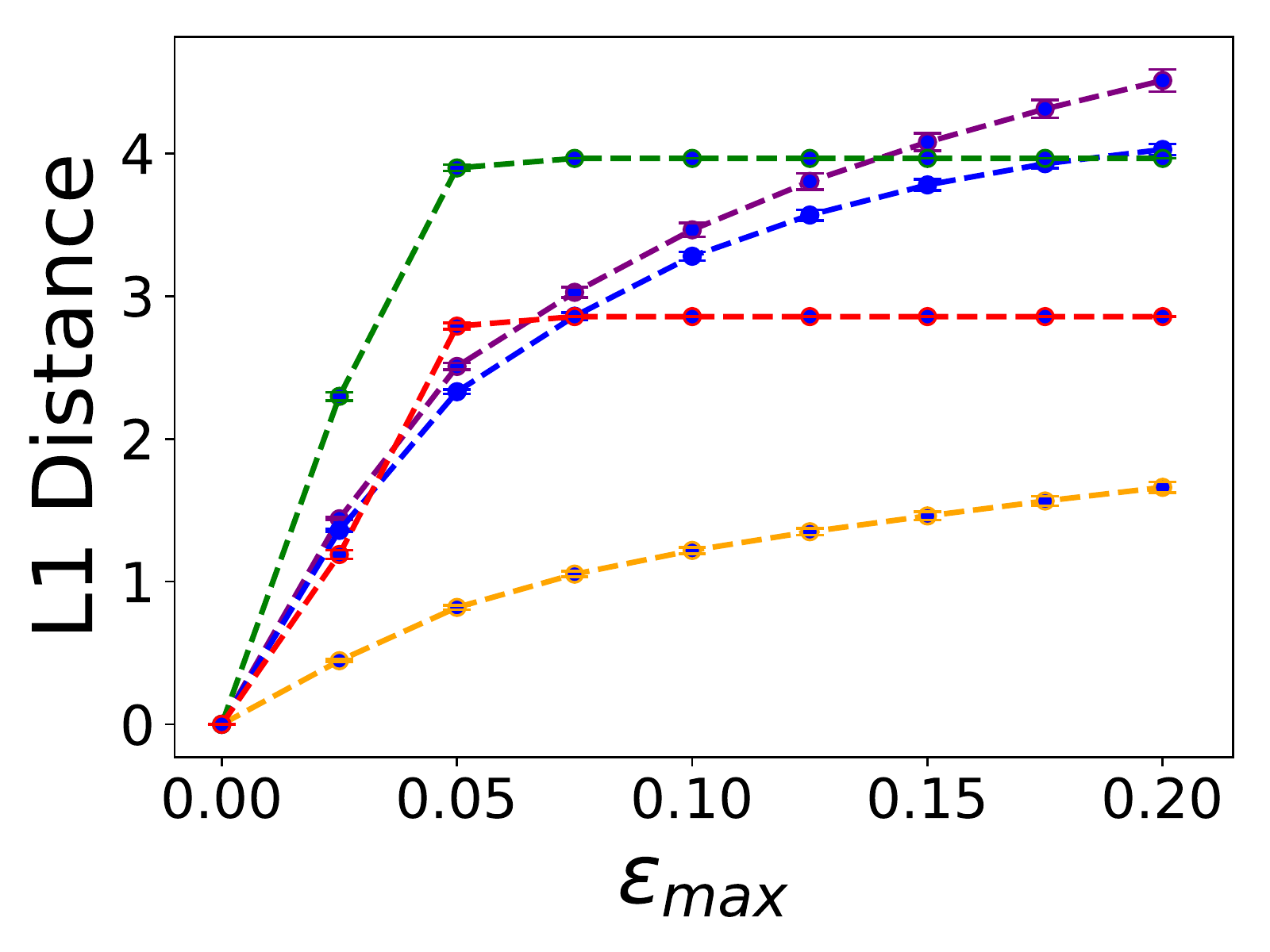}
    \caption{Graph: L1 Distance}
    \label{fig: graph-distance}
\end{subfigure}
\begin{subfigure}[b]{0.24\textwidth}
    \includegraphics[width=\textwidth]{./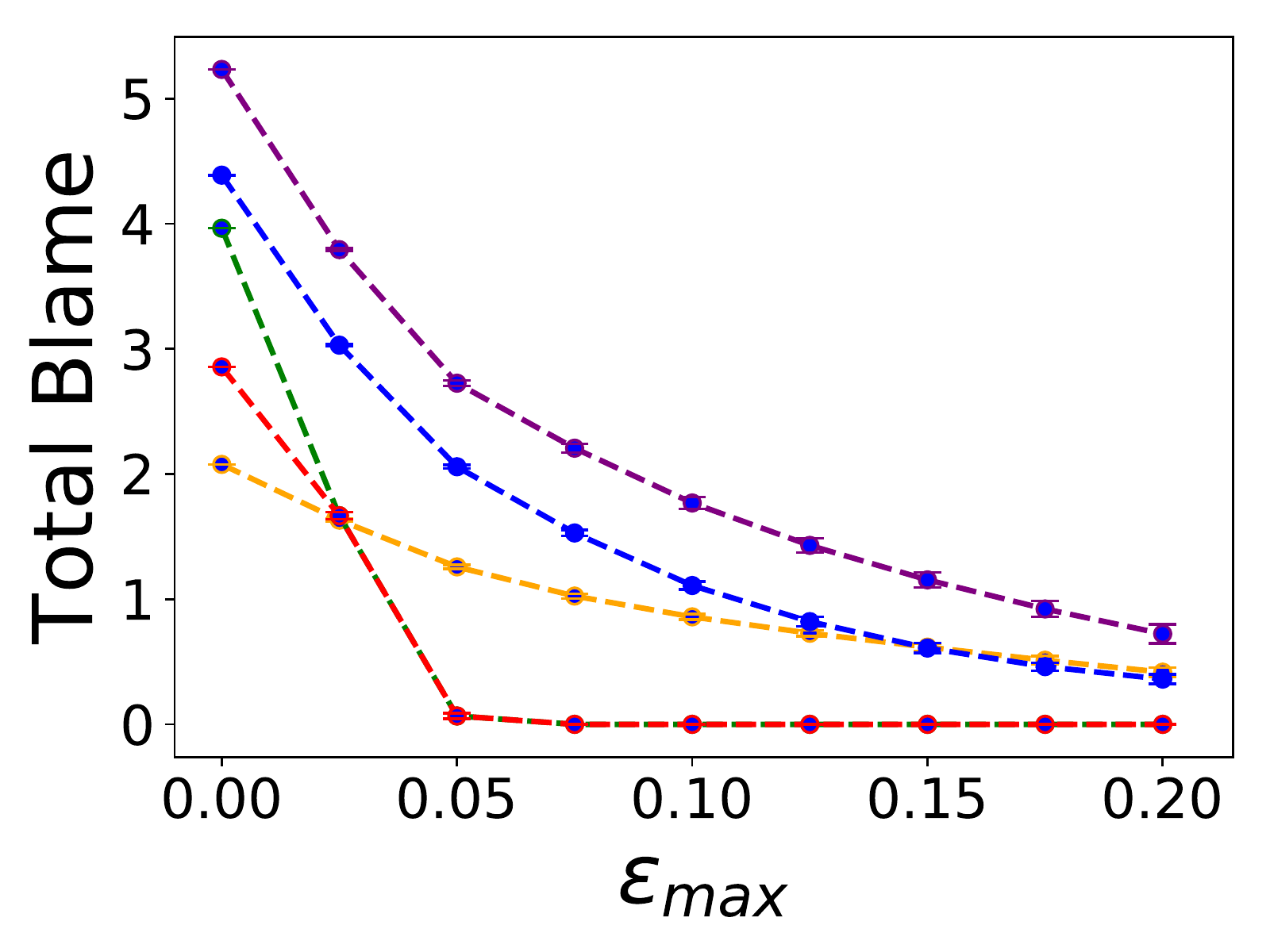}
    \caption{Graph: Blame}
    \label{fig: graph-total-blame}
\end{subfigure}
\caption{Experimental results for the Gridworld and Graph environments. Plot (\protect\subref{fig: per-monotonicity}) tests methods for $\prop_{PerM}$. Plot (\protect\subref{fig: graph-coordination}) shows the effect of varying coordination level. Plots (\protect\subref{fig: gridworld-sv-methods},\protect\subref{fig: gridworld-distance},\protect\subref{fig: gridworld-total-blame},\protect\subref{fig: graph-sv-methods},\protect\subref{fig: graph-distance},\protect\subref{fig: graph-total-blame}) show the effect of varying $\epsilon_{max}$ in different Shapley value approaches (\protect\subref{fig: gridworld-sv-methods},\protect\subref{fig: graph-sv-methods}) and blame attribution methods (\protect\subref{fig: gridworld-distance},\protect\subref{fig: gridworld-total-blame},\protect\subref{fig: graph-distance},\protect\subref{fig: graph-total-blame}).}
\end{figure*}

\textbf{Robustness}: 
We test the robustness of the blame attribution methods by controlling the amount of uncertainty in the estimates of the agents' behavior policies. 
To model uncertainty, we consider maximum estimation error $\epsilon_{max}$, and to obtain uncertainty sets $\mathcal P(\bpi)$, we  sample (uniformly at random) 
$\bpie_i(s)$ such that $\frac{1}{2}\norm{\bpie_i(s) - \bpi_i(s)}_1 \leq \epsilon_{max}$.
Moreover, $\mathcal P(\bpi, s)$ contains all policies $\pi$ such that $\frac{1}{2}\norm{\bpie_i(s) - \pi_i(s)}_1 \leq \epsilon_{max}$.
In our experiments, we take $\bpie$ to be the point estimate of $\bpi$. 

{\em Comparing estimation approaches}: Fig. \ref{fig: gridworld-sv-methods} and \ref{fig: graph-sv-methods} show how the approaches for estimating $\widehat{\blamefunc}_{SV}$ from Section \ref{sec.blame_under_uncertainty} fare under different levels of uncertainty. The point estimate approach typically over-blames an agent and the amount of over-blaming increases with the level of uncertainty.
$\widehat{\blamefunc}_{SV, BC}$ never over-blames any agent, but it becomes less efficient (in distributing blame) as $\epsilon_{max}$ increases (Fig. \ref{fig: graph-sv-methods}). $\widehat{\blamefunc}_{SV, V}$ is more efficient than $\widehat{\blamefunc}_{SV, BC}$, but violates $\prop_{BC}$ (Blackstone consistency) (Fig. \ref{fig: gridworld-sv-methods}).

{\em Comparing attribution approaches}:
Fig. \ref{fig: gridworld-distance} and \ref{fig: graph-distance} show for each consistent blame attribution method $\widehat{\blamefunc}$ from Section \ref{sec.blame_under_uncertainty} the $L_1$ distance between its output and the output of its counterpart $\blamefunc$ (``targeted assignment'').
 Fig. \ref{fig: gridworld-total-blame} and \ref{fig: graph-total-blame} show the total blame assigned by these methods. $\widehat{\blamefunc}_{AP,BC}$ consistently outperforms the other methods in terms of the $L_1$ distance from its ``targeted assignment''. Compared to $\widehat{\blamefunc}_{AP,BC}$, $\widehat{\blamefunc}_{SV, BC}$ is consistently better in terms of efficiency (in distributing blame). Similar, albeit less prominent effects can be seen when comparing  $\widehat{\blamefunc}_{AP,BC}$ and $\widehat{\blamefunc}_{BI,BC}$. These results indicate a tendency where efficiency (in distributing blame) and robustness are at odds, as we also discuss in Section \ref{sec.approx_satisfability}.
$\widehat{\blamefunc}_{MER,BC}$ and $\widehat{\blamefunc}_{MC,BC}$ assign zero total blame even for smaller $\epsilon_{max}$, indicating that they are the least robust to uncertainty.

%%%%%%%%%%%%%%%%%%%%%%%%%%%%%%%%%%%%% CONCLUSION
 % !TEX root =  main.tex
%%%%%%%%%%%%%%%%%%%%%%%%%%%%%%%%%%%%%
%%%%%%%%%%%%%%%%%%%%%%%%%%%%%%%%%%%%%
\vspace{-0.1cm}
\section{Conclusion}\label{sec.conclusion}
\vspace{-0.1cm}

In summary, the focus of our work is to provide an overview of possible computational approaches for attributing blame in multi-agent sequential decision making. We discuss the strengths and weaknesses of different methods in order to guide practitioners and policy makers in designing tools that support accountability. We conclude that there is no single best choice for blame attribution methods, since there are inherent trade-offs among properties that one might consider important. 
Looking forward, we recognize several research directions that could address the limitations of our results, some of which we highlight here. a) In this work we primarily focused on the agents' joint return as the outcome of interest. However, it is often important to pinpoint actual causes that led to more fine grained outcomes. Utilizing a causal perspective would be beneficial in this regard and could link our results to prior work (e.g., \cite{halpern2016actual}). b) We considered model-based approaches to blame assignment. Learning blame attribution directly from data (e.g., with model-free counterfactual RL) might be more practical in settings where an approximate model is hard to obtain. c) More generally, ensuring scalability both in the number of agents and the the richness of  environments is one of the most important steps for making this work more widely applicable. We deem approaches from multi-agent RL as suitable candidate solutions for resolving this problem. 
d) We primarily studied blame assignment properties that are taken from or closely relate to those from the game theory literature. This list could be extended and include more principles from moral philosophy and law. For example, in this paper, we adopted a consequentialist approach to blame attribution, focusing on the outcomes of the agents' behavior. Alternatively, one could take a deontological perspective, and focus on the alignment of an agent's behavior with a set of rules. 
We further discuss different perspectives on blame attribution in
\iftoggle{longversion}{Appendix \ref{sec.disc}}{the supplementary material}.
Finally, we would like to draw particular attention to the fact that there is no universal prioritization of properties that applies to all blame attribution problems and hence treating any generic analysis like ours as panacea without further justification, might have a negative impact to the agents that are being blamed.
To that end, we would like to emphasize that we see this work not as a final solution to the blame attribution problem, but as a starting point that shows challenges and trade-offs in distributing blame.

%%%%%%%%%%%%%%%%%%%%%%%%%%%%%%%%%%%%% ACKNOWLEDGMENTS
% \input{8_ackonwledgements}

%%%%%%%%%%%%%%%%%%%%%%%%%%%%%%%%%%%%%
%\newpage
%\bibliographystyle{abbrv}
\bibliographystyle{unsrt}
\bibliography{main}

%%%%%%%%%%%%%%%%%%%%%%%%%%%%%%%%%%%%% APPENDIX
\iftoggle{longversion}{
\clearpage
\onecolumn
\appendix 
{\allowdisplaybreaks
% !TEX root =  main.tex
%%%%%%%%%%%%%%%%%%%%%%%%%%%%%%%%%%%%% APPENDIX: MAIN
%%%%%%%%%%%%%%%%%%%%%%%%%%%%%%%%%%%%%
\section{List of Appendices}
In this section we provide a brief description of the content provided in the appendices of the paper.
\begin{itemize}
    \item Appendix \ref{sec.table} provides a table that summarizes the the results in Section \ref{sec.approaches_to_blame}.
    \item Appendix \ref{sec.bi} provides additional details on Banzhaf index.
    \item Appendix \ref{sec.app_uncertainty} provides additional details on blame attribution under uncertainty.
    \item Appendix \ref{sec.appendix_exp} provides additional details on experimental setup and implementation.
    \item Appendix \ref{sec.disc} provides an extended discussion on different perspectives on blame attribution and the negative side-effects of under-blaming agents.
    \item Appendix \ref{sec.prop_proofs_123} contains the proofs of the proposition from Section \ref{sec.approaches_to_blame} (Proposition \ref{prop.mer_properties}, Proposition \ref{prop.mc_properties}, and Proposition \ref{prop.imposs}).
    \item Appendix \ref{sec.proof_sv} contains the proof of Theorem \ref{thm.shapley_value.unique} from Section \ref{sec.approaches_to_blame}.
    \item Appendix \ref{sec.proof_ap} contains the proof of Theorem \ref{thm.average_participation.unique} from Section \ref{sec.approaches_to_blame}.
    \item Appendix \ref{sec.proof_uncertainty} contains the proofs of the formal results from Section \ref{sec.blame_under_uncertainty} (Proposition \ref{prop.uncertainty_validity}, Proposition \ref{prop.uncertainty_consistency}, and Theorem \ref{thm.approx_satisfability}).
\end{itemize}

% !TEX root =  main.tex
%%%%%%%%%%%%%%%%%%%%%%%%%%%%%%%%%%%%% APPENDIX: TABLE OF METHODS AND PROPERTIES
%%%%%%%%%%%%%%%%%%%%%%%%%%%%%%%%%%%%%

\section{Table of Methods and Properties}\label{sec.table}

In this section we provide a table that summarizes the results of Section \ref{sec.approaches_to_blame} and describes which blame attribution methods satisfy which properties. We use $(\checkmark)$ to denote that a method does not satisfy the exact property but a weaker version of it.

\captionsetup[table]{skip=5pt}
\begin{table}[h!]
    \begin{center}
        \resizebox{0.6\textwidth}{!}{
            \begin{tabular}{|c|c|c|c|c|c|}
                \hline
                 & $\blamefunc_{MER}$ & $\blamefunc_{MC}$ & $\blamefunc_{SV}$ & $\blamefunc_{BI}$ & $\blamefunc_{AP}$  \\
                \hline
                $\prop_{V}$ & \checkmark & & \checkmark & & \checkmark\\
                \hline
                $\prop_{E}$ & & & \checkmark & & (\checkmark)\\
                \hline
                $\prop_{R}$ & \checkmark & & & &\\
                \hline
                $\prop_{S}$ &  & \checkmark & \checkmark & \checkmark & \checkmark\\
                \hline
                $\prop_{I}$ & \checkmark & \checkmark & \checkmark & \checkmark & \checkmark\\
                \hline
                $\prop_{CM}$ & & \checkmark & \checkmark & \checkmark &\\
                \hline
                $\prop_{PerM}$ & & \checkmark & & & (\checkmark)\\
                \hline
            \end{tabular}}
    \captionof{table}{Summary of the characterization results from Section \ref{sec.approaches_to_blame}}
    \end{center}
\end{table} 
Method $\blamefunc_{AP}$ satisfies properties $\prop_{AE}$ and $\prop_{cPerM}$ which are weaker versions of $\prop_{E}$ and $\prop_{PerM}$, respectively.
% !TEX root =  main.tex
%%%%%%%%%%%%%%%%%%%%%%%%%%%%%%%%%%%%% APPENDIX: BANZHAF INDEX
%%%%%%%%%%%%%%%%%%%%%%%%%%%%%%%%%%%%%

\section{Banzhaf Index}\label{sec.bi}
In this section, we discuss in a greater detail Banzhaf index and its properties. In the context of the sequential decision making setting studied in this paper, Banzhaf Index can be defined as $\blame = \blamefunc_{BI}(M, \bpi)$ such that
\begin{align}\label{eq.def_bi}
     \blame_i = \sum_{S \subseteq \{1, ..., n\} \backslash \{i\}} \weight_S \cdot \left [ \return(\optpirb_{S \cup \{ i \}}, \bpi_{-S\cup\{i\}}) - \return(\optpirb_{S}, \bpi_{-S}) \right ],  
\end{align}
where coefficients $\weight_S$ are set to $\weight_S = \frac{1}{2^{n-1}}$. The following properties hold: 
\begin{proposition}\label{prop.bi_properties}
$\blamefunc_{BI}(M, \bpi) = (\beta_1, ..., \beta_n)$, where $\beta_i$ is defined by Eq. \eqref{eq.def_bi} and $\weight_S = \frac{1}{2^{n-1}}$, is a blame attribution method satisfying $\prop_S$ (symmetry), $\prop_I$ (invariance) and $\prop_{CM}$ (contribution monotonicity).
\end{proposition}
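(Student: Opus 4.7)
The plan is to observe that Banzhaf index shares the same functional form as Shapley value, differing only in the coefficient choice $\weight_S = \frac{1}{2^{n-1}}$, and that the proofs of the three claimed properties for Shapley value in Theorem \ref{thm.shapley_value.unique} rely on properties of the coefficients that $\weight_S = \frac{1}{2^{n-1}}$ also enjoys. Specifically, I will extract which structural features of $\weight_S$ each property requires and then verify them for the Banzhaf weights. Before starting, I would rewrite Eq.~\eqref{eq.def_bi} in the more convenient form $\blame_i = \sum_{S \subseteq \{1,\ldots,n\} \setminus \{i\}} \weight_S \cdot [\inef_{S \cup \{i\}} - \inef_S]$, using that both $\return(\optpirb_{S \cup \{i\}}, \bpi_{-S\cup\{i\}}) - \return(\bpi) = \inef_{S \cup \{i\}}$ and $\return(\optpirb_{S}, \bpi_{-S}) - \return(\bpi) = \inef_{S}$.

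For $\prop_S$ (symmetry), I would split the sum defining $\blame_i$ according to whether $j \in S$ or not, and analogously for $\blame_j$. This gives
\begin{align*}
\blame_i &= \sum_{S \subseteq \{1,\ldots,n\} \setminus \{i,j\}} \weight_S [\inef_{S \cup \{i\}} - \inef_S] + \sum_{S \subseteq \{1,\ldots,n\} \setminus \{i,j\}} \weight_{S \cup \{j\}} [\inef_{S \cup \{i,j\}} - \inef_{S \cup \{j\}}],
\end{align*}
and similarly for $\blame_j$ with $i$ and $j$ swapped. Since the Banzhaf weight $\weight_S = \frac{1}{2^{n-1}}$ is a constant depending on neither the identity of $S$ nor $|S|$, the two expressions match term-by-term once one substitutes the symmetry hypothesis $\inef_{S \cup \{i\}} = \inef_{S \cup \{j\}}$ for all $S \subseteq \{1,\ldots,n\} \setminus \{i,j\}$ (which also forces $\inef_{S \cup \{i,j\}}$ to appear in both sums with the same coefficient).

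For $\prop_I$ (invariance), the assumption $\inef_{S \cup \{i\}} = \inef_S$ for all $S$ forces every bracketed term in the rewritten sum to vanish, so $\blame_i = 0$ regardless of $\weight_S$. For $\prop_{CM}$ (contribution monotonicity), the key observation is that $\weight_S = \frac{1}{2^{n-1}} \ge 0$; so under the hypothesis $\inef^1_{S\cup\{i\}} - \inef^1_S \ge \inef^2_{S\cup\{i\}} - \inef^2_S$ for every $S$, each summand in $\blame^1_i$ dominates the corresponding summand in $\blame^2_i$, giving $\blame^1_i \ge \blame^2_i$.

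None of these steps poses a real obstacle, since positivity and label-invariance of the weights are both immediate for $\weight_S = \frac{1}{2^{n-1}}$. The only mildly subtle point is making sure the symmetry argument correctly pairs up the ``$j \in S$'' terms across $\blame_i$ and $\blame_j$; this is precisely where constancy of the Banzhaf weight (and more generally, its dependence only on $|S|$, as for Shapley) is used, and it is worth stating cleanly. For brevity I would also note that the proof mirrors the corresponding parts of the proof of Theorem \ref{thm.shapley_value.unique} given in Appendix~\ref{sec.proof_sv}, with the substitution $\frac{|S|!(n-|S|-1)!}{n!} \mapsto \frac{1}{2^{n-1}}$ carried out throughout.
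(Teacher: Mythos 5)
Your proposal is correct and follows essentially the same route as the paper's proof in Appendix~\ref{sec.bi}: rewrite Eq.~\eqref{eq.def_bi} in the marginal form $\blame_i = \sum_{S \subseteq \{1,\ldots,n\}\setminus\{i\}} \weight_S\,[\inef_{S\cup\{i\}} - \inef_S]$, then derive symmetry by pairing terms across the two agents' sums, invariance from the vanishing of each bracket, and contribution monotonicity from the nonnegativity of the weights. Your explicit split of the symmetry sum over whether $j \in S$ just spells out the pairing that the paper's argument leaves implicit.
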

\begin{proof}
First, notice that Banzhaf Index can be redefined as $\blame = \blamefunc_{BI}(M, \bpi)$ such that:
\begin{align}\label{eq.def_bi_inef}
     \blame_i = \sum_{S \subseteq \{1, ..., n\} \backslash \{i\}} \weight_S \cdot \left [ \inef_{S \cup \{i\}} - \inef_S \right ].   
\end{align}
We prove the properties as follows:
\begin{itemize}
    \item \underline{$\prop_S$ (symmetry):} Consider $M$, $\bpi$, and agents $i$ and $j$ such that  $\inef_{S \cup \{i \}} = \inef_{S \cup \{j \}}$ for all $S \subseteq \{1, ..., n\} \backslash \{i, j\}$. Notice that $\inef_{S \cup \{i\}} - \inef_S = \inef_{S \cup \{j\}} - \inef_S$ and $\inef_{S \cup \{i,j\}} - \inef_{S \cup \{j\}} = \inef_{S \cup \{i,j\}} - \inef_{S \cup \{i\}}$ for all $S \subseteq \{1, ..., n\} \backslash \{i, j\}$. Given the definition of $\blame = \blamefunc_{BI}(M, \bpi)$, this implies that $\blame_i = \blame_j$, and hence property $\prop_S$ (symmetry) is satisfied.
    \item \underline{$\prop_I$ (invariance):} Consider $M$, $\bpi$, and agent $i$
    such that $\inef_{S \cup \{i \}} = \inef_{S}$ for all $S$. Given the definition of $\blame = \blamefunc_{BI}(M, \bpi)$, this implies that $\blamei = 0$, and hence property $\prop_I$ (invariance) is satisfied.
    \item \underline{$\prop_{CM}$ (contribution monotonicity):} Consider $M^1$, $\bpi^1$, $M^2$, $\bpi^2$, and agent $i$ such that $\inef^1_{S \cup \{i \}} - \inef^1_S \geq \inef^2_{S \cup \{i \}} - \inef^2_S$ for all $S$. By using the definitions of $\blame^1 = \blamefunc_{BI}(M^1, \bpi^1)$ and $\blame^2 = \blamefunc_{BI}(M^2, \bpi^2)$, this implies that:
    \begin{align*}
        \blamei^1 =& \sum_{S \subseteq \{1, ..., n\} \backslash \{i\}} \weight_S \cdot \left [ \inef^1_{S \cup \{i\}} - \inef^1_S \right ] \geq \\
        \geq& \sum_{S \subseteq \{1, ..., n\} \backslash \{i\}} \weight_S \cdot \left [ \inef^2_{S \cup \{i\}} - \inef^2_S \right ] = \\
        =& \blamei^2,
    \end{align*}
\end{itemize}
and hence property $\prop_{CM}$ (contribution monotonicity) is satisfied.
\end{proof}
In general, Banzhaf index satisfies a property called $2$-efficiency~\cite{malawski2002equal} which leads to a slightly different uniqueness result than the one of Theorem \ref{thm.shapley_value.unique}. This property and the corresponding analysis are out of the scope of this paper, and we refer the reader to \cite{malawski2002equal,datta2015program} for more details. 
% !TEX root =  main.tex
%%%%%%%%%%%%%%%%%%%%%%%%%%%%%%%%%%%%% APPENDIX: METHODS FOR UNCERTAINTY
%%%%%%%%%%%%%%%%%%%%%%%%%%%%%%%%%%%%%

\section{Additional Information on Blame Attribution under Uncertainty}\label{sec.app_uncertainty}

In this section, we provide additional information on the optimization problems defined in Section \ref{sec.sv_under_uncertainty} and the implementation of Blackstone consistent $\widehat{\blamefunc}_{MER, BC}(M, \mathcal P(\bpi))$ and $\widehat{\blamefunc}_{AP, BC}(M, \mathcal P(\bpi))$.

\subsection{Implementation of Optimization Problems}

In this section, we provide implementation details on the optimization problems defined in Section \ref{sec.sv_under_uncertainty}, for obtaining Valid and Blackstone consistent blame attribution methods. More specifically, we focus on the optimization problems $\min_{\pi \in \mathcal P'(\bpi)} \return(\optpir_{S \cup \{i\}}, \pi_{-S \cup \{i\}})$ and $\max_{\pi \in \mathcal P'(\bpi)}\return(\optpir_{S}, \pi_{-S})$, where $S \subseteq \{1, ..., n\}$ and $\mathcal P'(\bpi) \supseteq \mathcal P(\bpi)$. 
We consider 
\begin{align*}
    \mathcal P(\bpi) = \bigg\{& \pi | \pi(a|s) = \pi_1(a_1|s) \cdots \pi_n(a_n|s), \frac{1}{2} \cdot \norm{\pi_i(\cdot|s) - \pi_i^{bas}(\cdot|s)}_1 \le C, 0 \le \pi_i(a_i|s) \le 1, \\
    &\sum_{a_i \in \mathcal{A}_i} \pi_i(a_i|s) = 1\bigg\},
\end{align*}
where $C$ is a non-negative constant and $\pi^{bas}$ is a baseline joint policy.
In specific cases, we can set $\mathcal P'(\bpi) = \mathcal P(\bpi)$ and we discuss these cases below. In general, to more directly relate the optimization problems to prior work on robust optimization in MDPs~\cite{iyengar2005robust,nilim2005robust}, we relax the constraint that $\pi$ factorizes to $\pi(a|s) = \pi_1(a_1|s) \cdots \pi_n(a_n|s)$, and consider 
\begin{align*}
    \mathcal P'(\bpi) = \bigg\{& \pi | \prod_{i = 1}^{n} \max(\pi_i^{bas}(a_i|s) - C, 0)\le \pi(a_1, ..., a_n|s) \le \prod_{i = 1}^{n} \min(\pi_i^{bas}(a_i|s) + C, 1), \\
    &\sum_{(a_1, ..., a_n) \in \mathcal{A}} \pi(a_1, ..., a_n|s) = 1 \bigg\}.
\end{align*}
Notice that since $\sum_{a_i \in \mathcal{A}_i} \pi^{bas}_i(a_i|s) = 1$, we have that $\pi_i^{bas}(a_i|s) - C\le \pi_i(a_i|s) \le \pi_i^{bas}(a_i|s) + C$ for every $\pi \in \mathcal P(\bpi)$, and hence $\mathcal P''(\bpi) \supseteq \mathcal P(\bpi)$, where 
\begin{align*}
    \mathcal P''(\bpi) = \bigg\{& \pi | \pi(a|s) = \pi_1(a_1|s) \cdots \pi_n(a_n|s), \max(\pi_i^{bas}(a_i|s) - C, 0) \le \pi_i(a_i|s) \\
    & \le \min(\pi_i^{bas}(a_i|s) + C, 1), \sum_{a_i \in \mathcal{A}_i} \pi_i(a_i|s) = 1 \bigg\}.
\end{align*}
Importantly, $\mathcal P'(\bpi) \supseteq \mathcal P''(\bpi)$ implies that $\mathcal P'(\bpi) \supseteq \mathcal P(\bpi)$, which means that $\max_{\pi \in \mathcal P'(\bpi)}\return(\optpir_{S}, \pi_{-S})$ upper bounds $\max_{\pi \in \mathcal P(\bpi)}\return(\optpir_{S}, \pi_{-S})$ and $\min_{\pi \in \mathcal P'(\bpi)} \return(\optpir_{S \cup \{i\}}, \pi_{-S \cup \{i\}})$ lower bounds $\min_{\pi \in \mathcal P(\bpi)} \return(\optpir_{S \cup \{i\}}, \pi_{-S \cup \{i\}})$. Therefore, $\min_{\pi \in \mathcal P'(\bpi)} \return(\optpir_{S \cup \{i\}}, \pi_{-S \cup \{i\}})$ and $\max_{\pi \in \mathcal P'(\bpi)}\return(\optpir_{S}, \pi_{-S})$ can be used for deriving valid and Blackstone consistent blame assignments (e.g., by applying Eq. \eqref{eq.def_sv} with the obtained solutions). Next, we discuss how to solve these optimization problems. 

While \cite{iyengar2005robust,nilim2005robust} consider uncertainty over transitions dynamics instead of behavior policies, we can solve $\max_{\pi \in \mathcal P'(\bpi)}\return(\optpir_{S}, \pi_{-S})$ and $\min_{\pi \in \mathcal P'(\bpi)} \return(\optpir_{S \cup \{i\}}, \pi_{-S \cup \{i\}})$ by adapting their robust optimization techniques. To solve the optimization problem $\min_{\pi \in \mathcal P'(\bpi)} \return(\optpir_{S \cup \{ i \}}, \pi_{-S\cup\{i\}})$ for subset $S$, we apply the following recursion (in each iteration updating values for each state $s$):
\begin{align*}
    & \tilde \pi(\cdot|s) \leftarrow \argmin_{\pi(\cdot | s) \in \mathcal P'(\bpi, s)} \max_{a_{S\cup \{ i \}}} \sum_{a_{-S\cup \{ i \}}} \pi_{-S\cup \{ i \}}(a_{-S\cup \{ i \}}|s) \cdot \bigg [R(s, a) 
    + \gamma \cdot \sum_{s'} P(s, a, s') \cdot V^k(s')\bigg ],\\
    & V^{k+1}(s) \leftarrow \max_{a_{S\cup \{ i \}}} \sum_{a_{-S\cup \{ i \}}} \tilde \pi_{-S\cup \{ i \}} (a_{-S\cup \{ i \}}|s) \cdot \bigg [R(s, a) 
    + \gamma \cdot \sum_{s'} P(s, a, s') \cdot V^k(s')\bigg ],
\end{align*}
for $k = 1, 2, \dots$, where $V: \mathcal{S} \rightarrow \mathbb{R}_{\geq 0}$ is the value function, $a_S$ denotes the joint action of agents $S$, $a_{-S}$ denotes the joint action of agents $\{1, ..., n\}\backslash S$, and $a$ is the joint action of all the agents. The optimization problem for finding $\tilde \pi$ can be solved via a linear program that minimizes a dummy variable which is constrained to be at least as large as $$\sum_{a_{-S\cup \{ i \}}} \pi_{-S\cup \{ i \}}(a_{-S\cup \{ i \}}|s) \cdot \bigg [R(s, a) 
    + \gamma \cdot \sum_{s'} P(s, a, s') \cdot V^k(s')\bigg ]$$ for all $a_{S \cup \{ i \}}$. 
The optimization problem for finding $V^{k+1}$ can be solved by simply searching over all possible $a_{S \cup \{ i \}}$. 
Similarly, we can solve $\max_{\pi \in \mathcal P'(\bpi)}\return(\optpir_{S}, \pi_{-S})$ with the following recursion:
\begin{align*}
    & \tilde \pi(\cdot|s) \leftarrow \argmax_{\pi(\cdot | s) \in \mathcal P'(\bpi, s)} \max_{a_{S}} \sum_{a_{-S}} \pi_{-S}(a_{-S}|s) \cdot \bigg [R(s, a) 
    + \gamma \cdot \sum_{s'} P(s, a, s') \cdot V^k(s')\bigg ],\\
    & V^{k+1}(s) \leftarrow \max_{a_{S}} \sum_{a_{-S}} \tilde \pi_{-S}(a_{-S}|s) \cdot \bigg [R(s, a) 
    + \gamma \cdot \sum_{s'} P(s, a, s') \cdot V^k(s')\bigg ],
\end{align*}
for $k = 1, 2, \dots$. The optimization problem for finding $\tilde \pi$ can be solved by searching over all $a_S$ and selecting one that maximizes $$\max_{\pi(\cdot | s) \in \mathcal P'(\bpi, s)} \sum_{a_{-S}} \pi_{-S}(a_{-S}|s) \cdot \bigg [R(s, a) 
    + \gamma \cdot \sum_{s'} P(s, a, s') \cdot V^k(s')\bigg ]$$ ---the solution to this problem gives us the corresponding $\tilde \pi$. The optimization problem for finding $V^{k+1}$ can be solved by searching over all possible $a_{S}$. 
    The two recursions described above define dynamic programming  techniques that are analogs of those in~\cite{iyengar2005robust,nilim2005robust}, but applied for uncertainty over behavior policies. They can be solved efficiently for smaller action spaces $\mathcal{A}$, e.g., as those in our experiments. 

Now, in specific cases, we can set $\mathcal P'(\bpi) = \mathcal P(\bpi)$, which in turn can lead to more efficient blame assignments (since the estimates are tighter). We consider the following two cases:

\begin{itemize}
    \item First, when there are only two agents in an MMDP, $-S \cup \{ i \}$ contains at most one agent. Therefore, we could run the first recursion on $\{ \pi_j | \frac{1}{2} \cdot \norm{\pi_j(\cdot|s) - \pi_j^{bas}(\cdot|s)}_1 \le C, 0 \le \pi_j(a_j|s) \le 1, \sum_{a_j \in \mathcal{A}_j} \pi_j(a_j|s) = 1 \}$ instead of $\{\pi_j | \max(\pi_j^{bas}(a_j|s) - C, 0) \le \pi_j(a_j|s) \le \min(\pi_j^{bas}(a_j|s) + C, 1), \sum_{a_j \in \mathcal{A}_j} \pi_j(a_j|s) = 1 \}$ and thus solve $\min_{\pi \in \mathcal P(\bpi)} \return(\optpir_{S \cup \{i\}}, \pi_{-S \cup \{i\}})$. Also in that case, $-S$ contains at most one agent whenever $S \neq \emptyset$, and hence we could run the second recursion on $\{ \pi_j | \frac{1}{2} \cdot \norm {\pi_j(\cdot|s) - \pi_j^{bas}(\cdot|s)}_1 \le C, 0 \le \pi_j(a_j|s) \le 1, \sum_{a_j \in \mathcal{A}_j} \pi_j(a_j|s) = 1 \}$, and thus solve $\max_{\pi \in \mathcal P(\bpi)}\return(\optpir_{S}, \pi_{-S})$ for every $S \neq \emptyset$.
    In addition, when the optimal policies of one of the agents, agent $i$, are independent of which policy the other agent, agent $j$, follows we can directly compute an optimal policy for $i$ on 
    $\{ \pi_i | \frac{1}{2} \cdot\norm{\pi_i(\cdot|s) - \pi_i^{bas}(\cdot|s)}_1 \le C, 0 \le \pi_i(a_i|s) \le 1, \sum_{a_i \in \mathcal{A}_i} \pi_i(a_i|s) = 1 \}$, by fixing an arbitrary policy to agent $j$. Then, by fixing agent $i$ to its optimal policy, we can can directly compute an optimal policy of agent $j$ on 
    $\{ \pi_j | \frac{1}{2} \cdot\norm{\pi_j(\cdot|s) - \pi_j^{bas}(\cdot|s)}_1 \le C, 0 \le \pi_j(a_j|s) \le 1, \sum_{a_j \in \mathcal{A}_j} \pi_j(a_j|s) = 1 \}$. This implies that we can run the second recursion directly on $\mathcal P(\bpi)$ for $S = \emptyset$ and thus solve $\max_{\pi \in \mathcal P(\bpi)}\return(\pi)$. We use these facts in our experiments for the Gridworld environment, where the optimal policies of \agentone~are independent of \agenttwo's policy.
    \item Another specific case is when action spaces $\mathcal{A}_i$ are binary, and in this case, we can directly solve $\max_{\pi \in \mathcal P(\bpi)}\return(\optpir_{S}, \pi_{-S})$. Namely, we can think of this optimization problem as searching for an optimal joint policy in an MMDP where the actions of agents $-S$ have  reduced ``influence''. Since an optimal joint policy in the reduced MMDP is deterministic, the optimal solution to $\max_{\pi \in \mathcal P(\bpi)}\return(\optpir_{S}, \pi_{-S})$ sets $\pi_j(a_j|s)$ of agent $j \in -S$ either to its maximum or its minimum value, $\pi_j^{bas}(a_j|s) + C$ and $\pi_j^{bas}(a_j|s) - C$ respectively. In the former case, this means that agent $j$ chooses $a_j$ in the MMDP with the reduced influence, in the latter, this means that agent $j$ chooses the other action. We use this fact in our experiments for the Graph environment.
\end{itemize}

To conclude, in our experiments we directly solve the optimization problems $\min_{\pi \in \mathcal P(\bpi)} \return(\optpir_{S \cup \{i\}}, \pi_{-S \cup \{i\}})$ and $\max_{\pi \in \mathcal P(\bpi)}\return(\optpir_{S}, \pi_{-S})$ for the Gridworld environment, and $\max_{\pi \in \mathcal P(\bpi)}\return(\optpir_{S}, \pi_{-S})$ for the Graph environment.

\subsection{Max-Efficient Rationality and Average Participation under Uncertainty}\label{sec.ap_mer_uncertainty}

In this section we discuss the implementation of Blackstone consistent $\widehat{\blamefunc}_{MER, BC}(M, \mathcal P(\bpi))$ and $\widehat{\blamefunc}_{AP, BC}(M, \mathcal P(\bpi))$ from Section \ref{sec.sv_under_uncertainty}. We begin with $\widehat{\blamefunc}_{MER, BC}(M, \mathcal P(\bpi))$, which can be obtained by solving the optimization problem \eqref{prob.rationality} with $\inef_S$ replaced by 
$\tilde \inef_S = \min_{\pi \in \mathcal P'(\bpi)} \return (\optpir_S,\pi_{-S}) - \max_{\pi \in \mathcal P'(\bpi)} \return(\pi)$.
A solution to this optimization problem $\widehat{\blame}$ will for at least one solution $\blame$ of \eqref{prob.rationality} (with $\inef_S$) satisfy $\widehat{\blamei} \le \blamei$ for all $i$. In that sense, $\widehat{\blamefunc}_{MER, BC}(M, \mathcal P(\bpi))$  satisfies $\prop_{BC}(\blamefunc_{MER})$ (Blackstone consistency w.r.t. $\blamefunc_{MER}(M, \bpi)$). However, note that $\prop_{BC}(\blamefunc_{MER})$ might not hold if \eqref{prob.rationality} has multiple solutions (e.g., when calculating $\widehat{\blamefunc}_{MER, BC}$ or $\blamefunc_{MER}$) and we consider only one solution  (e.g., obtained through a tie breaking rule).

Let us now consider $\widehat{\blamefunc}_{AP, BC}(M, \mathcal P(\bpi))$. $\widehat{\blame} = \widehat{\blamefunc}_{AP,BC}(M, \mathcal P(\bpi))$ can be implemented as
\begin{align*}
     \widehat{\blamei} = \sum_{S \subseteq \{1, ..., n\} \backslash \{i\}} \weight \cdot \frac{\tilde{c}(M, \mathcal P(\bpi), i)}{|S| + 1} \cdot \tilde \inef_{S \cup \{i\}}, 
\end{align*}
where $\tilde{c}(M, \mathcal P(\bpi), i) = 
\ind{\widehat{\blame}_{SV, i} > 0}$ with $\widehat{\blame}_{SV} = \widehat{\blamefunc}_{SV,BC}(M, \mathcal P(\bpi))$
(see Section \ref{sec.sv_under_uncertainty} for how to calculate $\widehat{\blamefunc}_{SV,BC}$), 
$\weight = \frac{1}{2^n - 1}$ and 
$\tilde \inef_{S \cup \{i\}} = \min_{\pi \in \mathcal P'(\bpi)} \return(\optpir_{S \cup \{i\}},\pi_{-S \cup \{i\}}) - \max_{\pi \in \mathcal P'(\bpi)} \return(\pi)$. 
Here, we used the fact that $c$ (in this case, estimate $\tilde c$) can be obtained via Shapley value (in this case, Blackstone consistent Shapley value).

% !TEX root =  main.tex
%%%%%%%%%%%%%%%%%%%%%%%%%%%%%%%%%%%%% APPENDIX: EXPERIMENTAL SETUP
%%%%%%%%%%%%%%%%%%%%%%%%%%%%%%%%%%%%%

\section{Experimental Setup and Implementation Details}\label{sec.appendix_exp}
In this section, we provide additional information on experimental setup and implementation details.
\subsection{Additional Information on Experimental Setup}\label{sec.add_inf_exp}
\textbf{Environment 1:} The exact penalties and rewards of the Gridworld environment (Fig. \ref{fig: gridworld}) are as follows: $-0.01$ for blank cells and $S$ cells, $-0.02$ for $F$ cells, $-0.5$ for $H$ cells and $+1$ for cell $G$. Moreover, the cost of intervention $C$ is $-0.05$. The size of the environment's state space is $64$ (the state space represents cells of the Gridworld). The action space of agent \agenttwo~is $\{0, 1\}$, which corresponds to \textit{don't intervene} and \textit{intervene}, and the action space of agent \agentone~is $\{0, 1, 2, 3\}$, i.e. \textit{move left}, \textit{move right}, \textit{move up} and \textit{move down}. Note also that the actor remains at the same cell if it takes an action which would take it out of the environment. 
\iftoggle{submission}{\iftrue}{\iffalse}Finally, the specification of the personal policy of agent \agentone~can be found in the source code, and more precisely in function \verb|instantiate_behavior_policy_1()| of \verb|env_gridworld.py|.
\fi

\textbf{Environment 2:} The state space of the Graph environment (Fig. \ref{fig: graph}) is defined by possible distributions of the $4$ agents over the nodes of the graph, $66$ states in total. The action space of each agent is $\{0, 1\}$, and the time-horizon of the environment is $5$. We test multiple variants of this environment, each of which defines a different reward function. In all the variants, the reward at each time-step $t < 4$ is $+1$ if some formation constraint is satisfied and $-1$ if not, at time-step $t=4$ the reward is always $0$. 
Next, we describe in more detail the formation constraints and behavior policies for the Graph environment in the first (Coordination) and the second (Robustness) set of experiments.

\textbf{Coordination:} In the first set of experiments, we assign weights $w_1 = 1$, $w_2 = 2$, $w_3 = 3$ and $w_4 = 4$ to the four agents. We also consider $4$ different formation constraints which are satisfied if $\sum_{i \in \{1, 2, 3, 4\}} w_i \cdot a_i \geq h_m$, where $a_i$ is the action taken by agent $i$ and $h_m$ is a threshold specific to the constraint $m \in \{1, 2, 3, 4\}$.
We consider four thresholds: $h_1 = 1$, $h_2 = 7$, $h_3 = 9$ and $h_4 = 10$. For each constraint $m$ to be satisfied, at least $m$ number of agents need to select action $1$.
Each behavior $\bpi_i$ takes action $0$ in every state.

\textbf{Robustness:} In the second set of experiments, we consider one formation constraint that is satisfied if agents are arranged equally between the two levels of the graph, $\sum_{i \in \{1, 2, 3, 4\}} a_i = 2$. When the agents are in nodes $-1$, $6$, $7$ or $8$, each behavior policy $\bpi_i$ takes each action with $0.5$ probability. In states where agents are balanced between the levels, each behavior policy $\bpi_i$ takes the action from the previous time-step with probability $p_i$; in unbalanced states, the action that leads to the level with the least number of agents is taken with probability $p_i$. We consider $p_i = 1 - (i-1) \cdot 0.2$ for each agent $i \in \{1, 2, 3, 4\}$.

Discount factor $\gamma$ is set to $0.99$ in both environments.

\iftoggle{submission}{\iftrue}{\iffalse}
\subsection{Implementation Details}\label{sec.impl}

The solutions to the evaluation and optimization problems utilized by the blame attribution methods can be computed efficiently using standard (robust) optimization techniques.
In the source code, the solvers of these problems are implemented as the following functions:
\begin{itemize}
    \item policy performance evaluation in function \verb|recursion_1_a()| (see \verb|recursion_graph.py| and \verb|recursion_gridworld.py|).
    \item problem $\argmax_{\pi_S} \return(\pi_S,\bpi_{-S})$ in functions \verb|recursion_1_c()| (see \verb|recursion_graph.py|) and \verb|recursion_1_c_ag1()|, \verb|recursion_1_c_ag2()| (see \verb|recursion_gridworld.py|).
    \item problem $\argmax_{\pi \in \mathcal P'(\bpi)} \return(\pi)$ in function \verb|recursion_2_a()| (see \verb|recursion_graph.py| and \verb|recursion_gridworld.py|).
    \item problem $\min_{\pi \in \mathcal P'(\bpi)} \return(\optpir_{S \cup \{i\}}, \pi_{-S \cup \{i\}})$ in functions \verb|recursion_3_a()| (see \verb|recursion_graph.py|) and \verb|recursion_3_a_ag1()|, \verb|recursion_3_a_ag2()| (see \verb|recursion_gridworld.py|).
    \item problem $\max_{\pi \in \mathcal P'(\bpi)}\return(\optpir_{S}, \pi_{-S})$ in functions \verb|recursion_3_b()| (see \verb|recursion_graph.py|) and \verb|recursion_3_b_ag1()|, \verb|recursion_3_b_ag2()| (see \verb|recursion_gridworld.py|).
\end{itemize}
\fi

\subsection{Solutions to the Optimization Problem \eqref{prob.rationality}}

\eqref{prob.rationality} might have multiple optimal solutions. Therefore, when calculating $\blamefunc_{MER}$ (Section \ref{sec.efficient_rationality}) or $\widehat{\blamefunc}_{MER, BC}$ (Section \ref{sec.sv_under_uncertainty} and Appendix \ref{sec.app_uncertainty}), a way to decide which solution is going to be the blame assignment output is needed. For the experiments on the Gridworld environment the optimal solution assigning the maximum blame to \agenttwo~was always selected. For the experiments on the Graph environment, an LP solver was applied: in the case of the Graph environment, our experiments only require the total blame assigned to the agents so any optimal solution to the LP produces the same results (see below). 

\textbf{$\bm{L_1}$ Distance:} For the Max-Efficient Rationality method in Fig. \ref{fig: gridworld-distance} and \ref{fig: graph-distance} of Section \ref{sec.experiments}, we consider
the $L_1$ distance between an output $\widehat{\blame}$ of the consistent method $\widehat{\blamefunc}_{MER, BC}$ and an output $\blame$ of $\blamefunc_{MER}$, such that $\widehat{\blamei} \le \blamei$ for all $i$. Notice that the $L_1$ distance between any two such blame assignments is equal to their difference in total blame, $\sum_{i \in \{1, ..., n\}} |\blamei - \widehat{\blamei}| = \sum_{i \in \{1, ..., n\}}\blamei - \sum_{i \in \{1, ..., n\}}\widehat{\blamei}$. Notice also that the total blame $\sum_{i \in \{1, ..., n\}}\blamei$ (resp. $\sum_{i \in \{1, ..., n\}}\widehat{\blamei}$) is the same for all optimal solutions $\blame$ (resp. $\widehat{\blame}$) of \eqref{prob.rationality} with $\inef_S$  (resp. $\tilde \inef_S$), since they maximize the same objective. Hence, for obtaining the $L_1$ distance between the output of the consistent method $\widehat{\blamefunc}_{MER, BC}$ and its ``targeted assignment'', it suffices to compute the difference $\sum_{i \in \{1, ..., n\}}\blamei - \sum_{i \in \{1, ..., n\}}\widehat{\blamei}$ for any two optimal solutions $\blame$ and $\widehat{\blame}$.

\textbf{Total Blame:} The total blame assigned by the Max-Efficient Rationality method in each of the figures \ref{fig: graph-coordination}, \ref{fig: gridworld-total-blame} and \ref{fig: graph-total-blame} of Section \ref{sec.experiments} remains the same for all the optimal solutions of \eqref{prob.rationality}.

\subsection{Total Amount of Compute and Type of Resources}\label{sec.time}

All experiments were run on a personal laptop (with Intel Core i7-8750H CPU). Experiments were also run multiple times for $10$ different seeds, and we report averages and standard deviations. The total running time of the experiments on the Gridworld environment is a few minutes ($\sim$10) and of the  experiments on the Graph environment a few hours ($\sim$3).
Tables \ref{tab.time_grid} and \ref{tab.time_graph} show how much (CPU) time it takes to compute 
Shapley value under uncertainty (using the approaches from Section \ref{sec.blame_under_uncertainty}), for $\epsilon_{max} = \{0.01, 0.05, 0.1, 0.15, 0.2\}$. 
Note that $\blamefunc_{SV}$ does not depend on $\epsilon_{max}$---its running time for the Gridworld environment is $0.453125\pm0.19111$ sec and for the Graph environment is $2.02187\pm0.06853$ sec. 

\captionsetup[table]{skip=5pt}
\renewcommand{\arraystretch}{2}
\begin{table}[ht]
    \begin{center}
        %\resizebox{1\textwidth}{!}{
            \begin{tabular}{|c|c|c|c|c|}
                \hline
                 & $\widehat \blamefunc_{SV}$ & $\widehat \blamefunc_{SV,V}$ & $\widehat \blamefunc_{SV,BC}$\\
                \hline
                $\epsilon_{max} = 0.05$
                & $0.45625\pm0.19848$ & $1.19843\pm0.51044$ & $1.38906\pm0.60939$ \\
                \hline
                $\epsilon_{max} = 0.10$
                & $0.46093\pm0.20049$ & $1.21093\pm0.55609$ & $1.45781\pm0.71592$ \\
                \hline
                $\epsilon_{max} = 0.15$
                & $0.47187\pm0.20925$ & $1.14062\pm0.51864$ & $1.45468\pm0.66985$ \\
                \hline
                $\epsilon_{max} = 0.20$ 
                & $0.46093\pm0.22011$ & $1.20937\pm0.60934$ & $1.72500\pm0.96822$ \\
                \hline
            \end{tabular}
        %}
    \captionof{table}{Running times of different approaches for SV under uncertainty on the Gridworld environment. All times are measured in seconds (sec). 
    \label{tab.time_grid}}
    \end{center}
\end{table}
\renewcommand{\arraystretch}{1}

\captionsetup[table]{skip=5pt}
\renewcommand{\arraystretch}{2}
\begin{table}[ht]
    \begin{center}
        %\resizebox{1\textwidth}{!}{
            \begin{tabular}{|c|c|c|c|}
                \hline
                 & $\widehat \blamefunc_{SV}$ & $\widehat \blamefunc_{SV,V}$ & $\widehat \blamefunc_{SV,BC}$\\
                \hline
                $\epsilon_{max} = 0.01$ &
                $2.06250\pm0.10892$ & $3.80625\pm0.13243$ & $92.38750\pm1.59190$ \\
                \hline
                $\epsilon_{max} = 0.05$ &
                $2.07031\pm0.18077$ & $3.91718\pm0.18944$ & $91.60000\pm1.43320$ \\
                \hline
                $\epsilon_{max} = 0.10$ &
                $1.97500\pm0.05466$ & $3.84218\pm0.12798$ & $92.84375\pm3.45815$ \\
                \hline
            \end{tabular}
        %}
        \captionof{table}{Running times of different approaches for SV under uncertainty on the Graph environment. All times are measured in seconds (sec). 
        \label{tab.time_graph}
        }
    \end{center}
\end{table}
\renewcommand{\arraystretch}{1}

$\widehat \blamefunc_{SV,BC}$ has the largest computing time, while $\blamefunc_{SV}$ and $\widehat \blamefunc_{SV}$ have  the lowest computing times. These results are not surprising given that $\blamefunc_{SV}$ and $\widehat \blamefunc_{SV}$ only need to compute the values once and they are not running robust optimization. Moreover, $\widehat \blamefunc_{SV,BC}$ solves $\min_{\pi \in \mathcal P'(\bpi)} \return(\optpir_{S \cup \{ i \}}, \pi_{-S\cup\{i\}})$ and $\max_{\pi \in \mathcal P'(\bpi)}\return(\optpir_{S}, \pi_{-S})$ for each $S$ separately,
unlike $\widehat \blamefunc_{SV,V}$, which only requires robust optimization for finding a solution to the optimization problem $\argmax_{\pi \in \mathcal P'(\bpi)} \return(\pi)$.
The running times of methods that compute $\widehat \blamefunc_{SV}$ do not appear to have  strong dependency on $\epsilon_{max}$. This is expected for $\widehat \blamefunc_{SV}$ since it is based on point estimates, and does not use robust optimization.

Note that the computation results obtained when calculating  the aforementioned Shapley value blame assignments can be reused in computing the blame assignments of the other blame attribution methods, which we do in our experiments.

% !TEX root =  main.tex
%%%%%%%%%%%%%%%%%%%%%%%%%%%%%%%%%%%%% APPENDIX: Different Perspectives on Blame Attribution
%%%%%%%%%%%%%%%%%%%%%%%%%%%%%%%%%%%%%

\section{Extended Discussion}\label{sec.disc}
This section of the appendix discusses different perspectives on blame attribution, and the potential negative side-effects of under-blaming agents.

\subsection{Different Perspectives on Blame Attribution}

\textbf{Consequentialism:} In this paper we follow a \textit{consequentialist} \cite{blackburn2005oxford} approach to the blame attribution problem, in the sense that we consider the amount of an agent's blame to depend solely on the outcome of its policy. More specifically, we consider blame attribution methods and desirable properties that measure how good or bad an agent's policy is based only on the inefficiency it causes to the multi-agent system.\footnote{This is well-aligned with the main idea of \textit{utilitarianism} \cite{blackburn2005oxford}, which measures how good or bad an action is based only on the overall utility of its consequences.} A common objection to this type of approaches is that they do not blame an agent for violating common ground rules, i.e. they concentrate only on the ends rather than the means \cite{scheffler1988consequentialism}. For example, consider an intersection accident scenario that involves two drivers: the first driver, $D_1$, proceeds north and the second driver $D_2$ proceeds east, both of them drive below the speed limit. Assume that $D_2$ violates a stop sign but could not do anything different to avoid the accident, while if $D_1$ would drive above the speed limit then with high probability the accident would have been avoided. According to consequentialism, in this example driver $D_1$ deserves more blame than $D_2$, although $D_2$ is the one that breaks the law.
    
\textbf{Deontology:} Consequentialism is often contrasted to another major approach in normative ethics, \textit{deontology} \cite{blackburn2005oxford, kant2020groundwork}. From a deontological perspective, the quality of an agent's policy is based on how well it follows a clear set of rules or duties\footnote{Deontology takes root from the Greek word \textit{deon}, which means duty.}, rather than its consequences. Therefore, a deontological approach to blame attribution would assign more blame to the second driver, from the example above, because they violate a well-known traffic regulation. Of course, deontological approaches face criticism too, for instance people argue that deontological ethics are rigid---they focus on rules, ignoring the (potentially) severe consequences of one's behavior \cite{anscombe1958modern}. For instance, avoiding a car crash may be more important than not violating the speed limit in the example above.

The problem of assigning blame is inherently multi-dimensional and can be viewed through both deontological and consequentialist lenses (among others). In this paper we take a consequentialist viewpoint because it provides clear and practical guidance, at least when estimating (counterfactual) outcomes is plausible. However, we do not see the two normative ethical theories as mutually exclusive \cite{moore2010placing}, and thus our intention is not to replace deontological approaches, but to complement them.

\subsection{Under-Blaming Agents}

Apart from serving justice, blame attribution is also important for incentivizing decision makers to adopt policies that will minimize the system's inefficiency. To that end, we introduce in Section \ref{sec.properties} the performance monotonicity property, the purpose of which is to motivate agents to individually improve their policies. The second property we introduce, Blackstone consistency, aims to ensure that no agent will be over-blamed when the behavior policies are not fully known to the blame attribution procedure. As expected, experimental results from Section \ref{sec.experiments} show that Blackstone consistent methods end up under-blaming agents instead. Just like over-blaming, under-blaming has its own adverse effects. Such an effect is incentivizing bad behaviors, since the agents receive reduced penalties. Therefore, there seems to be a trade-off between ensuring that no one is unjustly blamed under uncertainty and providing incentives for good behavior.
% !TEX root =  main.tex
%%%%%%%%%%%%%%%%%%%%%%%%%%%%%%%%%%%%% APPENDIX: PROOFS
%%%%%%%%%%%%%%%%%%%%%%%%%%%%%%%%%%%%%

\section{Proofs of the Propositions from Section \ref{sec.approaches_to_blame}}\label{sec.prop_proofs_123}
This section of the appendix contains the proofs of the propositions from Section \ref{sec.approaches_to_blame}, in particular: Proposition \ref{prop.mer_properties}, Proposition \ref{prop.mc_properties}, and Proposition \ref{prop.imposs}. 

\subsection{Proof of Proposition \ref{prop.mer_properties}}

\addtocounter{proposition}{-6}
\begin{proposition}
Every solution to the optimization problem \eqref{prob.rationality}, i.e., $\blamefunc_{MER}$, satisfies $\prop_V$ (validity), $\prop_R$ (rationality) and $\prop_I$ (invariance).
\end{proposition}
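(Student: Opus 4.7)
All three properties should fall out of the feasibility constraints of \eqref{prob.rationality} combined with two routine observations about the marginal inefficiencies $\inef_S$ at the extreme subsets $S = \emptyset$ and $S = \{1,\dots,n\}$, plus the non-negativity of blame built into the codomain $\mathbb{R}_{\ge 0}^n$ of $\blamefunc$. So the plan is: (i) record the two boundary identities $\inef_{\{1,\dots,n\}} = \inef$ and $\inef_{\emptyset} = 0$; (ii) read off validity and rationality as immediate consequences of specific constraints; (iii) deduce invariance by instantiating a constraint and appealing to non-negativity.

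\textbf{Validity and rationality.} First I would note that $\inef_{\{1,\dots,n\}} = \inef$, because when $S = \{1,\dots,n\}$ the complement $\bpi_{-S}$ is empty, so $\optpirb_{\{1,\dots,n\}} \in \argmax_{\pi_S}\return(\pi_S,\bpi_{-S})$ reduces to $\argmax_{\pi}\return(\pi) \ni \optpi$; hence $\inef_{\{1,\dots,n\}} = \return(\optpi) - \return(\bpi) = \inef$. Rationality $\prop_R$ then holds trivially since the constraints $\sum_{i\in S}\blamei \le \inef_S$ for all $S$ are exactly the definition of $\prop_R$, and any feasible (in particular, optimal) $\blame$ of \eqref{prob.rationality} satisfies them. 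Validity $\prop_V$ is the rationality constraint instantiated at $S = \{1,\dots,n\}$ together with the identity just established.

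\textbf{Invariance.} For $\prop_I$, suppose agent $i$ satisfies $\inef_{S\cup\{i\}} = \inef_S$ for every $S \subseteq \{1,\dots,n\}\setminus\{i\}$. Taking $S = \emptyset$ and using the identity $\inef_\emptyset = \return(\optpirb_\emptyset,\bpi) - \return(\bpi) = 0$, I get $\inef_{\{i\}} = 0$. The rationality constraint of \eqref{prob.rationality} at $S = \{i\}$ then forces $\blamei \le 0$. Since any output of $\blamefunc$ lives in $\mathbb{R}_{\ge 0}^n$ (and the LP is understood with $\blame \ge 0$ implicit, consistent with the codomain), this combines with $\blamei \ge 0$ to give $\blamei = 0$, as required.

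\textbf{Main obstacle.} There is no real technical obstacle; everything reduces to unfolding definitions. The only subtlety worth being explicit about is that the non-negativity of $\blame$ is necessary to close the invariance argument, and that this non-negativity is built into the very definition of a blame attribution method $\blamefunc: \mathcal M \times \polspace \rightarrow \mathbb{R}_{\ge 0}^n$ rather than being an explicit constraint written in \eqref{prob.rationality}. I would state this once at the start of the proof to avoid ambiguity. The claim holds for every optimizer (not merely some optimizer) because validity and rationality follow from feasibility alone, and invariance forces $\blamei = 0$ pointwise for any feasible $\blame$.
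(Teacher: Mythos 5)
Your proof is correct and, for $\prop_V$ (validity) and $\prop_R$ (rationality), coincides with the paper's: both are read off directly from the constraints of \eqref{prob.rationality}, using the identity $\inef_{\{1,\dots,n\}} = \inef$ (which you justify slightly more explicitly than the paper does). The one place where you genuinely diverge is the final step of $\prop_I$ (invariance). Both arguments first derive $\blamei \le \inef_{\{i\}} = \inef_{\emptyset} = 0$ from the constraint at $S=\{i\}$; to upgrade this to $\blamei = 0$ you invoke non-negativity of the blame vector, imported from the codomain $\mathbb{R}_{\ge 0}^n$ in the definition of a blame attribution method, whereas the paper stays entirely inside the linear program: it observes that, under the hypothesis $\inef_{S\cup\{i\}} = \inef_S$, every constraint containing $i$ is implied by the corresponding constraint without $i$ once $\blamei \le 0$, so the constraints involving $i$ collapse to $\blamei \le 0$ alone, and since $\blamei$ appears in the maximized objective, optimality forces $\blamei = 0$. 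The paper's route is somewhat more robust: \eqref{prob.rationality} does not list $\blame \ge 0$ as a constraint, and without it a feasible point can have negative coordinates, so your argument would only yield $\blamei \le 0$ for feasible points; the optimality argument closes this without any extra convention. Your route is shorter and is legitimate under your stated (and defensible) reading that non-negativity is implicit in the LP; you are right to flag that this convention is doing real work, and stating it up front, as you propose, is exactly the right move.
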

\begin{proof}
We prove the properties as follows:
\begin{itemize}
    \item \underline{$\prop_V$ (validity):} Consider $M$, $\bpi$. Every solution to the optimization problem \eqref{prob.rationality}, i.e., $\blame = \blamefunc_{MER}(M, \bpi)$, satisfies the constraint $\sum_{i\in \{1, ..., n\}} \blamei \le \inef_{\{1, ..., n\}}$. The last inequality can be rewritten as $\sum_{i = 1}^n \blamei \le \inef$, and hence property $\prop_V$ (validity) is satisfied.
    \item \underline{$\prop_R$ (rationality):} Consider $M$, $\bpi$ and $S \subseteq \{1, ..., n\}$. Every solution to the optimization problem \eqref{prob.rationality}, i.e., $\blame = \blamefunc_{MER}(M, \bpi)$, satisfies the constraint $\sum_{i\in S} \blamei \le \inef_S$, and hence property $\prop_R$ (rationality) is satisfied.
    \item \underline{$\prop_I$ (invariance):} Consider $M$, $\bpi$, and an agent $i$ 
    such that $\inef_{S \cup \{i \}} = \inef_{S}$ for all $S$. This implies that $\inef_i = \inef_{\emptyset} = 0$.
    Now, due to the constraints of the optimization problem \eqref{prob.rationality}, every solution to the optimization problem \eqref{prob.rationality}, i.e., $\blame = \blamefunc_{MER}(M, \bpi)$, satisfies the constraint $\blamei \le \inef_i = 0$. Note also that $\sum_{j \in S} \beta_j \le \inef_{S \cup \{i \}} - \beta_i = \inef_{S} - \beta_i$, but also $\sum_{j \in S} \beta_j \le \inef_{S}$ (where $i \notin S$). Therefore, the constraints in which agent $i$ participates can be replaced by the the constraint $\blamei \le 0$. Together with the fact that the objective function is the total blame, this implies that the optimal $\blamei$ is independent of $\blamej$ ($j \ne i$), and furthermore that its value is equal to $\blamei = 0$.
    Hence, property $\prop_I$ (invariance) is satisfied.
\end{itemize}
\end{proof}

\subsection{Proof of Proposition \ref{prop.mc_properties}}

\begin{proposition}
$\blamefunc_{MC}(M, \bpi) = (\inef_1, ..., \inef_n)$ satisfies $\prop_S$ (symmetry), $\prop_I$ (invariance), $\prop_{CM}$ (contribution monotonicity) and $\prop_{PerM}$ (performance monotonicity).
\end{proposition}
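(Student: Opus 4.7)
The key observation is that $\blamefunc_{MC}$ assigns agent $i$ the quantity $\beta_i = \inef_i = \inef_{\{i\}}$, which depends on a single marginal inefficiency rather than on the full family $\{\inef_S\}$. Consequently, each of the four properties collapses to a statement about $\inef_{\{i\}}$ alone, and three of them follow simply by instantiating the hypothesis at $S = \emptyset$.

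\emph{Symmetry, invariance, and contribution monotonicity.} For $\prop_S$, if $\inef_{S \cup \{i\}} = \inef_{S \cup \{j\}}$ for every $S \subseteq \{1,\ldots,n\}\setminus\{i,j\}$, then in particular (taking $S=\emptyset$) $\inef_{\{i\}} = \inef_{\{j\}}$, so $\beta_i = \beta_j$. For $\prop_I$, if $\inef_{S \cup \{i\}} = \inef_S$ for every $S$, then taking $S=\emptyset$ gives $\inef_{\{i\}} = \inef_{\emptyset} = 0$ (since $\inef_\emptyset = \return(\bpi)-\return(\bpi)=0$), hence $\beta_i = 0$. For $\prop_{CM}$, if $\inef^1_{S \cup \{i\}} - \inef^1_S \geq \inef^2_{S \cup \{i\}} - \inef^2_S$ for all $S$, then taking $S=\emptyset$ yields $\inef^1_{\{i\}} \geq \inef^2_{\{i\}}$, i.e. $\beta^1_i \geq \beta^2_i$.

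\emph{Performance monotonicity.} Fix $\bpi_{-i}$ and policies $\pi_i, \pi_i'$ with $\return(\pi_i,\bpi_{-i}) \le \return(\pi_i',\bpi_{-i})$. Since the set $\argmax_{\pi_i''} \return(\pi_i'',\bpi_{-i})$ depends only on $\bpi_{-i}$, the quantity $\max_{\pi_i''}\return(\pi_i'',\bpi_{-i})$ is the same whether we treat the behavior policy as $(\pi_i,\bpi_{-i})$ or $(\pi_i',\bpi_{-i})$. Writing $V^* := \max_{\pi_i''}\return(\pi_i'',\bpi_{-i})$, we have $\beta_i = V^* - \return(\pi_i,\bpi_{-i})$ and $\beta_i' = V^* - \return(\pi_i',\bpi_{-i})$, so the hypothesis immediately gives $\beta_i \ge \beta_i'$.

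\textbf{Main obstacle.} There is essentially no obstacle: because $\blamefunc_{MC}$ ignores all coalitional information beyond singletons, every property reduces either to a trivial $S=\emptyset$ substitution or, in the case of $\prop_{PerM}$, to the elementary observation that the optimum $V^*$ is fixed once $\bpi_{-i}$ is fixed. The only care required is to note that $\inef_\emptyset = 0$ (used for $\prop_I$) and that the optimal policy for agent $i$ in the definition of $\inef_i$ depends only on $\bpi_{-i}$ (used for $\prop_{PerM}$); both are immediate from the definitions in Section~\ref{sec.blame_assignment}.
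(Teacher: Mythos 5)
Your proof is correct and follows essentially the same route as the paper's: the first three properties are obtained by instantiating the hypotheses at $S = \emptyset$, and $\prop_{PerM}$ follows from the fact that $\return(\optpirb_i,\bpi_{-i})$ is unchanged when agent $i$ switches from $\pi_i$ to $\pi_i'$ with $\bpi_{-i}$ fixed. Your explicit remark that the optimum $V^*$ depends only on $\bpi_{-i}$ makes precise a step the paper leaves implicit, but the argument is otherwise identical.
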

\begin{proof}
We prove the properties as follows:
\begin{itemize}
    \item \underline{$\prop_S$ (symmetry):} Consider $M$, $\bpi$, and agents $i$ and $j$ such that  $\inef_{S \cup \{i \}} = \inef_{S \cup \{j \}}$ for all $S \subseteq \{1, ..., n\} \backslash \{i, j\}$. Notice that $\inef_i = \inef_j$.  By using the definition of $\blame = \blamefunc_{MC}(M, \bpi)$, we have that $\blamei = \inef_i = \inef_j = \blamej$. Hence, property $\prop_S$ (symmetry) is satisfied.
    \item \underline{$\prop_I$ (invariance):} Consider $M$, $\bpi$, and agent $i$
    such that $\inef_{S \cup \{i \}} = \inef_{S}$ for all $S$. Given the definition of $\blame = \blamefunc_{MC}(M, \bpi)$, this implies that $\blamei = \inef_i = \inef_{\emptyset} = 0$. Hence, property $\prop_I$ (invariance) is satisfied.
    \item \underline{$\prop_{CM}$ (contribution monotonicity):} Consider $M^1$, $\bpi^1$, $M^2$, $\bpi^2$, and agent $i$ such that $\inef^1_{S \cup \{i \}} - \inef^1_S \geq \inef^2_{S \cup \{i \}} - \inef^2_S$ for all $S$. By using the definitions of $\blame^1 = \blamefunc_{MC}(M^1, \bpi^1)$ and $\blame^2 = \blamefunc_{MC}(M^2, \bpi^2)$, we have that $\blamei^1 = \inef^1_i = \inef^1_{\emptyset \cup \{ i \}} - \inef^1_{\emptyset} \geq \inef^2_{\emptyset \cup \{ i \}} - \inef^2_{\emptyset}= \inef^2_i = \blamei^2$. Hence, property $\prop_{CM}$ (contribution monotonicity) is satisfied.
    \item \underline{$\prop_{PerM}$ (performance monotonicity):} Consider $M$, $\bpi_{-i}$, $\pi_i$ and $\pi_i'$ such that $\return(\pi_i,\bpi_{-i}) \le \return(\pi_i',\bpi_{-i})$. This implies that:
    \begin{align*}
        & \return(\pi_i,\bpi_{-i}) \le \return(\pi_i',\bpi_{-i}) \Rightarrow\\
        \Rightarrow&  \return(\optpirb_i,\bpi_{-i}) - \return(\pi_i,\bpi_{-i}) \ge \return(\optpirb_i,\bpi_{-i}) - \return(\pi_i',\bpi_{-i}) \Rightarrow\\
        \Rightarrow& \inef_i \ge \inef'_i.
    \end{align*}
    By using the definitions of $\blame = \blamefunc_{MC}(M, (\pi_i,\bpi_{-i}))$ and $\blame' = \blamefunc_{MC}(M, (\pi_i',\bpi_{-i}))$, we obtain that $\blamei = \inef_i \ge \inef'_i = \blamei'$. Hence, property $\prop_{PerM}$ (performance monotonicity) is satisfied.
\end{itemize}
\end{proof}

\subsection{Proof of Proposition \ref{prop.imposs}}

\begin{proposition}
No blame attribution method $\blamefunc$ satisfies $\prop_E$ (efficiency), $\prop_S$ (symmetry), $\prop_I$ (invariance) and $\prop_{PerM}$ (performance monotonicity).
\end{proposition}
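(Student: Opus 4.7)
The plan is to derive a contradiction by exhibiting a single MMDP $M$ together with two joint behavior policies $\bpi^A$ and $\bpi^B$ that differ only in agent $1$'s component, and showing that the four properties force incompatible values for $\blame_1$.

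I would take a one-step, three-agent MMDP in which each agent has actions $\{0,1\}$ and the reward satisfies $R(a_1,a_2,a_3)=1$ exactly when $(a_1,a_2,a_3)$ is a permutation of $(1,1,0)$, and $0$ otherwise. In configuration $A$, set $\pi_i^A = 0$ for all $i$, so $\return(\bpi^A)=0$ and $\inef^A = 1$. A direct enumeration gives $\inef_{\{i\}}^A = 0$ for every singleton (no single deviation from $(0,0,0)$ can reach a permutation of $(1,1,0)$) and $\inef_{\{i,j\}}^A = 1$ for every pair (any pair can complete the profile $(1,1,0)$ with the third agent at $0$). Since this data is permutation-symmetric across agents, $\prop_S$ combined with $\prop_E$ pins down $\blame_1^A = \blame_2^A = \blame_3^A = 1/3$.

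For configuration $B$, I would flip only agent $1$'s action to $\pi_1^B = 1$, keeping $\pi_2^B = \pi_3^B = 0$. The crucial observation is that the return is unchanged: $R(1,0,0) = 0 = R(0,0,0)$. Because $\return(\pi_1^A,\bpi_{-1})$ and $\return(\pi_1^B,\bpi_{-1})$ coincide, the defining inequality of $\prop_{PerM}$ applies in both directions, forcing $\blame_1^B = \blame_1^A = 1/3$. At the same time, with agent $1$ now at $1$, the rewarded profiles $(1,1,0)$ and $(1,0,1)$ are already reachable by deviations of agents $2$ and $3$ alone, and a quick check shows $\inef_{S\cup\{1\}}^B = \inef_S^B$ for every $S\subseteq\{2,3\}$ (both equal $0$ when $S=\emptyset$ and both equal $1$ in the other three cases). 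Hence agent $1$ is a dummy in $\bpi^B$, and $\prop_I$ yields $\blame_1^B = 0$, contradicting $\blame_1^B = 1/3$.

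The main obstacle, and really the heart of the construction, is engineering a unilateral policy change that is \emph{useless} for the joint return yet reorganises the coalitional structure enough to move an agent from a symmetric contributor (where $\prop_E$ and $\prop_S$ force a strictly positive share) to a dummy (where $\prop_I$ forces zero). The indicator reward for permutations of $(1,1,0)$ does exactly this: with the other two agents at $0$, nothing agent $1$ does on its own can complete a rewarded profile, but by committing to $1$ agent $1$ transfers all remaining ``completion work'' onto the pair $\{2,3\}$, making itself redundant without changing the return.
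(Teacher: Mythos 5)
Your proof is correct, and it reaches the contradiction by a genuinely different mechanism than the paper's. The paper's proof uses a two-agent, three-action one-step MMDP in which agent $1$ switches to a policy with a \emph{strictly higher} return ($0 \to 0.9$); the switch turns the \emph{other} agent into a dummy, so $\prop_I$ zeroes out agent $2$ and $\prop_E$ then dumps the entire (still positive) inefficiency onto agent $1$, raising its blame from $1$ to $1.1$ and violating $\prop_{PerM}$ directly. You instead use a three-agent, binary-action example where the unilateral switch leaves the return \emph{unchanged}, so that $\prop_{PerM}$ applied in both directions forces $\blame_1^B = \blame_1^A = 1/3$, while the switch turns agent $1$ \emph{itself} into a dummy so that $\prop_I$ forces $\blame_1^B = 0$. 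Both arguments use all four properties; yours has the aesthetic advantage of a fully symmetric starting configuration (so $\prop_S$ and $\prop_E$ pin down the blames without computation) and of needing $\prop_E$ only in one of the two configurations, whereas the paper's needs it in both. The one thing the paper's construction buys that yours does not is coverage of the two-agent case: since a blame attribution method is defined for a class of MMDPs with a fixed number of agents $n$, your example establishes the impossibility for $n = 3$ (and, by padding with dummy agents, for $n > 3$), but not for $n = 2$; if you want the result in full generality you should note how to handle $n = 2$, e.g., by falling back on a construction like the paper's.
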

\begin{proof}
We prove the stated impossibility result by contradiction. Suppose that there is a blame attribution method $\blamefunc$ that satisfies $\prop_E$ (efficiency), $\prop_S$ (symmetry), $\prop_I$ (invariance) and $\prop_{PerM}$ (performance monotonicity). 

Consider an MMDP $M$ with two agents $\{1, 2\}$, two states---the initial state and the terminal state---and the action space $\mathcal{A} = \{0, 1, 2\} \times \{0, 1, 2\}$. In the initial state,
the agents obtain zero reward when they both take action $0$, reward equal to $2$ when one of them takes action $0$ and the other one action $2$ or they both take action $2$, and reward equal to $0.9$ when they take any other pair of actions. After the agents perform their actions in the initial state, the MMDP transitions to the terminal state. Consider also the deterministic policies: $\bpi_2$ that takes action $0$, $\pi_1$ that takes action $0$ and $\pi'_1$ that takes action $1$, in the initial state. 

We have the following three observations:
\begin{itemize}
\item Note that $\return(\pi_1,\bpi_2) \le \return(\pi_1',\bpi_2)$ and hence from property $\prop_{PerM}$ (performance monotonicity) we have that $\blame_1 \ge \blame'_1$, where $\blame = \blamefunc(M, (\pi_1,\bpi_2))$ and $\blame' = \blamefunc(M, (\pi_1', \bpi_2))$.

\item Note that $\inef_{\{1\}} = \inef_{\{2\}} = 2$ and thus from property $\prop_S$ (symmetry) it follows that $\blame_1 = \blame_2$. Also, from property $\prop_E$ (efficiency) we have that $\blame_1 + \blame_2 = \inef = 2$, and hence $\blame_1 = 1$ and $\blame_2 = 1$.

\item Note that $\inef'_{\{2\}} = 0$ and $\inef'_{\{1, 2\}} = \inef'_{\{1\}} = 1.1$ and thus from property $\prop_I$ (invariance) it follows that $\blame'_2 = 0$. From property $\prop_E$ (efficiency) we have that $\blame'_1 + \blame'_2 = \inef' = 1.1$ and hence $\blame'_1 = 1.1$, which contradicts the first two observations.
\end{itemize}
\end{proof}

\section{Proof of Theorem \ref{thm.shapley_value.unique}}\label{sec.proof_sv}

In this section, we provide a proof of Theorem \ref{thm.shapley_value.unique}. Since this proof utilizes the results of \cite{pinter2015young}, we first provide some background details on these results.

\subsection{Background}\label{sec.background_proof_sv}

To prove the uniqueness result for the Shapley Value method, Theorem \ref{thm.shapley_value.unique}, we use a result from \cite{pinter2015young}. Before we embark on the proof, we set the necessary background. Let $N$ be a set, such that $N \neq \emptyset$, $|N| < \infty$, and $u: 2^N \rightarrow \mathbb{R}$ be a function such that $u(\emptyset) = 0$. Then we call $N$ set of agents and $u$ game, and denote with $\mathcal{G}^N$ the class of games with player set $N$. We say that a game $u \in \mathcal{G}^N$ is \textit{monotone}, if for each $S, T \subseteq N$, $S \subseteq T$; $u(S) \le u(T)$. Moreover, we say that function $\psi:G \rightarrow \mathbb{R}^N$ is a solution on the class $G \in \mathcal{G}^N$. Next, we state three axioms from \cite{pinter2015young}:
\begin{itemize}
    \item {\em Pareto Optimality (PO)}: We say that a solution $\psi$ on class of games $G \subseteq \mathcal{G}^N$ satisfies \textit{PO} (Pareto optimality), if for each game $u \in G$: $\sum_{i \in N} \psi_i(u) = u(N)$.
    \item {\em Equal Treatment Property (ETP)}: We say that a solution $\psi$ on class of games $G \subseteq \mathcal{G}^N$ satisfies \textit{ETP} (equal treatment property), if for each game $u \in G$ and $i,j \in N$; $\psi_i(u) = \psi_j(u)$, whenever $u(S\cup\{i\}) - u(S) = u(S\cup\{j\}) - u(S)$ for every $S \subseteq N \backslash \{i,j\}$.
    \item {\em Marginality (M)}: We say that a solution $\psi$ on class of games $G \subseteq \mathcal{G}^N$ satisfies \textit{M} (marginality), if for all games $u, v \in G$ and $i \in N$: $\psi_i(u) = \psi_i(v)$, whenever $u(S\cup\{i\}) - u(S) = v(S\cup\{i\}) - v(S)$ for every $S \subseteq N$.
\end{itemize}
We also define the Shapley value method for this setting. For any game $u \in \mathcal{G}^N$, the Shapley value solution $\phi$ is given by
\begin{align}\label{eq.def_sv_coal}
     \phi_i(u) = \sum_{S \subseteq N \backslash \{i\}} \weight_S \cdot \left [ u(S\cup\{i\}) - u(S) \right ],   
\end{align}
where coefficients $\weight_S$ are set to $\weight_S = \frac{|S|! ( |N| - |S| - 1)!}{|N|!}$.

Next we restate Theorem 3.9 from \cite{pinter2015young}:
\begin{theorem}\label{thm.pinter_sv_uniqueness}
Solution $\psi$ defined on the class of monotone games satisfies axiom \textit{PO} (Pareto optimality), \textit{ETP} (equal treatment Property) and \textit{M} (marginality), iff it is the Shapley value solution.
\end{theorem}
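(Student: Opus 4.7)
The plan is to split the iff into its two directions. For the ``if'' direction, I would verify that the Shapley value $\phi$ defined by \eqref{eq.def_sv_coal} satisfies the three axioms directly from the formula: PO follows from the telescoping identity
\[
    \sum_{i \in N} \phi_i(u)
    = \sum_{i \in N} \sum_{S \subseteq N \setminus \{i\}} \weight_S \bigl[u(S \cup \{i\}) - u(S)\bigr]
    = u(N),
\]
where the binomial coefficients collapse in the standard way; ETP is immediate because the formula is symmetric in the marginal contributions $u(S \cup \{i\}) - u(S)$; and M is built in, since $\phi_i(u)$ depends on $u$ only through these marginal increments.

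For the harder ``only if'' direction, I would adapt Young's marginality induction. Define the complexity index $d(u) = |\{\emptyset \ne S \subseteq N : u(S) > 0\}|$ and prove $\psi(u) = \phi(u)$ by induction on $d(u)$. In the base case $d(u) = 0$ the game is identically zero; ETP forces all coordinates of $\psi(u)$ to coincide, PO forces their sum to $u(N) = 0$, so $\psi(u) = 0 = \phi(u)$. For the inductive step, the strategy is: for each player $i$, construct a monotone game $v^{(i)}$ with $d(v^{(i)}) < d(u)$ whose marginal contributions to $i$ agree with those of $i$ in $u$, i.e., $v^{(i)}(S \cup \{i\}) - v^{(i)}(S) = u(S \cup \{i\}) - u(S)$ for every $S$. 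Then M gives $\psi_i(u) = \psi_i(v^{(i)})$, the induction hypothesis gives $\psi_i(v^{(i)}) = \phi_i(v^{(i)})$, and M applied to $\phi$ gives $\phi_i(v^{(i)}) = \phi_i(u)$, pinning down $\psi_i(u) = \phi_i(u)$. A natural candidate is to pick an inclusion-minimal $T \in B(u) = \{S : u(S) > 0\}$ with $i \notin T$ and subtract $u(T)$ from $u(S)$ for every $S \supseteq T$; because $i \notin T$, the events $S \supseteq T$ and $S \cup \{i\} \supseteq T$ coincide, so $i$'s marginals are preserved, and the reduced game has strictly smaller index.

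The main obstacle will be that this naive subtraction can destroy monotonicity: if $S_1 \subsetneq S_2$ with $T \not\subseteq S_1$ but $T \subseteq S_2$, the value at $S_1$ is unchanged while the value at $S_2$ drops by $u(T)$, possibly reversing the inequality $u(S_1) \le u(S_2)$. Young's classical proof bypasses this by working in the full linear space of games, freely using unanimity-game decompositions with negative coefficients, which is unavailable in the monotone cone. I see two ways to finish. (a) Carry the induction through on the cone with a more intricate construction, e.g., simultaneously modifying all ``borderline'' coalitions to restore monotonicity, or iterating over minimal positive coalitions in a carefully chosen order. (b) Sidestep the cone issue entirely: show that any $\psi$ satisfying PO, ETP, M on monotone games extends uniquely to a solution on all of $\mathcal{G}^N$ satisfying the same axioms (using that any game can be written as a difference of two monotone games via a shift such as $u(S) + C|S|$ for sufficiently large $C$), and then invoke Young's theorem on $\mathcal{G}^N$. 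Finally, for any ``pivotal'' players $i$ contained in \emph{every} coalition in $B(u)$ — for whom no reducing $T$ exists — ETP equates their $\psi$-values by symmetry and PO fixes their common sum once the other players' values have been pinned down by the inductive argument above.
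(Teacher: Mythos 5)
The paper does not actually prove this statement: it is imported verbatim as Theorem 3.9 of the cited reference \cite{pinter2015young}, precisely because Young's classical uniqueness argument does not survive restriction to the cone of monotone games. Your ``if'' direction is fine (the ETP check requires pairing the terms for $S$ and $S\cup\{j\}$ in $\phi_i(u)$ against those for $S$ and $S\cup\{i\}$ in $\phi_j(u)$, but this is routine), and you have correctly located the crux of the ``only if'' direction: subtracting a unanimity-type layer can destroy monotonicity, so the induction cannot proceed inside the cone the way it does on the full linear space $\mathcal{G}^N$.

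The gap is that neither of your two proposed repairs is carried out, and repair (b) does not work as described. Writing an arbitrary game as a difference $v-w$ of monotone games is highly non-unique, and the axioms PO, ETP, M contain no additivity, so there is no consistent way to define an extension by $\psi(u):=\psi(v)-\psi(w)$. The alternative of transferring $\psi_i$ through a monotone game that shares player $i$'s marginal contributions also fails: a monotone game has all marginal contributions nonnegative, so no such game exists once $u$ has a negative marginal for $i$; and the shift $u(S)+C|S|$ changes every marginal by $+C$, so marginality provides no link back to $u$. Repair (a) is the right direction --- an induction designed to stay inside the monotone cone is essentially what the cited proof supplies --- but ``simultaneously modifying all borderline coalitions'' is exactly the hard part: you must exhibit a concrete reduced game, verify that it is still monotone, that player $i$'s marginals are unchanged, and that the complexity index strictly decreases. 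Until that construction is written down, the inductive step is open and the proof is incomplete. (Your treatment of the residual ``pivotal'' players via ETP and PO is fine, since for two players lying in every positive coalition all the relevant marginals vanish, so ETP does apply to them.)
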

We introduce a slightly different axiom than \textit{M} (marginality):
\begin{itemize}
    \item {\em Unequal Marginality (UM)}: We say that a solution $\psi$ on class of games $G \subseteq \mathcal{G}^N$ satisfies \textit{UM} (unequal marginality), if for all games $u, v \in G$ and $i \in N$: $\psi_i(u) \geq \psi_i(v)$, whenever $u(S\cup\{i\}) - u(S) \geq v(S\cup\{i\}) - v(S)$ for every $S \subseteq N$.
\end{itemize}
We also state a Corollary of Theorem \ref{thm.pinter_sv_uniqueness}:
\begin{corollary}\label{cor.pinter_sv_uniqueness}
Solution $\psi$ defined on the class of monotone games satisfies axiom \textit{PO} (Pareto optimality), \textit{ETP} (equal treatment Property) and \textit{UM} (unequal marginality), iff it is the Shapley value solution.
\end{corollary}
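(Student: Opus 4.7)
}
The plan is to reduce the corollary to Theorem \ref{thm.pinter_sv_uniqueness} by observing that the axiom \textit{UM} (unequal marginality) is strictly stronger than the axiom \textit{M} (marginality). The two directions will then follow routinely.

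\textbf{Forward direction (axioms $\Rightarrow$ Shapley value).} Suppose $\psi$ is a solution on the class of monotone games that satisfies \textit{PO}, \textit{ETP} and \textit{UM}. I will first show that \textit{UM} implies \textit{M}. Fix two games $u, v$ and an agent $i$ with $u(S \cup \{i\}) - u(S) = v(S \cup \{i\}) - v(S)$ for every $S \subseteq N$. Then this common equality gives simultaneously $u(S \cup \{i\}) - u(S) \ge v(S \cup \{i\}) - v(S)$ and the reverse inequality for every $S$, so applying \textit{UM} twice yields $\psi_i(u) \ge \psi_i(v)$ and $\psi_i(v) \ge \psi_i(u)$, hence $\psi_i(u) = \psi_i(v)$, which is exactly \textit{M}. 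Consequently $\psi$ satisfies \textit{PO}, \textit{ETP} and \textit{M}, and Theorem \ref{thm.pinter_sv_uniqueness} identifies $\psi$ with the Shapley value solution.

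\textbf{Backward direction (Shapley value $\Rightarrow$ axioms).} Theorem \ref{thm.pinter_sv_uniqueness} already ensures that the Shapley value solution $\phi$ satisfies \textit{PO} and \textit{ETP}, so only \textit{UM} needs to be verified. This follows directly from the closed form \eqref{eq.def_sv_coal}: since $\phi_i(u) = \sum_{S \subseteq N \setminus \{i\}} w_S \cdot [u(S \cup \{i\}) - u(S)]$ with weights $w_S = \tfrac{|S|!(|N|-|S|-1)!}{|N|!} \ge 0$, if $u(S \cup \{i\}) - u(S) \ge v(S \cup \{i\}) - v(S)$ for every $S$, then term-by-term monotonicity of the weighted sum gives $\phi_i(u) \ge \phi_i(v)$.

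\textbf{Main obstacle.} I do not anticipate any real obstacle; the entire content of the corollary is the observation that \textit{UM} implies \textit{M} while being preserved by a non-negatively-weighted linear combination of marginal contributions. The only subtlety to keep in mind is that the reduction must stay inside the class of monotone games so that Theorem \ref{thm.pinter_sv_uniqueness} can be invoked, but this is immediate because we are quantifying over the same class in both statements.
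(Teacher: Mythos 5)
Your proposal is correct and follows essentially the same route as the paper: both reduce the corollary to Theorem \ref{thm.pinter_sv_uniqueness} by noting that \textit{UM} is stronger than \textit{M} and verifying that the Shapley value satisfies \textit{UM} via term-by-term monotonicity of the non-negatively weighted sum in Eq. \eqref{eq.def_sv_coal}. You are in fact slightly more explicit than the paper, which merely asserts that \textit{UM} is stronger than \textit{M}, whereas you spell out the two-sided application of \textit{UM} that yields \textit{M}.
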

\begin{proof}
We prove that Shapley value solution $\phi$ satisfies axiom \textit{UM} (unequal marginality). Consider monotone games $u$, $v$, and agent $i \in N$ such that $u(S\cup\{i\}) - u(S) \geq v(S\cup\{i\}) - v(S)$ for every $S \subseteq N$, then;
\begin{align*}
    \phi_i(u) =& \sum_{S \subseteq N \backslash \{i\}} \weight_S \cdot \left [ u(S\cup\{i\}) - u(S) \right ] \geq \\
    \geq& \sum_{S \subseteq N \backslash \{i\}} \weight_S \cdot \left [ v(S\cup\{i\}) - v(S) \right ] = \\
    =& \phi_i(v).
\end{align*}
Since \textit{UM} (unequal marginality) is a stronger axiom than \textit{M} (marginality), and Shapley value solution satisfies it, the uniqueness result stated in the Corollary holds because of Theorem \ref{thm.pinter_sv_uniqueness}.
\end{proof}

Consider $M$, $\bpi$ and notice that $\inef_\emptyset = \return(\bpi) - \return(\bpi) = 0$. We say that set of agents $N$ and game $u$ are defined by $M$, $\bpi$, if $N = \{1, ..., n\}$ and $u(S) = \inef_S$ for every $S$. We denote with $\mathcal{H}$ the class of games that can be defined in that way. 
Let $\psi_{SV}$ be the solution on class $\mathcal{H}$ such that for every $M$, $\bpi$, $\psi_{SV}(u) = \blamefunc_{SV}(M, \bpi)$, where game $u$ is defined by $M$, $\bpi$. Given Eq. \eqref{eq.def_sv_coal}, this implies that $\psi_{SV}$ is the Shapley Value solution on $\mathcal{H}$.

We state three simple lemmas that show a one to one correspondence between the axioms \textit{PO} (Pareto optimality), \textit{ETP} (equal treatment property) and \textit{UM} (unequal marginality) and blame attribution properties:

\begin{lemma}\label{lem.po}
Let $\blamefunc$ be a blame attribution method and $\psi$ a solution on $\mathcal{H}$, such that for every $M$, $\bpi$, $\blamefunc(M, \bpi) = \psi(u)$, where game $u$ is defined by $M$, $\bpi$. Then, $\blamefunc$ satisfies $\prop_{E}$ (efficiency) iff $\psi$ satisfies \textit{PO} (Pareto optimality) on $\mathcal{H}$.
\end{lemma}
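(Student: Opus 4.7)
The plan is to prove the biconditional by direct unpacking of the two definitions and exploiting the bijection between the blame attribution side $(M,\bpi)\mapsto\blamefunc(M,\bpi)$ and the game-theoretic side $u\mapsto\psi(u)$ provided by the hypothesis. The key observations are that, for the game $u\in\mathcal{H}$ defined by $(M,\bpi)$, we have $N=\{1,\dots,n\}$, so $u(N)=\inef_{\{1,\dots,n\}}=\inef$, and the assumption $\blamefunc(M,\bpi)=\psi(u)$ gives component-wise equality, hence $\sum_{i=1}^n \blamei = \sum_{i\in N}\psi_i(u)$.

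For the forward direction, I would assume $\blamefunc$ satisfies $\prop_E$ and pick an arbitrary $u\in\mathcal{H}$. By definition of $\mathcal{H}$, there exists some $(M,\bpi)$ whose induced game is $u$. Applying the two identities above together with $\prop_E$ yields $\sum_{i\in N}\psi_i(u)=\sum_{i=1}^n\blamei=\inef=u(N)$, which is exactly \textit{PO}. For the reverse direction, I would assume $\psi$ satisfies \textit{PO} on $\mathcal{H}$ and take an arbitrary pair $(M,\bpi)$. Let $u$ be the game defined by this pair; then $u\in\mathcal{H}$, so \textit{PO} applies and $\sum_{i=1}^n\blamei=\sum_{i\in N}\psi_i(u)=u(N)=\inef$, which is $\prop_E$.

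There is no real obstacle here: the lemma is essentially a definitional translation, and the only bookkeeping to be careful about is (i) ensuring that the game $u$ induced by $(M,\bpi)$ is indeed in $\mathcal{H}$ (which is immediate from the definition of $\mathcal{H}$ given just before the lemma, including the observation that $\inef_\emptyset=0$, so $u$ is a valid game), and (ii) noting that the correspondence $(M,\bpi)\leftrightarrow u$ is surjective from $(M,\bpi)$-pairs onto $\mathcal{H}$, so quantification over $\mathcal{H}$ on the \textit{PO} side matches quantification over $(M,\bpi)$-pairs on the $\prop_E$ side. Both quantifier directions therefore go through without any additional machinery.
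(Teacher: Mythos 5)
Your proposal is correct and matches the paper's argument, which simply observes that $u(N)=\inef_{\{1,\dots,n\}}=\inef$ and lets the two definitions coincide; you merely spell out both directions of the biconditional and the quantifier matching that the paper leaves implicit.
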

\begin{proof}
Consider $M$, $\bpi$ and game $u$ defined by $M$, $\bpi$. Then the statement is true because $u(N) = \inef_{\{1, ..., n\}} = \inef$.
\end{proof}

\begin{lemma}\label{lem.etp}
Let $\blamefunc$ be a blame attribution method and $\psi$ a solution on $\mathcal{H}$, such that for every $M$, $\bpi$, $\blamefunc(M, \bpi) = \psi(u)$, where game $u$ is defined by $M$, $\bpi$. Then, $\blamefunc$ satisfies $\prop_{S}$ (symmetry) iff $\psi$ satisfies \textit{ETP} (equal treatment property) on $\mathcal{H}$.
\end{lemma}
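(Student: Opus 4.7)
The plan is to establish the equivalence by unfolding both definitions and observing that the hypothesis of $\prop_S$ is literally the same as the hypothesis of \textit{ETP} once we substitute $u(S) = \inef_S$. This is more of a verification than a theorem, so I expect no real obstacle; the only thing to be careful about is that the bijection between $(M, \bpi)$ and games $u \in \mathcal{H}$ is well-defined (and given by the definition of $\mathcal{H}$ introduced just before the lemma), so that quantification over $\mathcal{H}$ matches quantification over pairs $(M, \bpi)$.

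For the forward direction, I would assume $\blamefunc$ satisfies $\prop_S$ and fix an arbitrary game $u \in \mathcal{H}$, which by definition arises from some $M, \bpi$ with $u(S) = \inef_S$ for all $S$. Taking any $i, j \in N$ with $u(S \cup \{i\}) - u(S) = u(S \cup \{j\}) - u(S)$ for every $S \subseteq N \setminus \{i,j\}$, I would cancel the common $u(S)$ term to obtain $u(S \cup \{i\}) = u(S \cup \{j\})$, i.e., $\inef_{S \cup \{i\}} = \inef_{S \cup \{j\}}$. Symmetry of $\blamefunc$ then gives $\blame_i = \blame_j$, and by the assumption relating $\blamefunc$ and $\psi$, this is exactly $\psi_i(u) = \psi_j(u)$. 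Hence $\psi$ satisfies \textit{ETP} on $\mathcal{H}$.

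For the reverse direction, I would reverse the argument. Assuming $\psi$ satisfies \textit{ETP} on $\mathcal{H}$, I would fix any $M, \bpi$ and agents $i, j$ with $\inef_{S \cup \{i\}} = \inef_{S \cup \{j\}}$ for all $S \subseteq \{1, \ldots, n\} \setminus \{i, j\}$. Letting $u$ be the game defined by $M, \bpi$ (which lies in $\mathcal{H}$ by construction), the hypothesis translates to $u(S \cup \{i\}) - u(S) = u(S \cup \{j\}) - u(S)$ for all such $S$, so \textit{ETP} yields $\psi_i(u) = \psi_j(u)$, i.e., $\blame_i = \blame_j$. This establishes $\prop_S$ for $\blamefunc$ and completes the equivalence.
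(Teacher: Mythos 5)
Your proposal is correct and matches the paper's proof, which likewise just observes that since $u(S) = \inef_S$, the hypothesis $u(S\cup\{i\}) - u(S) = u(S\cup\{j\}) - u(S)$ is equivalent to $\inef_{S\cup\{i\}} = \inef_{S\cup\{j\}}$, so the two properties coincide under the correspondence; you merely spell out the two directions explicitly. (One small note: the correspondence $(M,\bpi) \mapsto u$ need not be injective, but only surjectivity onto $\mathcal{H}$ is needed, and that holds by the definition of $\mathcal{H}$.)
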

\begin{proof}
Consider $M$, $\bpi$ and game $u$ defined by $M$, $\bpi$. Given that $u(S) = \inef_S$ for every $S$, we have that for every $i$ and $j$, $\inef_{S \cup \{i\}} - \inef_S = \inef_{S \cup \{j\}} - \inef_S$ iff $u(S \cup \{i\}) - u(S) = u(S \cup \{j\}) - u(S)$. Hence, the statement is true.
\end{proof}

\begin{lemma}\label{lem.um}
Let $\blamefunc$ be a blame attribution method and $\psi$ a solution on $\mathcal{H}$, such that for every $M$, $\bpi$, $\blamefunc(M, \bpi) = \psi(u)$, where game $u$ is defined by $M$, $\bpi$. Then, $\blamefunc$ satisfies $\prop_{CM}$ (contribution monotonicity) iff $\psi$ satisfies \textit{UM} (unequal marginality) on $\mathcal{H}$.
\end{lemma}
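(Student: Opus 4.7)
The plan is to exploit the definitional identity $u(S) = \inef_S$ for every $S \subseteq N$, which holds by construction of the class $\mathcal{H}$. Taking marginal differences immediately gives $u(S \cup \{i\}) - u(S) = \inef_{S \cup \{i\}} - \inef_S$. Thus the hypothesis of \textit{UM} translates verbatim into the hypothesis of $\prop_{CM}$, and the conclusion $\psi_i(u) \geq \psi_i(v)$ coincides with $\blamei^1 \geq \blamei^2$ via the assumed correspondence $\blamefunc(M, \bpi) = \psi(u)$. Hence both directions are essentially a change of notation, mirroring the structure used for Lemmas \ref{lem.po} and \ref{lem.etp}.

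For the forward direction I would assume $\blamefunc$ satisfies $\prop_{CM}$, pick any two games $u^1, u^2 \in \mathcal{H}$ and any agent $i \in N$ with $u^1(S \cup \{i\}) - u^1(S) \geq u^2(S \cup \{i\}) - u^2(S)$ for all $S \subseteq N$. By definition of $\mathcal{H}$, each $u^j$ arises from some pair $(M^j, \bpi^j)$ with $u^j(S) = \inef^j_S$, so the hypothesis becomes $\inef^1_{S \cup \{i\}} - \inef^1_S \geq \inef^2_{S \cup \{i\}} - \inef^2_S$ for all $S$. Applying $\prop_{CM}$ to the two pairs yields $\blamei^1 \geq \blamei^2$, and the correspondence $\psi(u^j) = \blamefunc(M^j, \bpi^j)$ delivers $\psi_i(u^1) \geq \psi_i(u^2)$, as required by \textit{UM}.

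For the reverse direction I would assume $\psi$ satisfies \textit{UM}, take any $(M^1, \bpi^1)$ and $(M^2, \bpi^2)$ with $\inef^1_{S \cup \{i\}} - \inef^1_S \geq \inef^2_{S \cup \{i\}} - \inef^2_S$ for every $S$, and form the associated games $u^1, u^2 \in \mathcal{H}$. The hypothesis of \textit{UM} holds for these games, so $\psi_i(u^1) \geq \psi_i(u^2)$, which translates back to $\blamei^1 \geq \blamei^2$ via the same correspondence. I expect no real obstacle: the only subtlety is observing that the class $\mathcal{H}$ is closed under the bijection with $(M, \bpi)$ pairs in both directions, so that the quantifier ``for all $u, v \in \mathcal{H}$'' in \textit{UM} matches the quantifier ``for all $(M^1, \bpi^1), (M^2, \bpi^2)$'' in $\prop_{CM}$ without mismatch. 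This is already implicit in the setup of $\mathcal{H}$ and in the assumed universal correspondence between $\blamefunc$ and $\psi$.
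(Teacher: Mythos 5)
Your proposal is correct and follows essentially the same route as the paper's proof: both rest on the observation that $u^j(S) = \inef^j_S$ makes the hypothesis $\inef^1_{S\cup\{i\}} - \inef^1_S \geq \inef^2_{S\cup\{i\}} - \inef^2_S$ equivalent to $u^1(S\cup\{i\}) - u^1(S) \geq u^2(S\cup\{i\}) - u^2(S)$, so that the conclusion transfers through the assumed correspondence $\blamefunc(M,\bpi)=\psi(u)$. You merely spell out the two directions more explicitly than the paper does; no gap.
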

\begin{proof}
Consider $M^1$, $\bpi^1$ and $M^2$, $\bpi^2$, and games $u^1$ and $u^2$ defined by $M^1$, $\bpi^1$ and $M^2$, $\bpi^2$, respectively. Given that $u^1(S) = \inef^1_S$ and $u^2(S) = \inef^2_S$ for every $S$, we have that for every $i$, $\inef^1_{S\cup\{i\}} - \inef^1_{S} \geq \inef^2_{S\cup\{i\}} - \inef^2_{S}$ iff $u^1(S\cup\{i\}) - u^1(S) \geq u^2(S\cup\{i\}) - u^2(S)$. Hence, the statement is true.
\end{proof}

\subsection{Proof}\label{sec.proof_sv_after_background}

\addtocounter{theorem}{-4}
\begin{theorem}
$\blamefunc_{SV}(M, \bpi) = (\beta_1, ..., \beta_n)$, where $\beta_i$ is defined by Eq. \eqref{eq.def_sv} and $\weight_S = \frac{|S|! ( n - |S| - 1)!}{n!}$,  is a unique blame attribution method satisfying $\prop_E$ (efficiency), $\prop_S$ (symmetry) and $\prop_{CM}$ (contribution monotonicity). Additionally, $\blamefunc_{SV}$ satisfies $\prop_V$ (validity) and $\prop_I$ (invariance).
\end{theorem}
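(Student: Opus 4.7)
My strategy is to reduce the statement to Corollary~\ref{cor.pinter_sv_uniqueness} via the correspondence between blame attribution methods on $\mathcal M \times \polspace$ and solutions on the class of games $\mathcal H$ set up in Appendix~\ref{sec.background_proof_sv}. Concretely, I plan to argue:
(i) every game $u \in \mathcal H$ is monotone, so $\mathcal H$ is a subclass of the monotone games on $N = \{1,\dots,n\}$;
(ii) the correspondence $\blamefunc \leftrightarrow \psi$ together with Lemmas~\ref{lem.po}, \ref{lem.etp}, \ref{lem.um} transports the three axioms (PO, ETP, UM) exactly to the three properties ($\prop_E$, $\prop_S$, $\prop_{CM}$);
(iii) plugging this into Corollary~\ref{cor.pinter_sv_uniqueness} yields that $\psi_{SV}$, and therefore $\blamefunc_{SV}$ defined by Eq.~\eqref{eq.def_sv}, is the unique blame attribution method on $\mathcal H$ (i.e., on all $(M,\bpi)$) satisfying $\prop_E$, $\prop_S$, $\prop_{CM}$;
(iv) verify directly the two extra properties $\prop_V$ and $\prop_I$.

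\textbf{Main obstacle (step i).} The crux is showing monotonicity: for any $S \subseteq T \subseteq \{1,\dots,n\}$, we need $\inef_S \le \inef_T$. I plan to exploit the definitional flexibility of $\optpirb_T$: letting $\pi^\star$ be the policy that plays $\optpirb_S$ on the agents in $S$ and $\bpi_j$ on the agents in $T\setminus S$, we have $\return(\pi^\star, \bpi_{-T}) = \return(\optpirb_S, \bpi_{-S})$, while $\return(\optpirb_T, \bpi_{-T}) \ge \return(\pi^\star, \bpi_{-T})$ by the maximizing property of $\optpirb_T$. Subtracting $\return(\bpi)$ from both sides gives $\inef_T \ge \inef_S$. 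Also $u(\emptyset) = \inef_\emptyset = 0$, so $u$ is indeed a monotone game with $u(\emptyset) = 0$, placing it inside the class covered by Corollary~\ref{cor.pinter_sv_uniqueness}.

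\textbf{Uniqueness (steps ii--iii).} Given (i), suppose $\blamefunc$ is any method satisfying $\prop_E$, $\prop_S$, $\prop_{CM}$. Define $\psi$ on $\mathcal H$ by $\psi(u) = \blamefunc(M,\bpi)$ for any $(M,\bpi)$ that defines $u$; this is well-defined because all three properties only depend on the values $\{\inef_S\}_S$, which are exactly encoded by $u$. Lemmas~\ref{lem.po}, \ref{lem.etp}, \ref{lem.um} then tell me $\psi$ satisfies PO, ETP, UM on $\mathcal H$. Since $\mathcal H$ sits inside the monotone games, Corollary~\ref{cor.pinter_sv_uniqueness} forces $\psi = \psi_{SV}$ on $\mathcal H$, which by the very definition of $\psi_{SV}$ means $\blamefunc = \blamefunc_{SV}$. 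Conversely, the Shapley-value formula in Eq.~\eqref{eq.def_sv} satisfies all three properties: $\prop_E$ and $\prop_S$ are standard, and $\prop_{CM}$ follows from the fact that each bracket $\return(\optpirb_{S\cup\{i\}},\bpi_{-S\cup\{i\}}) - \return(\optpirb_S,\bpi_{-S}) = \inef_{S\cup\{i\}}-\inef_S$ enters $\beta_i$ with a nonnegative weight $\weight_S$.

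\textbf{Extra properties (step iv).} For $\prop_V$, since $\blamefunc_{SV}$ satisfies $\prop_E$, we have $\sum_i \beta_i = \inef$, hence in particular $\sum_i \beta_i \le \inef$. For $\prop_I$, suppose $\inef_{S\cup\{i\}} = \inef_S$ for all $S \subseteq \{1,\dots,n\}\setminus\{i\}$. Then every bracket in Eq.~\eqref{eq.def_sv} vanishes, so $\beta_i = 0$. This completes the proof. The only non-mechanical step is the monotonicity argument in (i); everything else is a direct transport of an existing uniqueness result through the correspondence already proved in the appendix.
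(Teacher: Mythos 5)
Your proposal is correct and follows essentially the same route as the paper's own proof: establish that every game in $\mathcal H$ is monotone, transport the axioms PO/ETP/UM to $\prop_E$/$\prop_S$/$\prop_{CM}$ via Lemmas~\ref{lem.po}--\ref{lem.um}, invoke Corollary~\ref{cor.pinter_sv_uniqueness}, and check $\prop_V$ and $\prop_I$ directly. Your explicit construction of the comparison policy $\pi^\star$ in the monotonicity step is a slightly more detailed justification of the inequality $\return(\optpirb_T,\bpi_{-T}) \ge \return(\optpirb_S,\bpi_{-S})$ that the paper simply asserts, but the argument is the same.
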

\begin{proof}
Consider $M$, $\bpi$ and game $u$ defined by $M$, $\bpi$. Consider also $S$ and $T$ such that $S \subseteq T$. We have that:
\begin{align*}
    & \return(\optpirb_T,\bpi_{-T}) \geq \return(\optpirb_S,\bpi_{-S}) \Rightarrow\\
    \Rightarrow& \return(\optpirb_T,\bpi_{-T}) - \return(\bpi) \geq \return(\optpirb_S,\bpi_{-S}) - \return(\bpi) \Rightarrow\\
    \Rightarrow& \inef_T \geq \inef_S \Rightarrow u(T) \geq u(S).
\end{align*}
This implies that class $\mathcal{H}$ consists only of monotone games, and hence by Corollary \ref{cor.pinter_sv_uniqueness} we have that $\psi_{SV}$ is a unique solution on $\mathcal{H}$ satisfying \textit{PO} (Pareto optimality), \textit{ETP} (equal treatment property) and \textit{UM} (unequal marginality). Given Lemmas \ref{lem.po}, \ref{lem.etp}, and \ref{lem.um}, this implies that $\blamefunc_{SV}$ is a unique blame attribution method satisfying $\prop_E$ (efficiency), $\prop_S$ (symmetry) and $\prop_{CM}$ (contribution monotonicity).

We also prove the properties $\prop_V$ (validity) and $\prop_I$ (invariance) as follows:
\begin{itemize}
    \item  \underline{$\prop_V$ (validity):} Consider $M$, $\bpi$. Given that $\blamefunc_{SV}$ satisfies property $\prop_E$ (efficiency), it holds that $\sum_{i \in \{1, ..., n\}} \blame_i = \inef$. Hence, property $\prop_V$ (validity) is satisfied.
    \item \underline{$\prop_I$ (invariance):} Consider $M$, $\bpi$, and agent $i$ such that $\inef_{S \cup \{i \}} = \inef_{S}$ for all $S$. This implies that $\return(\optpirb_{S \cup \{ i \}}, \bpi_{-S\cup\{i\}}) = \return(\optpirb_{S}, \bpi_{-S})$ for all $S$. Given the definition of $\blamefunc_{SV}(M, \bpi)$, we have that $\blame_i = 0$. Hence, property $\prop_I$ (invariance) is satisfied.
\end{itemize}
\end{proof}

\section{Proof of Theorem \ref{thm.average_participation.unique}}\label{sec.proof_ap}

Before we proceed with the proof of Theorem \ref{thm.average_participation.unique}, notice that the contribution function $c$ from Section \ref{sec.avg_participation} can be rewritten in the equivalent form:
\begin{align*}
    c(M, \bpi, i) = \begin{cases}
            0  \quad &\mbox{ if } \inef_{S \cup \{ i \}} = \inef_S, \quad \forall S \subseteq \{1, ..., n\} \\ 
            1 \quad &\mbox{ otherwise  }
    \end{cases}.
\end{align*} 

We also state the following lemmas:

\begin{lemma}\label{lem.inef_existence}
Consider a function $f: 2^{\{1, ..., n\}} \rightarrow \mathbb{R}_{\ge 0}$. There exist some MMDP $M$ and agents' behavior joint policy $\bpi$ such that the marginal inefficiency of every subset of agents $S$ is equal to $f(S)$, iff $f(\emptyset) = 0$ and $f(S_1) \le f(S_2)$ whenever $S_1 \subseteq S_2$, where $S_1$ and $S_2$ are subsets of $\{1, ..., n\}$.
\end{lemma}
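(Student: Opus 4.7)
The claim is an ``if and only if,'' so I will handle necessity and sufficiency separately. I expect necessity to be essentially one-line observations from the definition of marginal inefficiency, and sufficiency to require a concrete construction of an MMDP. The main work is the construction, but it turns out to be fairly simple because the lemma only constrains the values $\{\inef_S\}_{S \subseteq \{1,\dots,n\}}$, not the full structure of $M$.

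\textbf{Necessity.} Suppose $M$ and $\bpi$ exist with $\inef_S = f(S)$ for every $S \subseteq \{1,\dots,n\}$. For $S=\emptyset$, $\optpirb_\emptyset$ is vacuous, so $\inef_\emptyset = \return(\bpi) - \return(\bpi) = 0$, giving $f(\emptyset)=0$. For monotonicity, fix $S_1 \subseteq S_2$. Any policy of the form $(\optpirb_{S_1},\bpi_{S_2\setminus S_1})$ is a candidate joint policy for $S_2$ when $\bpi_{-S_2}$ is fixed, hence $\max_{\pi_{S_2}}\return(\pi_{S_2},\bpi_{-S_2}) \ge \return(\optpirb_{S_1},\bpi_{-S_1})$. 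Subtracting $\return(\bpi)$ on both sides yields $f(S_2)=\inef_{S_2} \ge \inef_{S_1}=f(S_1)$.

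\textbf{Sufficiency.} Given $f$ with $f(\emptyset)=0$ and monotonicity, I construct a simple MMDP. Let $\mathcal{S} = \{s_0, s_\perp\}$, $\sigma(s_0)=1$, $\mathcal{A}_i = \{0,1\}$, and set transitions so that from $s_0$ the MMDP deterministically transitions to the absorbing state $s_\perp$ regardless of action, with $R(s_\perp,\cdot)=0$. Define $T(a)=\{i : a_i = 1\}$ and set $R(s_0,a)=f(T(a))$. Let $\bpi_i(0 \mid s_0)=1$ for each $i$ (any extension to $s_\perp$ will do). Then $\return(\bpi) = R(s_0,\mathbf{0}) = f(\emptyset)=0$, and for any $S$,
\begin{equation*}
\return(\optpirb_S,\bpi_{-S}) \;=\; \max_{a_S \in \{0,1\}^{|S|}} R(s_0,(a_S,\mathbf{0}_{-S})) \;=\; \max_{T \subseteq S} f(T) \;=\; f(S),
\end{equation*}
where the last equality uses monotonicity. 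Hence $\inef_S = f(S) - 0 = f(S)$, as required.

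\textbf{Main obstacle.} There is no real obstacle; the only subtle point is making sure the construction satisfies all the MMDP conventions of Section~\ref{sec.preliminaries} (finite, discrete state/action spaces, factorizable joint policy, well-defined discounted return). Using the two-state construction above keeps the return finite independent of $\gamma$ because all reward is collected in a single step before absorbing. This makes the construction trivially valid, and monotonicity of $f$ is used exactly once, to identify $\max_{T\subseteq S} f(T)$ with $f(S)$.
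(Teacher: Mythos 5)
Your proof is correct and follows essentially the same route as the paper: necessity from the definition of marginal inefficiency (the paper phrases it as a contradiction, you as a direct argument), and sufficiency via the identical two-state construction with reward $f(T(a))$ for the set $T(a)$ of agents playing action $1$ and a behavior policy that always plays $0$. Your version is marginally more explicit in invoking monotonicity to justify $\max_{T \subseteq S} f(T) = f(S)$, which the paper states informally as ``taking action $1$ is the best that every agent in $S$ can do.''
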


\begin{proof}
First, we show that the conditions on function $f$ are necessary:
\begin{itemize}
    \item Suppose that there exist $M$, $\bpi$ such that $\inef_{\emptyset} > 0$. Given the definition of marginal inefficiency this would imply that $\return(\bpi) > \return(\bpi)$. Hence, we reach a contradiction.
    \item Suppose that there exist $M$, $\bpi$ such that $\inef_{S_1} > \inef_{S_2}$, where $S1 \subseteq S_2$. Given the definition of marginal inefficiency this would imply that $\return(\optpirb_{S_1}, \bpi_{- S_1}) > \return(\optpirb_{S_2}, \bpi_{- S_2})$. Hence, we reach a contradiction.
\end{itemize}

Next we show that the conditions on function $f$ are sufficient. Consider an MMDP $M$ with two states---the initial state and the terminal state---and the action space $\mathcal{A} = \times_{i = 1}^n \{0, 1\}$. In the initial state, the agents obtain zero reward when they all take action $0$ and reward $f(S)$ when agents in $S$ take action $1$ and the rest of the agents take action $0$. Consider also the deterministic joint policy $\bpi$, where every agent takes action $0$. Notice that $\return(\bpi) = 0$.

For every subset of agents $S$ it holds that $\return(\optpirb_S, \bpi_{- S}) = f(S)$, because taking action $1$ is the best that every agent in $S$ can do. Hence, for the marginal inefficiency of $S$ we have that $\inef_S = \return(\optpirb_S, \bpi_{- S}) - \return(\bpi) = f(S)$.
\end{proof}

\begin{lemma}\label{lem.cparm}
Let $\blamefunc$ satisfy $\prop_{cParM}$ (c-participation monotonicity). Then, for every $M^1$, $\bpi^1$ and $M^2$, $\bpi^2$ such that $c(M^1, \bpi^1, i) = c(M^2, \bpi^2, i)$ for every $i$, $\blamei^1 = \blamei^2$ whenever $\inef^1_{S \cup \{i \}} = \inef^2_{S \cup \{i \}}$ for all $S$, where $\blame^1 = \blamefunc(M^1, \bpi^1)$ and $\blame^2 = \blamefunc(M^2, \bpi^2)$.
\end{lemma}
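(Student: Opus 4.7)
The plan is to derive the equality $\blamei^1 = \blamei^2$ by applying $\prop_{cParM}$ (c-participation monotonicity) twice, once in each direction. This is essentially an antisymmetry argument: from a monotonicity condition under equality of the relevant quantities, we obtain equality of the outputs.

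First I would unpack the hypotheses. We are given two pairs $(M^1, \bpi^1)$ and $(M^2, \bpi^2)$ with matching pivotality, i.e., $c(M^1, \bpi^1, j) = c(M^2, \bpi^2, j)$ for every $j \in \{1,\dots,n\}$. We are also given that $\inef^1_{S \cup \{i\}} = \inef^2_{S \cup \{i\}}$ for every $S \subseteq \{1,\dots,n\}$. In particular, this equality implies both inequalities $\inef^1_{S \cup \{i\}} \geq \inef^2_{S \cup \{i\}}$ and $\inef^2_{S \cup \{i\}} \geq \inef^1_{S \cup \{i\}}$ for all such $S$.

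Next, I would invoke $\prop_{cParM}$ in its first direction. Since the pivotality functions agree across all agents and $\inef^1_{S \cup \{i\}} \geq \inef^2_{S \cup \{i\}}$ for every $S$, the property yields $\blamei^1 \geq \blamei^2$, where $\blame^1 = \blamefunc(M^1, \bpi^1)$ and $\blame^2 = \blamefunc(M^2, \bpi^2)$. Then I would apply $\prop_{cParM}$ in the opposite direction: with the roles of $(M^1, \bpi^1)$ and $(M^2, \bpi^2)$ exchanged, the pivotality condition still holds (by symmetry of the hypothesis $c(M^1, \bpi^1, j) = c(M^2, \bpi^2, j)$), and $\inef^2_{S \cup \{i\}} \geq \inef^1_{S \cup \{i\}}$ for every $S$, which gives $\blamei^2 \geq \blamei^1$. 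Combining the two inequalities yields $\blamei^1 = \blamei^2$.

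There is no substantial obstacle here—the proof is a direct two-sided application of c-participation monotonicity, relying only on the fact that the hypothesis of $\prop_{cParM}$ is preserved when swapping the two instances (which is immediate because both the pivotality condition and the equality of marginal inefficiencies are symmetric in the two pairs). The only point that warrants care is checking that the pivotality hypothesis of $\prop_{cParM}$ is exactly what we have assumed, which it is. Consequently, the lemma follows in essentially one display of reasoning.
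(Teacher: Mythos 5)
Your proposal is correct and matches the paper's proof exactly: both apply $\prop_{cParM}$ twice, once with the roles of $(M^1,\bpi^1)$ and $(M^2,\bpi^2)$ swapped, to obtain $\blamei^1 \ge \blamei^2$ and $\blamei^1 \le \blamei^2$, and hence equality. Your added remark that the pivotality hypothesis is symmetric under the swap is a valid (if implicit in the paper) justification of the second application.
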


\begin{proof}
Consider agent $i$ such that $\inef^1_{S \cup \{i \}} = \inef^2_{S \cup \{i \}}$ for all $S$. Given that $\blamefunc$ satisfies $\prop_{cParM}$ (c-participation monotonicity), this implies $\blamei^1 \ge \blamei^2$ and $\blamei^1 \le \blamei^2$, and hence $\blamei^1 = \blamei^2$.
\end{proof}

\begin{lemma}\label{lem.rcparm}
Let $\blamefunc$ satisfy $\prop_{RcParM}$ (relative c-participation monotonicity). Then, for every $M^1$, $\bpi^1$ and $M^2$, $\bpi^2$ such that $c(M^1, \bpi^1, i) = c(M^2, \bpi^2, i)$  for every $i$, $\blamej^1 - \blamej^2 = \blamek^1 - \blamek^2$ whenever $c(M^1, \bpi^1, j) = c(M^1, \bpi^1, k)$ and $\inef^1_{S \cup \{j \}} - \inef^2_{S \cup \{j \}} = \inef^1_{S \cup \{k \}} - \inef^2_{S \cup \{k \}}$ for every $S \subseteq \{1, ..., n\} \backslash \{j, k\}$, where $\blame^1 = \blamefunc(M^1, \bpi^1)$ and $\blame^2 = \blamefunc(M^2, \bpi^2)$.
\end{lemma}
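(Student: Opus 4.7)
\textbf{Proof plan for Lemma 3 ($\prop_{RcParM}$).} The plan is to derive the equality $\blamej^1 - \blamej^2 = \blamek^1 - \blamek^2$ by obtaining the two matching weak inequalities $\blamej^1 - \blamej^2 \ge \blamek^1 - \blamek^2$ and $\blamek^1 - \blamek^2 \ge \blamej^1 - \blamej^2$, each by a direct application of $\prop_{RcParM}$. The key observation is that the hypothesis of the lemma strengthens the hypothesis of $\prop_{RcParM}$ from an inequality between marginal inefficiencies to an equality, and that the remaining preconditions of $\prop_{RcParM}$ are symmetric in $j$ and $k$.

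Concretely, first I verify that $\prop_{RcParM}$ applies with $j$ playing the role of ``the first index'' and $k$ the role of ``the second index'': we are given the global pivotality match $c(M^1,\bpi^1,i)=c(M^2,\bpi^2,i)$ for every $i$, the pairwise pivotality match $c(M^1,\bpi^1,j)=c(M^1,\bpi^1,k)$, and the equality $\inef^1_{S\cup\{j\}}-\inef^2_{S\cup\{j\}} = \inef^1_{S\cup\{k\}}-\inef^2_{S\cup\{k\}}$ for all $S\subseteq\{1,\dots,n\}\setminus\{j,k\}$. The equality trivially implies $\inef^1_{S\cup\{j\}}-\inef^2_{S\cup\{j\}} \ge \inef^1_{S\cup\{k\}}-\inef^2_{S\cup\{k\}}$, so $\prop_{RcParM}$ yields $\blamej^1-\blamej^2 \ge \blamek^1-\blamek^2$.

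Next I swap the roles of $j$ and $k$. The preconditions remain satisfied: the global and pairwise pivotality conditions are both symmetric in $j,k$, and the equality of marginal-inefficiency differences also implies the reverse inequality $\inef^1_{S\cup\{k\}}-\inef^2_{S\cup\{k\}} \ge \inef^1_{S\cup\{j\}}-\inef^2_{S\cup\{j\}}$ for all such $S$. Applying $\prop_{RcParM}$ again therefore gives $\blamek^1-\blamek^2 \ge \blamej^1-\blamej^2$. Combining the two inequalities yields the claimed equality $\blamej^1-\blamej^2 = \blamek^1-\blamek^2$, completing the proof.

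There is no real obstacle here; the lemma is essentially a ``sandwich'' consequence of $\prop_{RcParM}$, obtained by invoking the property in both orderings of the pair $(j,k)$. The only thing to be careful about is to confirm that every precondition of $\prop_{RcParM}$ survives the swap, which it does because all hypotheses are either symmetric in $j$ and $k$ or given as equalities. This lemma will presumably be used later alongside Lemma 2 to pin down $\blamefunc_{AP}$ uniquely in the proof of Theorem 2.
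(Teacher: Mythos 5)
Your proposal is correct and matches the paper's own argument: the paper likewise obtains $\blamej^1 - \blamej^2 \ge \blamek^1 - \blamek^2$ and the reverse inequality by applying $\prop_{RcParM}$ with the roles of $j$ and $k$ exchanged (noting that all hypotheses are symmetric or stated as equalities), and concludes the equality. No gaps.
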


\begin{proof}
Consider agents $j$ and $k$ such that $c(M^1, \bpi^1, j) = c(M^1, \bpi^1, k)$ and $\inef^1_{S \cup \{j \}} - \inef^2_{S \cup \{j \}} = \inef^1_{S \cup \{k \}} - \inef^2_{S \cup \{k \}}$ for all $S \subseteq \{1, ..., n\} \backslash \{j, k\}$. Given that $\blamefunc$ satisfies $\prop_{RcParM}$ (relative c-participation monotonicity), this implies $\blamej^1 - \blamej^2 \ge \blamek^1 - \blamek^2$ and $\blamej^1 - \blamej^2 \le \blamek^1 - \blamek^2$, and hence $\blamej^1 - \blamej^2 = \blamek^1 - \blamek^2$.
\end{proof}

\subsubsubsection{\textbf{Proof of Theorem \ref{thm.average_participation.unique}}}

\begin{theorem}
$\blamefunc_{AP}(M, \bpi) = (\beta_1, ..., \beta_n)$, where $\beta_i$ is defined by Eq. \eqref{eq.def_avg_part} and $\weight = \frac{1}{2^n - 1}$, is a unique blame attribution method that satisfies $\prop_{AE}$ (average-efficiency), $\prop_{S}$ (symmetry), $\prop_{I}$ (invariance), $\prop_{cParM}$ (c-participation monotonicity) and $\prop_{RcParM}$ (relative c-participation monotonicity). Furthermore, $\blamefunc_{AP}$ satisfies $\prop_{cPerM}$ (c-performance monotonicity) and $\prop_{V}$ (validity).
\end{theorem}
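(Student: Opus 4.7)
The proof splits into two stages: verifying that $\blamefunc_{AP}$ satisfies each listed property, and then establishing uniqueness among methods satisfying the five characterizing properties.

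\textbf{Stage 1: Verification.} Most properties follow from direct inspection of Eq. \eqref{eq.def_avg_part}. $\prop_I$ is immediate: a non-pivotal $i$ has $c(M,\bpi,i)=0$, so $\blame_i=0$. For $\prop_{AE}$, I would rearrange $\sum_{i=1}^n \blame_i$ by swapping the order of summation, summing over nonempty subsets $T \subseteq \{1,\dots,n\}$ first. The total contribution to $\sum_i \blame_i$ coming from the index $T$ equals $\weight \cdot \inef_T \cdot \frac{|\{i \in T : c(M,\bpi,i)=1\}|}{|\{i \in T : c(M,\bpi,i)=1\}|} = \weight \cdot \inef_T$ whenever $T$ contains at least one pivotal agent; when $T$ contains no pivotal agent, an easy induction on $|T|$ using the definition of $c$ shows $\inef_T=0$, so the contribution is still $\weight\cdot\inef_T=0$. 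Summing gives $\sum_i \blame_i = \sum_{T\ne\emptyset} \weight \cdot \inef_T$, which is exactly $\prop_{AE}$. $\prop_V$ follows since each $\inef_T \le \inef$ and there are $2^n-1$ nonempty subsets with $\weight = 1/(2^n-1)$. $\prop_S$ holds because pivotality is defined purely in terms of marginal contributions, hence symmetric agents are both pivotal or both not, and the formula is symmetric in pivotal labels. $\prop_{cParM}$ and $\prop_{RcParM}$ are immediate from the linearity of the formula in $\{\inef_S\}_S$ once the pivotality profile is fixed. $\prop_{cPerM}$ is the subtlest: when $i$'s policy changes from $\pi_i$ to $\pi_i'$ with $\return(\pi_i,\bpi_{-i}) \le \return(\pi_i',\bpi_{-i})$ while pivotality is preserved, every $\inef_{S \cup \{i\}}$ decreases (since $\return(\bpi')$ increases while $\return(\optpirb_{S \cup \{i\}},\bpi_{-S\cup\{i\}})$ is unchanged because $i \in S\cup\{i\}$ is optimized over), and so $\blame_i$ decreases term-by-term.

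\textbf{Stage 2: Uniqueness.} Suppose $\blamefunc$ satisfies the five characterizing properties; I must show $\blamefunc(M,\bpi)=\blamefunc_{AP}(M,\bpi)$ for every $(M,\bpi)$. Fix such a pair and let $P$ be its set of pivotal agents. By $\prop_I$, $\blame_i=0$ for $i \notin P$, matching $\blamefunc_{AP}$. For $i \in P$, I would first invoke Lemma \ref{lem.cparm} (a direct consequence of $\prop_{cParM}$) to conclude that $\blame_i$ depends only on the profile $\{\inef_{S \cup \{i\}}\}_S$ together with $P$, and then invoke Lemma \ref{lem.rcparm} (from $\prop_{RcParM}$) to conclude that for any two $j,k \in P$, the shift $\blame_j^1-\blame_j^2$ relative to $\blame_k^1-\blame_k^2$ depends only on the corresponding profile differences. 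Using Lemma \ref{lem.inef_existence} to freely realize monotone inefficiency profiles with any prescribed pivotality, these two constraints promote the blame to a \emph{linear} function of $\{\inef_{S \cup \{i\}}\}_S$ on the space of MMDPs with pivotality $P$. Finally I would use $\prop_S$ to force the coefficient of $\inef_{S \cup \{i\}}$ to depend only on $|S \cup \{i\} \cap P|$ (symmetry among pivotal agents in $S$ forces equal split), and $\prop_{AE}$ to pin down the overall normalization: splitting $\weight \cdot \inef_T$ equally among the pivotal agents in $T$ and setting $\weight=1/(2^n-1)$ is the only way to match $\sum_i \blame_i = \sum_T \weight \cdot \inef_T$. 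This recovers Eq. \eqref{eq.def_avg_part}.

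\textbf{Main obstacle.} The crux is upgrading the monotonicity-style axioms $\prop_{cParM}$ and $\prop_{RcParM}$ to genuine linearity of $\blame_i$ in $\{\inef_{S \cup \{i\}}\}_S$ on the monotone cone of inefficiency profiles. The standard trick is to construct auxiliary MMDPs via Lemma \ref{lem.inef_existence} whose profiles are carefully designed perturbations, and to apply $\prop_{RcParM}$ in both directions between pivotal agents $j$ and $k$ to obtain equality constraints that together force additivity. Once linearity is established, the symmetry and average-efficiency constraints uniquely determine the coefficients, and the remaining bookkeeping (matching the $\frac{1}{|S\cap P|+1}$ weight and the prefactor $\weight$) is routine.
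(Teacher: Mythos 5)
Your Stage 1 (verification) is sound and essentially matches the paper's first part; you are in fact slightly more explicit than the paper on one point it glosses over, namely that $\inef_T = 0$ whenever $T$ contains no pivotal agent (needed to turn the double sum into $\sum_{T} \weight\cdot\inef_T$ for $\prop_{AE}$), and your term-by-term arguments for $\prop_{cParM}$, $\prop_{RcParM}$, $\prop_{cPerM}$ and $\prop_V$ are the same as the paper's.

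Stage 2, however, has a genuine gap at exactly the point you flag as the ``main obstacle.'' The claim that $\prop_{cParM}$ and $\prop_{RcParM}$ ``promote the blame to a linear function of $\{\inef_{S\cup\{i\}}\}_S$'' does not follow from the argument you sketch, and the paper never establishes (or needs) linearity. Lemmas \ref{lem.cparm} and \ref{lem.rcparm} only yield equalities between blame values, or between blame \emph{increments} of two pivotal agents $j,k$, at pairs of instances whose profiles (or profile differences restricted to $S\subseteq\{1,\dots,n\}\backslash\{j,k\}$) coincide; they give no additivity statement $\blame(x+y)=\blame(x)+\blame(y)$, and monotonicity-type axioms of this kind do not in general force linearity on the monotone cone of realizable profiles from Lemma \ref{lem.inef_existence}. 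What the paper does instead is construct an explicit chain of instances connecting $(M,\bpi)$ to a base case that symmetry alone resolves: first a uniform $\epsilon$-lift $(M^\epsilon,\bpi^\epsilon)$ of every coalition meeting the pivotal set $C_1$ (whose existence and pivotality-preservation must themselves be checked via Lemma \ref{lem.inef_existence}), then an induction over the nonempty subsets $S_\iota\subseteq C_1$ in decreasing order of cardinality, restoring the true value of $\inef_{S}$ for $S\cap C_1=S_{\iota}$ one index at a time. At each step, $\prop_{cParM}$ forces zero increment for pivotal agents outside $S_{\iota+1}$, $\prop_{RcParM}$ forces a \emph{common} increment $r$ for all agents in $S_{\iota+1}$ (because for $j,k\in S_{\iota+1}$ the profile differences vanish on every $S$ excluding both), $\prop_I$ kills the non-pivotal agents, and $\prop_{AE}$ computes $r$; the base case $\iota=1$ is the instance where all pivotal agents are interchangeable, so $\prop_S$ plus $\prop_{AE}$ pins it down. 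Without this construction (or an equivalent one), your appeal to symmetry ``forcing the coefficient of $\inef_{S\cup\{i\}}$ to depend only on the number of pivotal agents in $S\cup\{i\}$'' has nothing to act on, since $\prop_S$ only constrains agents that are symmetric in a \emph{given} instance and there is as yet no coefficient representation to constrain. So the uniqueness direction remains unproved as written.
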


\begin{proof}
The proof is separated into two parts. In the first part we prove that $\blamefunc_{AP}$ satisfies the mentioned properties, while in the second part we show that if a blame attribution method satisfies all mentioned properties, it must be the $\blamefunc_{AP}$ method.

\subsubsubsection{\textbf{First Part}}

We prove the properties as follows:
\begin{itemize}
    \item \underline{$\prop_{AE}$ (average-efficiency):} 
    Consider $M$, $\bpi$. By using the definition of $\blame = \blamefunc_{AP}(M, \bpi)$:
    \begin{align*}
        \sum_{i = 1}^n \blamei &= 
        \sum_{i = 1}^n\sum_{S \subseteq \{1, ..., n\} \backslash \{i\}} \weight \cdot \frac{c(M, \bpi, i)}{\sum_{j \in S} c(M, \bpi, j) + 1} \cdot \inef_{S \cup \{ i \}} = \\
        &= \frac{1}{2^n - 1} \cdot \sum_{i \in \{1, ..., n \} | c(M, \bpi, i) = 1} \sum_{S \subseteq \{1, ..., n\} \backslash \{i\}} \frac{1}{\sum_{j \in S} c(M, \bpi, j) + 1} \cdot \inef_{S \cup \{ i \}} =  \\
        &= \frac{1}{2^n - 1} \cdot \sum_{i \in \{1, ..., n \} | c(M, \bpi, i) = 1} \sum_{S \subseteq \{1, ..., n\} | i \in S} \frac{1}{\sum_{j \in S} c(M, \bpi, j)} \cdot \inef_S =  \\
        &= \frac{1}{2^n - 1} \cdot \sum_{S \subseteq \{1, ..., n\}} \sum_{i \in S | c(M, \bpi, i) = 1} \frac{1}{\sum_{j \in S} c(M, \bpi, j)} \cdot \inef_S = \\
        &= \frac{1}{2^n - 1} \cdot \sum_{S \subseteq \{1, ..., n\}} \inef_S,
    \end{align*}
    and hence property $\prop_{AE}$ (average-efficiency) is satisfied.
    \item \underline{$\prop_{S}$ (symmetry):} 
    Consider $M$, $\bpi$, and agents $i$ and $j$ such that  $\inef_{S \cup \{i \}} = \inef_{S \cup \{j \}}$ for all $S \subseteq \{1, ..., n\} \backslash \{i, j\}$. Notice that if $\inef_{S \cup \{i\}} = \inef_S$ for all $S$ then $\inef_{S \cup \{j\}} = \inef_S$ and $\inef_{S \cup \{i, j\}} = \inef_{S \cup \{j\}} = \inef_{S \cup \{i\}}$ for every $S \subseteq \{1, ..., n\} \backslash \{i, j\}$, and hence $\inef_{S \cup \{j\}} = \inef_S$ for all $S$. Given the definition of contribution function $c$, this implies that if $c(M, \bpi, i) =  0$, then $c(M, \bpi, j) = 0$. For similar reasons, it also holds that if $c(M, \bpi, j) =  0$, then $c(M, \bpi, i) = 0$, and hence $c(M, \bpi, i) = c(M, \bpi, j)$.
    By using the definition of $\blame = \blamefunc_{AP}(M, \bpi)$, we have that:
    \begin{align*}
        \blamei =& 
        \sum_{S \subseteq \{1, ..., n\} \backslash \{i\}} \weight \cdot \frac{c(M, \bpi, i)}{\sum_{k \in S} c(M, \bpi, k) + 1} \cdot \inef_{S \cup \{ i \}} = \\
        =& \sum_{S \subseteq \{1, ..., n\} \backslash \{i, j\}} \weight \cdot \frac{c(M, \bpi, i)}{\sum_{k \in S} c(M, \bpi, k) + 1} \cdot \inef_{S \cup \{ i \}} + \\
        &+ \sum_{S \subseteq \{1, ..., n\} \backslash \{i, j\}} \weight \cdot \frac{c(M, \bpi, i)}{\sum_{k \in S \cup \{ j \}} c(M, \bpi, k) + 1} \cdot \inef_{S \cup \{ i, j \}} =\\
        =& \sum_{S \subseteq \{1, ..., n\} \backslash \{i, j\}} \weight \cdot \frac{c(M, \bpi, j)}{\sum_{k \in S} c(M, \bpi, k) + 1} \cdot \inef_{S \cup \{ j \}} + \\
        &+ \sum_{S \subseteq \{1, ..., n\} \backslash \{i, j\}} \weight \cdot \frac{c(M, \bpi, j)}{\sum_{k \in S \cup \{ i \}} c(M, \bpi, k) + 1} \cdot \inef_{S \cup \{ i, j \}} =\\
        =& \sum_{S \subseteq \{1, ..., n\} \backslash \{j\}} \weight \cdot \frac{c(M, \bpi, j)}{\sum_{k \in S} c(M, \bpi, k) + 1} \cdot \inef_{S \cup \{ j \}} = \blamej,
    \end{align*}
    and hence property $\prop_{S}$ (symmetry) is satisfied.
    \item \underline{$\prop_{I}$ (invariance):} Consider $M$, $\bpi$, and agent $i$
    such that $\inef_{S \cup \{i \}} = \inef_{S}$ for all $S$. Given the definitions of contribution function $c$ and $\blame = \blamefunc_{AP}(M, \bpi)$, this implies that $\blame_i = 0$. Hence, property $\prop_I$ (invariance) is satisfied.
    \item \underline{$\prop_{cParM}$ (c-participation monotonicity):} 
    Consider $M^1$, $\bpi^1$ and $M^2$, $\bpi^2$ such that $c(M^1, \bpi^1, i) = c(M^2, \bpi^2, i)$ for every $i$. Consider also agent $i$ such that $\inef^1_{S \cup \{i\}} \ge \inef^2_{S \cup \{i\}}$ for all $S$. By using the definitions of $\blame^1 = \blamefunc_{AP}(M^1, \bpi^1)$ and $\blame^2 = \blamefunc_{AP}(M^2, \bpi^2)$, this implies:
    \begin{align*}
        \blame^1_i =& \sum_{S \subseteq \{1, ..., n\} \backslash \{i\}} \weight \cdot \frac{c(M^1, \bpi^1, i)}{\sum_{j \in S} c(M^1, \bpi^1, j) + 1} \cdot \inef^1_{S \cup \{ i \}} =\\
        =& \sum_{S \subseteq \{1, ..., n\} \backslash \{i\}} \weight \cdot \frac{c(M^2, \bpi^2, i)}{\sum_{j \in S} c(M^2, \bpi^2, j) + 1} \cdot \inef^1_{S \cup \{ i \}} \ge \\
        \ge& \sum_{S \subseteq \{1, ..., n\} \backslash \{i\}} \weight \cdot \frac{c(M^2, \bpi^2, i)}{\sum_{j \in S} c(M^2, \bpi^2, j) + 1} \cdot \inef^2_{S \cup \{ i \}} = \blame^2_i,
    \end{align*}
    and hence property $\prop_{cParM}$ (c-participation monotonicity) is satisfied.
    \item \underline{$\prop_{RcParM}$ (relative c-participation monotonicity):} 
    Consider $M^1$, $\bpi^1$ and $M^2$, $\bpi^2$ such that $c(M^1, \bpi^1, i) = c(M^2, \bpi^2, i)$ for every $i$. Consider also agents $j$ and $k$ such that $c(M^1, \bpi^1, j) = c(M^1, \bpi^1, k)$ and $\inef^1_{S \cup \{j \}} - \inef^2_{S \cup \{j \}}\geq \inef^1_{S \cup \{k \}} - \inef^2_{S \cup \{k \}}$ for all $S \subseteq \{1, ..., n\} \backslash \{j, k\}$. By using the definitions of $\blame^1 = \blamefunc_{AP}(M^1, \bpi^1)$ and $\blame^2 = \blamefunc_{AP}(M^2, \bpi^2)$, this implies:
    \begin{align*}
        \blamej^1 - \blamej^2 =& \sum_{S \subseteq \{1, ..., n\} \backslash \{j\}} \weight \cdot \frac{c(M^1, \bpi^1, j)}{\sum_{i \in S} c(M^1, \bpi^1, i) + 1} \cdot \left [ \inef^1_{S \cup \{ j \}} - \inef^2_{S \cup \{ j \}}\right] =\\
        =& \sum_{S \subseteq \{1, ..., n\} \backslash \{j,k\}} \weight \cdot \frac{c(M^1, \bpi^1, j)}{\sum_{i \in S} c(M^1, \bpi^1, i) + 1} \cdot \left [ \inef^1_{S \cup \{ j \}} - \inef^2_{S \cup \{ j \}}\right] +\\
        &+ \sum_{S \subseteq \{1, ..., n\} \backslash \{j,k\}} \weight \cdot \frac{c(M^1, \bpi^1, j)}{\sum_{i \in S \cup \{ k \}} c(M^1, \bpi^1, i) + 1} \cdot \left [ \inef^1_{S \cup \{ j,k \}} - \inef^2_{S \cup \{ j,k \}}\right] \ge\\
        \ge& \sum_{S \subseteq \{1, ..., n\} \backslash \{j,k\}} \weight \cdot \frac{c(M^1, \bpi^1, k)}{\sum_{i \in S} c(M^1, \bpi^1, i) + 1} \cdot \left [ \inef^1_{S \cup \{ k \}} - \inef^2_{S \cup \{ k \}}\right] +\\
        &+ \sum_{S \subseteq \{1, ..., n\} \backslash \{j,k\}} \weight \cdot \frac{c(M^1, \bpi^1, k)}{\sum_{i \in S \cup \{ j \}} c(M^1, \bpi^1, i) + 1} \cdot \left [ \inef^1_{S \cup \{ j,k \}} - \inef^2_{S \cup \{ j,k \}}\right] =\\
        =& \sum_{S \subseteq \{1, ..., n\} \backslash \{k\}} \weight \cdot \frac{c(M^1, \bpi^1, k)}{\sum_{i \in S} c(M^1, \bpi^1, i) + 1} \cdot \left [ \inef^1_{S \cup \{ k \}} - \inef^2_{S \cup \{ k \}}\right] = \blamek^1 - \blamek^2,
    \end{align*}
    and hence property $\prop_{RcParM}$ (relative c-participation monotonicity) is satisfied.
    \item \underline{$\prop_{cPerM}$ (c-performance monotonicity):} 
    Consider $M$, $\bpi_{-i}$, $\pi_i$ and $\pi_i'$ such that $\return(\pi_i,\bpi_{-i}) \le \return(\pi_i',\bpi_{-i})$ and $c(M, (\pi_i,\bpi_{-i}), j) = c(M, (\pi_i',\bpi_{-i}), j)$ for every $j$. This implies that:
    \begin{align*}
        &\return(\pi_i,\bpi_{-i}) \le \return(\pi_i',\bpi_{-i}) \Rightarrow\\
        \Rightarrow& \return(\optpirb_{S \cup \{ i \}}, \bpi_{-S\cup\{i\}}) - \return(\pi_i,\bpi_{-i}) \ge \return(\optpirb_{S \cup \{ i \}}, \bpi_{-S\cup\{i\}}) - \return(\pi'_i,\bpi_{-i}) \Rightarrow\\
        \Rightarrow& \inef_{S\cup\{i\}} \ge \inef'_{S\cup\{i\}}
    \end{align*}
    for every $S \subseteq \{1, ..., n\} \backslash \{i\}$. Given the definitions of $\blame = \blamefunc_{AP}(M, (\pi_i,\bpi_{-i}))$ and $\blame' = \blamefunc_{AP}(M, (\pi_i',\bpi_{-i}))$, this implies that $\blamei \ge \blamei'$. Hence,  property $\prop_{cPerM}$ (c-performance monotonicity) is satisfied.
    \item \underline{$\prop_{V}$ (validity):} Consider $M$ , $\bpi$. Notice that $\sum_{S \subseteq \{1, ..., n\}} \frac{1}{2^n - 1} \cdot \inef_S \le \inef$. Given that $\blamefunc_{AP}$ satisfies property $\prop_{AE}$ (average efficiency), we have that $\sum_{i = 1}^n \blamei = \sum_{S \subseteq \{1, ..., n\}} \frac{1}{2^n - 1} \cdot \inef_S$, and thus $\sum_{i = 1}^n \blamei \le \inef$, where $\blame = \blamefunc_{AP}(M, \bpi)$. Hence property $\prop_V$ (validity) is satisfied.
\end{itemize}

\subsubsubsection{\textbf{Second Part}}

We begin by introducing some additional notation. Consider $M$ , $\bpi$. We define the sets of agents $C_0 = \big\{i \in \{1, \dots, n\} : c(M, \bpi, i) = 0 \big\}$ and $C_1 = \big\{i \in \{1, \dots, n\} : c(M, \bpi, i) = 1 \big\}$. Consider $M^\epsilon$, $\bpi^\epsilon$ such that:
\begin{align}\label{eq.inef_eps}
    \inef^\epsilon_{S} = \begin{cases}
            \inef_{S} + \epsilon  \quad &\mbox{ if } S \cap C_1 \neq \emptyset\\
            \inef_{S} \quad &\mbox{ otherwise,}
    \end{cases}
\end{align}
where $\epsilon > 0$. Note that for every subset $S$ such that $S \cap C_1 = \emptyset$ it holds that $\inef^\epsilon_{S} = 0$, but we use $\inef^\epsilon_{S} = \inef_S$ for notational simplicity. Moreover, notice that Eq. \eqref{eq.inef_eps} satisfies the conditions of Lemma \ref{lem.inef_existence}, and hence $M^\epsilon$, $\bpi^\epsilon$ exist.

We prove that $\blamefunc_{AP}$ uniquely satisfies the properties mentioned in Theorem \ref{thm.average_participation.unique} through two intermediate lemmas. Lemma \ref{lem.ap_unq_ifeps} states that if $\blamefunc(M^\epsilon, \bpi^\epsilon) = \blamefunc_{AP}(M^\epsilon, \bpi^\epsilon)$ then $\blamefunc(M, \bpi) = \blamefunc_{AP}(M, \bpi)$, and Lemma \ref{lem.ap_unq_eps} states that $\blamefunc(M^\epsilon, \bpi^\epsilon) = \blamefunc_{AP}(M^\epsilon, \bpi^\epsilon)$.

\begin{lemma}\label{lem.ap_unq_ifeps}
Consider $M$, $\bpi$ and $M^\epsilon$, $\bpi^\epsilon$, where $\inef^\epsilon_S$ is defined by Eq. \eqref{eq.inef_eps}. If $\blamefunc$ satisfies properties $\prop_{AE}$, $\prop_{S}$, $\prop_{I}$, $\prop_{cParM}$ and $\prop_{RcParM}$ and $\blamefunc(M^\epsilon, \bpi^\epsilon) = \blamefunc_{AP}(M^\epsilon, \bpi^\epsilon)$, then $\blamefunc(M, \bpi) = \blamefunc_{AP}(M, \bpi)$.
\end{lemma}

\begin{proof}
We state three claims that we prove after the end of the proof of Theorem \ref{thm.average_participation.unique}:

\begin{claim}\label{clm.c_eps}
$c(M, \bpi, i) = c(M^\epsilon, \bpi^\epsilon, i)$ for every $i$.
\end{claim}

\begin{claim}\label{clm.bl_c0_eps}
$\blamei = 0$ and $\blamei^\epsilon = 0$ for every $i \in C_0$, where $\blame = \blamefunc(M, \bpi)$ and $\blame^\epsilon = \blamefunc(M^\epsilon, \bpi^\epsilon)$.
\end{claim}

\begin{claim}\label{clm.bl_c1_eps}
$\blamei^\epsilon - \blamei = r$ for every $i \in C_1$, where $\blame = \blamefunc(M, \bpi)$ and $\blame^\epsilon = \blamefunc(M^\epsilon, \bpi^\epsilon)$, and $r = \frac{1}{|C_1|} \cdot \sum_{S \subseteq \{1, ...,n\}} \weight \cdot \left[\inef^\epsilon_S - \inef_S \right]$.
\end{claim}

Given Claim \ref{clm.bl_c1_eps}, the assumption $\blamefunc(M^\epsilon, \bpi^\epsilon) = \blamefunc_{AP}(M^\epsilon, \bpi^\epsilon)$ implies that for every $i \in C_1$:
\begin{align}\label{eq.ap_c1}
    \blamei =& \sum_{S \subseteq \{1, ..., n\} \backslash \{i\}} \weight \cdot \frac{1}{\sum_{j \in S} c(M^\epsilon, \bpi^\epsilon, j) + 1} \cdot \inef^\epsilon_{S \cup \{i\}} - \frac{1}{|C_1|} \cdot \sum_{S \subseteq \{1, ...,n\}} \weight \cdot \left[\inef^\epsilon_S - \inef_S \right].
\end{align}
Combining Claim \ref{clm.bl_c0_eps} and Eq. \eqref{eq.ap_c1} implies that:
\begin{align}\label{eq.eps}
    \blamei = c(M, \bpi, i) \cdot \left [ \sum_{S \subseteq \{1, ..., n\} \backslash \{i\}} \weight \cdot \frac{1}{\sum_{j \in S} c(M^\epsilon, \bpi^\epsilon, j) + 1} \cdot \inef^\epsilon_{S \cup \{i\}} - \frac{1}{|C_1|} \cdot \sum_{S \subseteq \{1, ...,n\}} \weight \cdot \left[\inef^\epsilon_S - \inef_S \right]\right].
\end{align}
Notice that $\blame = \blamefunc(M, \bpi)$ is uniquely defined by the properties of $\blamefunc$, Eq. \eqref{eq.eps}, and since $\blamefunc_{AP}$ satisfies all properties assumed for $\blamefunc$ (see Part 1), it must hold that $\blamefunc(M, \bpi) = \blamefunc_{AP}(M, \bpi)$. This concludes the proof of Lemma \ref{lem.ap_unq_ifeps}.
\end{proof}

\begin{lemma}\label{lem.ap_unq_eps}
Consider $M$, $\bpi$ and $M^\epsilon$, $\bpi^\epsilon$, where $\inef^\epsilon_S$ is defined by Eq. \eqref{eq.inef_eps}. If $\blamefunc$ satisfies properties $\prop_{AE}$, $\prop_{S}$, $\prop_{I}$, $\prop_{cParM}$ and $\prop_{RcParM}$, then $\blamefunc(M^\epsilon, \bpi^\epsilon) = \blamefunc_{AP}(M^\epsilon, \bpi^\epsilon)$.
\end{lemma}
\begin{proof}
Let $I = \{1, ..., 2^{|C_1|} - 1\}$ be an index set, and for each $\iota \in I$, let $S_\iota$ be a subset of $C_1$ other than $\emptyset$. We assume that the indexing of subsets $S \subseteq C_1$ satisfies the following condition: for every $\iota, \zeta \in I$, $\iota < \zeta$ whenever $|S_{\iota}| > |S_{\zeta}|$.

Consider $M$, $\bpi$ and $M^\epsilon$, $\bpi^\epsilon$, where $\inef^\epsilon_S$ is defined by Eq. \eqref{eq.inef_eps}. For each index number $\iota \in I$ consider $M^\iota$, $\bpi^\iota$ such that:
\begin{align}\label{eq.inef_iot}
    \inef^\iota_{S} = \begin{cases}
            \epsilon  \quad &\mbox{ if } S \cap C_1 = S_\zeta, \mbox{ where } \zeta > \iota\\
            \inef_S \quad &\mbox{ if } S \cap C_1 = \emptyset\\
            \inef_S + \epsilon \quad &\mbox{ otherwise,}
    \end{cases}
\end{align}
where $\epsilon > 0$. Note that for every subset $S$ such that $S \cap C_1 = \emptyset$ it holds that $\inef^\iota_{S} = 0$, but we use $\inef^\iota_{S} = \inef_S$ for notational simplicity. Moreover, notice that for every $\iota \in I$ Eq. \eqref{eq.inef_iot} satisfies the conditions of Lemma \ref{lem.inef_existence}, and hence $M^\iota$, $\bpi^\iota$ exist. Notice also that $\inef_S^{2^{|C_1|} - 1} = \inef_S^\epsilon$ for every $S$.

We state four claims that we prove after the end of the proof of Theorem \ref{thm.average_participation.unique}:

\begin{claim}\label{clm.c_iot}
For each $\iota \in I$, $c(M, \bpi, i) = c(M^\iota, \bpi^\iota, i)$ for every $i$.
\end{claim}

\begin{claim}\label{clm.bl_c0_iot}
For each $\iota \in I$, $\blamei^\iota = 0$ for every $i \in C_0$, where $\blame^\iota = \blamefunc(M^\iota, \bpi^\iota)$.
\end{claim}

\begin{claim}\label{clm.bl_c1not_iot}
For each $\iota \in I \backslash \{2^{|C_1|} - 1\}$, $\blamei^{\iota + 1} - \blamei^\iota = 0$ for every $i \in C_1 \backslash S_{\iota + 1}$, where $\blame^\iota = \blamefunc(M^\iota, \bpi^\iota)$ and $\blame^{\iota + 1} = \blamefunc(M^{\iota + 1}, \bpi^{\iota + 1})$.
\end{claim}

\begin{claim}\label{clm.bl_c1_iot}
For each $\iota \in I \backslash \{2^{|C_1|} - 1\}$, $\blamei^{\iota + 1} - \blamei^\iota = r$ for every $i \in S_{\iota + 1}$, where $\blame^\iota = \blamefunc(M^\iota, \bpi^\iota)$ and $\blame^{\iota + 1} = \blamefunc(M^{\iota + 1}, \bpi^{\iota + 1})$, and $r = \frac{1}{|S_{\iota + 1}|} \cdot \sum_{S \subseteq \{1, ...,n\}} \weight \cdot \left[\inef^{\iota + 1}_S - \inef^\iota_S \right]$.
\end{claim}

We prove that $\blamefunc(M^{2^{|C_1|} - 1}, \bpi^{2^{|C_1|} - 1}) = \blamefunc_{AP}(M^{2^{|C_1|} - 1}, \bpi^{2^{|C_1|} - 1})$, by using induction in the index number $\iota$. Note that because $\inef_S^{2^{|C_1|} - 1} = \inef_S^\epsilon$ for every $S$, showing $\blamefunc(M^{2^{|C_1|} - 1}, \bpi^{2^{|C_1|} - 1}) = \blamefunc_{AP}(M^{2^{|C_1|} - 1}, \bpi^{2^{|C_1|} - 1})$ is equivalent to showing that $\blamefunc(M^\epsilon, \bpi^\epsilon) = \blamefunc_{AP}(M^\epsilon, \bpi^\epsilon)$.

$\bm{\iota = 1}$: We show that $\blamefunc(M^1, \bpi^1) = \blamefunc_{AP}(M^1, \bpi^1)$. Because of the condition that the indexing of the subsets of $C_1$ has to satisfy, it follows that $S_1 = C_1$. Notice that for every two agents $i, j \in C_1$ it holds that $S \cup \{i\} \cap C_1 \ne C_1 = S_1$ and $S \cup \{j\} \cap C_1 \ne C_1 = S_1$, for every $S \subseteq \{1, ..., n\} \backslash \{i, j\}$. By using Eq. \eqref{eq.inef_iot}, this implies that $\inef^1_{S \cup \{i\}} = \epsilon = \inef^1_{S \cup \{j\}}$, for every $S \subseteq \{1, ..., n\} \backslash \{i, j\}$. Given that $\blamefunc$ is assumed to satisfy $\prop_S$ (symmetry), this implies that $\blamei^1 = \blamej^1$, where $\blame^1 = \blamefunc(M^1, \bpi^1)$. It follows that for every $i \in C_1$:
\begin{align*}
    \blamei^1 = \frac{1}{|C_1|} \cdot \sum_{j \in C_1} \blamej^1.
\end{align*}
By using Claim \ref{clm.bl_c0_iot}, we have that $\blamei^1 = \frac{1}{|C_1|} \cdot \sum_{j \in \{1, ..., n\}} \blamej^1$. Given that $\blamefunc$ satisfies $\prop_{AE}$ (average efficiency), this implies that for every $i \in C_1$:
\begin{align}
    \blamei^1 &= \frac{1}{|C_1|} \cdot \sum_{j \in \{1, ..., n\}} \blamej^1 = \frac{1}{|C_1|} \cdot \sum_{S \subseteq \{1, ..., n\}} \weight \cdot \inef^1_S. \label{eq.c1_iot=1}
\end{align}
Combining Claim \ref{clm.bl_c0_iot} and Eq. \eqref{eq.c1_iot=1} implies that:
\begin{align}\label{eq.iot=1}
    \blamei^1 = c(M, \bpi, i) \cdot \frac{1}{|C_1|} \cdot \sum_{S \subseteq \{1, ..., n\}} \weight \cdot \inef^1_S.
\end{align}
Notice that $\blame^1 = \blamefunc(M^1, \bpi^1)$ is uniquely defined by the properties of $\blamefunc$, Eq. \eqref{eq.iot=1}, and since $\blamefunc_{AP}$ satisfies all properties assumed for $\blamefunc$ (see Part 1), it must hold that $\blamefunc_{AP}(M^1, \bpi^1) = \blame^1$, and hence $\blamefunc(M^1, \bpi^1) = \blamefunc_{AP}(M^1, \bpi^1)$.

$\bm{\iota \in I \backslash \{2^{|C_1|} - 1\}}$: Given that $\blamefunc(M^\iota, \bpi^\iota) = \blamefunc_{AP}(M^\iota, \bpi^\iota)$, we show that $\blamefunc(M^{\iota + 1}, \bpi^{\iota + 1}) = \blamefunc_{AP}(M^{\iota + 1}, \bpi^{\iota + 1})$. 

By using the definition of $\blamefunc_{AP}(M^\iota, \bpi^\iota)$ and Claim \ref{clm.bl_c1not_iot}, the assumption $\blamefunc(M^\iota, \bpi^\iota) = \blamefunc_{AP}(M^\iota, \bpi^\iota)$ implies that for every $i \in C_1 \backslash S_{\iota + 1}$:
\begin{align}\label{eq.iota+1_not}
    \blamei^{\iota + 1} = \blamei^\iota = \sum_{S \subseteq \{1, ..., n\} \backslash \{i\}} \weight \cdot \frac{1}{\sum_{j \in S} c(M^\iota, \bpi^\iota, j) + 1} \cdot \inef^\iota_{S \cup \{i\}}.
\end{align}
By using the definition of $\blamefunc_{AP}(M^\iota, \bpi^\iota)$ and Claim \ref{clm.bl_c1_iot}, the assumption $\blamefunc(M^\iota, \bpi^\iota) = \blamefunc_{AP}(M^\iota, \bpi^\iota)$ implies that for every $i \in S_{\iota + 1}$:
\begin{align}\label{eq.iota+1}
    \blamei^{\iota + 1} =& \blamei^\iota + \frac{1}{|S_{\iota + 1}|} \cdot \sum_{S \subseteq \{1, ...,n\}} \weight \cdot \left[\inef^{\iota + 1}_S - \inef^\iota_S \right] =\nonumber\\
    =& \sum_{S \subseteq \{1, ..., n\} \backslash \{i\}} \weight \cdot \frac{1}{\sum_{j \in S} c(M^\iota, \bpi^\iota, j) + 1} \cdot \inef^\iota_{S \cup \{i\}} + \frac{1}{|S_{\iota + 1}|} \cdot \sum_{S \subseteq \{1, ...,n\}} \weight \cdot \left[\inef^{\iota + 1}_S - \inef^\iota_S \right].
\end{align}
Notice that $\blame^{\iota + 1} = \blamefunc(M^{\iota + 1}, \bpi^{\iota + 1})$ is uniquely defined by properties of $\blamefunc$, Claim \ref{clm.bl_c0_iot}, Eq. \eqref{eq.iota+1_not} and Eq. \eqref{eq.iota+1}, and since $\blamefunc_{AP}$ satisfies all the properties assumed for $\blamefunc$ (see Part 1), it must hold that $\blamefunc(M^{\iota + 1}, \bpi^{\iota + 1}) = \blame^{\iota + 1}$, and hence $\blamefunc(M^{\iota + 1}, \bpi^{\iota + 1}) = \blamefunc_{AP}(M^{\iota + 1}, \bpi^{\iota + 1})$. This concludes the induction step and the proof of Lemma \ref{lem.ap_unq_eps}.
\end{proof}
The second part of the proof is hence concluded.
\end{proof}

\addtocounter{claim}{-7}

\subsubsubsection{\textbf{Proofs of the Claims \ref{clm.c_eps}, \ref{clm.bl_c0_eps} and \ref{clm.bl_c1_eps}}}

\begin{claim}
$c(M, \bpi, i) = c(M^\epsilon, \bpi^\epsilon, i)$ for every $i$.
\end{claim}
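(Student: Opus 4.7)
The plan is to verify the identity $c(M,\bpi,i) = c(M^\epsilon,\bpi^\epsilon,i)$ by a short case analysis based on whether $i \in C_0$ or $i \in C_1$, relying on the fact that membership in $C_1$ controls when the $\epsilon$-shift in Eq.~\eqref{eq.inef_eps} is triggered.

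For $i \in C_0$, the hypothesis gives $\inef_{S \cup \{i\}} = \inef_S$ for every $S \subseteq \{1,\ldots,n\}$. The key observation is that $(S \cup \{i\}) \cap C_1 = S \cap C_1$ because $i \notin C_1$, so the conditional in Eq.~\eqref{eq.inef_eps} adds the same quantity (either $\epsilon$ or $0$) to $\inef_S$ and to $\inef_{S \cup \{i\}}$. Combined with $\inef_{S \cup \{i\}} = \inef_S$, this immediately yields $\inef^\epsilon_{S \cup \{i\}} = \inef^\epsilon_S$ for every $S$, so $c(M^\epsilon,\bpi^\epsilon,i) = 0$, matching $c(M,\bpi,i) = 0$.

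For $i \in C_1$, I just need to exhibit a single witness subset $T$ with $\inef^\epsilon_{T \cup \{i\}} \neq \inef^\epsilon_T$. The natural choice is $T = \emptyset$: since $\emptyset \cap C_1 = \emptyset$ while $\{i\} \cap C_1 = \{i\} \neq \emptyset$, Eq.~\eqref{eq.inef_eps} gives $\inef^\epsilon_{\emptyset} = \inef_{\emptyset} = 0$ whereas $\inef^\epsilon_{\{i\}} = \inef_{\{i\}} + \epsilon \geq \epsilon > 0$, and the two differ. Hence $c(M^\epsilon,\bpi^\epsilon,i) = 1 = c(M,\bpi,i)$.

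There is essentially no real obstacle: the whole argument is driven by the observation that the shift is triggered exactly when a subset intersects $C_1$, so augmenting by an agent in $C_0$ never crosses this threshold, while augmenting $\emptyset$ by any agent in $C_1$ always does. This asymmetry is precisely what the construction of $\inef^\epsilon$ is engineered to produce, so the claim follows without further work.
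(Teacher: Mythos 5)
Your proof is correct and follows essentially the same route as the paper's: the same case split on $i \in C_0$ versus $i \in C_1$, the same witness $T = \emptyset$ for the pivotal case, and the same observation that adjoining an agent from $C_0$ leaves $S \cap C_1$ (and hence the $\epsilon$-shift) unchanged. No gaps.
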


\begin{proof}
Consider agent $i$ such that $i \in C_1$. Given Eq. \eqref{eq.inef_eps}, this implies that $\inef^\epsilon_i = \inef_i + \epsilon > 0 = \inef^\epsilon_\emptyset$, and thus $c(M^\epsilon, \bpi^\epsilon, i) = 1$. Hence, $c(M, \bpi, i) = c(M^\epsilon, \bpi^\epsilon, i)$.

Consider agent $i$ such that $i \in C_0$. Given the definition of contribution function $c$, we have that $\inef_{S \cup \{i\}} = \inef_S$ for every $S$. By using Eq. \eqref{eq.inef_eps}, this implies that $\inef^\epsilon_{S \cup \{i\}} = \inef^\epsilon_S$ for every $S$ such that $S \cap C_1 = \emptyset$ and $\inef^\epsilon_{S \cup \{i\}} = \inef_{S \cup \{i\}} + \epsilon = \inef_S + \epsilon = \inef^\epsilon_S$ for every $S$ such that $S \cap C_1 \ne \emptyset$, and thus $c(M^\epsilon, \bpi^\epsilon, i) = 0$. Hence, $c(M, \bpi, i) = c(M^\epsilon, \bpi^\epsilon, i)$.
\end{proof}

\begin{claim}
$\blamei = 0$ and $\blamei^\epsilon = 0$ for every $i \in C_0$, where $\blame = \blamefunc(M, \bpi)$ and $\blame^\epsilon = \blamefunc(M^\epsilon, \bpi^\epsilon)$.
\end{claim}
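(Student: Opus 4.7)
The plan is to observe that this claim is essentially a direct invocation of the invariance property $\prop_I$, applied twice: once to $(M, \bpi)$ and once to $(M^\epsilon, \bpi^\epsilon)$. Since $\blamefunc$ is assumed to satisfy $\prop_I$, the only task is to verify that the hypothesis of invariance (namely, $\inef_{S \cup \{i\}} = \inef_S$ for all $S$) holds for any $i \in C_0$ in each of the two settings.

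For the first part, $\blame_i = 0$ for $i \in C_0$: by the very definition of $C_0$, we have $c(M, \bpi, i) = 0$, which is equivalent (by the rewritten form of the contribution function) to saying that $\inef_{S \cup \{i\}} = \inef_S$ for every $S \subseteq \{1, \dots, n\}$. Applying $\prop_I$ to $\blamefunc$ and $(M, \bpi)$ yields $\blame_i = 0$ immediately.

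For the second part, $\blame_i^\epsilon = 0$ for $i \in C_0$: I would invoke the previously established Claim \ref{clm.c_eps}, which gives $c(M^\epsilon, \bpi^\epsilon, i) = c(M, \bpi, i) = 0$ for every $i \in C_0$. This is exactly the statement that $\inef^\epsilon_{S \cup \{i\}} = \inef^\epsilon_S$ for every $S$. Applying $\prop_I$ now to $\blamefunc$ and $(M^\epsilon, \bpi^\epsilon)$ yields $\blame_i^\epsilon = 0$. (If one preferred to avoid the appeal to Claim \ref{clm.c_eps}, one could verify the required equality directly by a quick case split on whether $S \cap C_1 = \emptyset$: in both cases $(S \cup \{i\}) \cap C_1 = S \cap C_1$ since $i \notin C_1$, so Eq. \eqref{eq.inef_eps} contributes the same additive $\epsilon$ to both sides, reducing the claim to $\inef_{S \cup \{i\}} = \inef_S$, which we already have.)

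There is really no substantive obstacle here; the proof is a two-line consequence of invariance and Claim \ref{clm.c_eps}. The only thing worth being slightly careful about is recognizing that the equivalent reformulation of the contribution function $c$ in terms of $\inef_S$ (noted at the start of Appendix \ref{sec.proof_ap}) is what directly matches the hypothesis of $\prop_I$.
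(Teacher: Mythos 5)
Your proposal is correct and matches the paper's proof, which likewise derives both equalities from the invariance property $\prop_I$ together with Claim \ref{clm.c_eps} (via the reformulation of $c$ in terms of the marginal inefficiencies). You simply spell out the details that the paper leaves implicit.
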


\begin{proof}
Given the definition of contribution function $c$, the lemma follows from Claim \ref{clm.c_eps} and the assumption that $\blamefunc$ satisfies property $\prop_I$ (invariance).
\end{proof}
\begin{claim}
$\blamei^\epsilon - \blamei = r$ for every $i \in C_1$, where $\blame = \blamefunc(M, \bpi)$ and $\blame^\epsilon = \blamefunc(M^\epsilon, \bpi^\epsilon)$, and $r = \frac{1}{|C_1|} \cdot \sum_{S \subseteq \{1, ...,n\}} \weight \cdot \left[\inef^\epsilon_S - \inef_S \right]$.
\end{claim}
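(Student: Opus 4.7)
\textbf{Proof plan for Claim~\ref{clm.bl_c1_eps}.} The plan is to show the difference $\blamei^\epsilon - \blamei$ is the same constant across all $i \in C_1$ (using $\prop_{RcParM}$), and then to pin down that constant by a global accounting argument using $\prop_{AE}$ together with Claim~\ref{clm.bl_c0_eps}.

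\textbf{Step 1: equality of differences across $C_1$.} I would fix arbitrary $i, j \in C_1$ and apply Lemma~\ref{lem.rcparm} to the pair $(M^\epsilon, \bpi^\epsilon)$ and $(M, \bpi)$. The hypotheses are checked as follows: (a) Claim~\ref{clm.c_eps} gives $c(M^\epsilon, \bpi^\epsilon, k) = c(M, \bpi, k)$ for every $k$; (b) $c(M^\epsilon, \bpi^\epsilon, i) = c(M^\epsilon, \bpi^\epsilon, j) = 1$ since $i, j \in C_1$; (c) for any $S \subseteq \{1,\dots,n\} \setminus \{i,j\}$ the sets $S \cup \{i\}$ and $S \cup \{j\}$ both meet $C_1$, so by Eq.~\eqref{eq.inef_eps} we have $\inef^\epsilon_{S \cup \{i\}} - \inef_{S \cup \{i\}} = \epsilon = \inef^\epsilon_{S \cup \{j\}} - \inef_{S \cup \{j\}}$. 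Lemma~\ref{lem.rcparm} then yields $\blamei^\epsilon - \blamei = \blamej^\epsilon - \blamej$. Denote this common value by $r$.

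\textbf{Step 2: identifying $r$.} Since $\blamefunc$ satisfies $\prop_{AE}$, applied to both $(M, \bpi)$ and $(M^\epsilon, \bpi^\epsilon)$ and subtracting, I get
\begin{align*}
\sum_{i=1}^n \bigl(\blamei^\epsilon - \blamei\bigr) \;=\; \sum_{S \subseteq \{1,\dots,n\}} \weight \cdot \bigl[\inef^\epsilon_S - \inef_S\bigr].
\end{align*}
By Claim~\ref{clm.bl_c0_eps}, the summands with $i \in C_0$ vanish, so the left-hand side equals $\sum_{i \in C_1}(\blamei^\epsilon - \blamei) = |C_1|\cdot r$ by Step~1. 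Dividing by $|C_1|$ gives precisely the expression for $r$ in the claim.

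\textbf{Main obstacle.} The only subtle part is verifying condition (c) of Lemma~\ref{lem.rcparm}: one has to notice that for any $S$ disjoint from $\{i,j\}$ the set $S \cup \{i\}$ automatically intersects $C_1$ (because $i \in C_1$), so the ``$\epsilon$-bump'' in Eq.~\eqref{eq.inef_eps} applies uniformly to both $\inef^\epsilon_{S \cup \{i\}}$ and $\inef^\epsilon_{S \cup \{j\}}$, making the marginal-inefficiency differences match exactly. Once this is observed the rest is bookkeeping.
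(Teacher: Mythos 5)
Your proposal is correct and follows essentially the same route as the paper's proof: both establish that the differences $\blamei^\epsilon - \blamei$ coincide across $C_1$ via Lemma~\ref{lem.rcparm} (using Claim~\ref{clm.c_eps} and the uniform $\epsilon$-increment from Eq.~\eqref{eq.inef_eps}), and then identify the common value $r$ by applying $\prop_{AE}$ to both instances and discarding the $C_0$ terms via Claim~\ref{clm.bl_c0_eps}. Your write-up is, if anything, slightly more explicit than the paper's in verifying the hypotheses of Lemma~\ref{lem.rcparm}.
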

\begin{proof}
Notice that for every two agents $j, k \in C_1$ it holds that $c(M, \bpi, j) = c(M, \bpi, k)$ and that $\inef^\epsilon_{S \cup \{j\}} - \inef_{S \cup \{j\}} = \inef^\epsilon_{S \cup \{k\}} - \inef_{S \cup \{k\}} = \epsilon$ for every $S \subseteq \{1, ..., n\} \backslash \{j, k\}$. Furthermore, from Claim \ref{clm.c_eps} we have that $c(M, \bpi, i) = c(M^\epsilon, \bpi^\epsilon, i)$ for every $i$, and thus Lemma \ref{lem.rcparm} applies, $\blamej^\epsilon - \blamej = \blamek^\epsilon - \blamek = r$, where $r$ is some constant. Notice that:
\begin{align*}
    r = \frac{1}{|C_1|} \cdot \sum_{i \in C_1} \blamei^\epsilon - \blamei.
\end{align*}
By using Claim \ref{clm.bl_c0_eps}, we have that $r = \frac{1}{|C_1|} \cdot \sum_{i \in \{1, ..., n\}} \blamei^\epsilon - \blamei$. Given that $\blamefunc$ is assumed to satisfy $\prop_{AE}$ (average efficiency), this implies that:
\begin{align*}
    r &= \frac{1}{|C_1|} \cdot \sum_{i \in \{1, ..., n\}} \blamei^\epsilon - \blamei = \frac{1}{|C_1|} \cdot \sum_{S \subseteq \{1, ...,n\}} \weight \cdot \left[\inef^\epsilon_S - \inef_S \right],
\end{align*}
and hence $\blamei^\epsilon - \blamei =\frac{1}{|C_1|} \cdot \sum_{S \subseteq \{1, ...,n\}} \weight \cdot \left[\inef^\epsilon_S - \inef_S \right]$ for every $i \in C_1$.
\end{proof}

\subsubsubsection{\textbf{Proofs of the Claims \ref{clm.c_iot}, \ref{clm.bl_c0_iot}, \ref{clm.bl_c1not_iot} and \ref{clm.bl_c1_iot}}}

\begin{claim}
For each $\iota \in I$, $c(M, \bpi, i) = c(M^\iota, \bpi^\iota, i)$ for every $i$.
\end{claim}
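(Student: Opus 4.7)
The plan is to do a straightforward case analysis on whether agent $i$ lies in $C_1$ or $C_0$, and in each case directly read off the value of $c(M^\iota, \bpi^\iota, i)$ from the piecewise definition of $\inef^\iota$ in Eq. \eqref{eq.inef_iot}. Since $c(M,\bpi,i) = 1$ iff $i \in C_1$, it suffices to show that $c(M^\iota, \bpi^\iota, i) = 1$ for $i \in C_1$ and $c(M^\iota, \bpi^\iota, i) = 0$ for $i \in C_0$.

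For $i \in C_1$, I would exhibit a witnessing $S$, namely $S = \emptyset$. Note $\emptyset \cap C_1 = \emptyset$, so the middle branch of Eq. \eqref{eq.inef_iot} gives $\inef^\iota_\emptyset = \inef_\emptyset = 0$. On the other hand, $\{i\} \cap C_1 = \{i\} \ne \emptyset$, so we are in either the top branch (giving $\inef^\iota_{\{i\}} = \epsilon$) or the bottom branch (giving $\inef^\iota_{\{i\}} = \inef_i + \epsilon$); both are strictly positive since $\epsilon > 0$ and $\inef_i \ge 0$. Hence $\inef^\iota_{\{i\}} > \inef^\iota_\emptyset$, which forces $c(M^\iota, \bpi^\iota, i) = 1$.

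For $i \in C_0$, the goal is to show $\inef^\iota_{S \cup \{i\}} = \inef^\iota_S$ for every $S \subseteq \{1,\dots,n\}$. The key observation is that because $i \notin C_1$, we have $(S \cup \{i\}) \cap C_1 = S \cap C_1$, so $S$ and $S \cup \{i\}$ fall in the same branch of Eq. \eqref{eq.inef_iot}. I would then check the three branches one by one: in the empty-intersection branch both equal $\inef_S = \inef_{S \cup \{i\}}$ (using $c(M,\bpi,i) = 0$, i.e., $\inef_{S \cup \{i\}} = \inef_S$ from the definition of $c$); in the $S_\zeta$-branch both equal $\epsilon$; and in the ``otherwise'' branch both equal $\inef_S + \epsilon = \inef_{S \cup \{i\}} + \epsilon$, again invoking $i \in C_0$. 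Hence $c(M^\iota, \bpi^\iota, i) = 0$.

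There is essentially no obstacle here; the whole claim is a bookkeeping check that the perturbation in Eq. \eqref{eq.inef_iot} depends on $S$ only through $S \cap C_1$, and therefore leaves the marginal contributions of every $C_0$-agent untouched while keeping each $C_1$-agent pivotal (via the singleton $\{i\}$). The only mild subtlety worth flagging is to not overlook the top branch when bounding $\inef^\iota_{\{i\}}$ from below for $i \in C_1$, but since that branch assigns value exactly $\epsilon > 0$, the argument goes through uniformly.
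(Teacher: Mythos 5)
Your proof is correct and follows essentially the same route as the paper's: for $i \in C_1$ you use the singleton $\{i\}$ against $\emptyset$ to witness pivotality (noting $\inef^\iota_{\{i\}} \ge \epsilon > 0 = \inef^\iota_\emptyset$ regardless of which branch applies), and for $i \in C_0$ you check the three branches of Eq. \eqref{eq.inef_iot} using the fact that $(S \cup \{i\}) \cap C_1 = S \cap C_1$ and $\inef_{S \cup \{i\}} = \inef_S$. No gaps; the observation that the perturbation depends on $S$ only through $S \cap C_1$ is exactly the bookkeeping the paper's proof carries out.
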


\begin{proof}
Consider agent $i$ such that $i \in C_1$. Given Eq. \eqref{eq.inef_iot}, this implies that $\inef^\iota_i \ge \epsilon > 0 = \inef^\iota_\emptyset$, and thus $c(M^\iota, \bpi^\iota, i) = 1$. Hence, $c(M, \bpi, i) = c(M^\iota, \bpi^\iota, i)$.

Consider agent $i$ such that $i \in C_0$. Given the definition of contribution function $c$, we have that $\inef_{S \cup \{i\}} = \inef_S$ for every $S$. By using Eq. \eqref{eq.inef_iot}, this implies that $\inef^\iota_{S \cup \{i\}} = \epsilon = \inef^\iota_S$ for every $S$ such that $S \cap C_1 = S_\zeta$, where $\zeta > \iota$, $\inef^\iota_{S \cup \{i\}} = \inef^\iota_S$ for every $S$ such that $S \cap C_1 = \emptyset$ and $\inef^\iota_{S \cup \{i\}} = \inef_{S \cup \{i\}} + \epsilon = \inef_S + \epsilon = \inef^\iota_S$ for every other $S$, and thus $c(M^\iota, \bpi^\iota, i) = 0$. Hence, $c(M, \bpi, i) = c(M^\iota, \bpi^\iota, i)$.
\end{proof}

\begin{claim}
For each $\iota \in I$, $\blamei^\iota = 0$ for every $i \in C_0$, where $\blame^\iota = \blamefunc(M^\iota, \bpi^\iota)$.
\end{claim}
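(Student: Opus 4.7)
The plan is to mirror the argument used for Claim \ref{clm.bl_c0_eps} (now applied to $(M^\iota, \bpi^\iota)$ rather than $(M^\epsilon, \bpi^\epsilon)$), since structurally the two situations are identical: both perturbations leave the contribution function $c$ unchanged, and $\blamefunc$ is assumed to satisfy $\prop_I$ (invariance).

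First I would fix an arbitrary index $\iota \in I$ and an arbitrary agent $i \in C_0$. By definition of $C_0$ we have $c(M,\bpi,i) = 0$, and then Claim \ref{clm.c_iot} (already established) gives $c(M^\iota,\bpi^\iota,i) = 0$. Unfolding the definition of the contribution function in the model $(M^\iota,\bpi^\iota)$, this equality is exactly the statement $\inef^\iota_{S \cup \{i\}} = \inef^\iota_S$ for every $S \subseteq \{1,\dots,n\}$.

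Next I would invoke the hypothesis that $\blamefunc$ satisfies $\prop_I$ (invariance): whenever $\inef^\iota_{S \cup \{i\}} = \inef^\iota_S$ for all $S$, the blame assigned to agent $i$ must be zero. Applied to $(M^\iota,\bpi^\iota)$, this yields $\blamei^\iota = 0$, where $\blame^\iota = \blamefunc(M^\iota,\bpi^\iota)$. Since $\iota$ and $i \in C_0$ were arbitrary, the claim follows.

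There is no real obstacle here: the claim is essentially a one-line consequence of Claim \ref{clm.c_iot} combined with the invariance axiom, and its proof is literally a verbatim copy of the argument used for Claim \ref{clm.bl_c0_eps}, simply replacing the superscript $\epsilon$ by $\iota$. The only thing to double-check is that Claim \ref{clm.c_iot} is indeed available at this point in the induction-free preparatory lemmas (it is, since it was stated just before Claim \ref{clm.bl_c0_iot}) and that the invariance property used is the non-approximate version $\prop_I$, which is the one assumed in the statement of Theorem \ref{thm.average_participation.unique}.
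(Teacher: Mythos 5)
Your proof is correct and follows exactly the paper's argument: the paper's own proof of this claim likewise deduces it in one step from Claim \ref{clm.c_iot} together with the definition of the contribution function $c$ and the assumed invariance property $\prop_I$. Your version simply spells out the intermediate unfolding of $c(M^\iota,\bpi^\iota,i)=0$ into $\inef^\iota_{S\cup\{i\}}=\inef^\iota_S$ for all $S$, which the paper leaves implicit.
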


\begin{proof}
Given the definition of contribution function $c$, the lemma follows from Claim \ref{clm.c_iot} and the assumption that $\blamefunc$ satisfies property $\prop_I$ (invariance).
\end{proof}

Based on the next observation we prove the rest of the claims:

\begin{observation}\label{obs}
Observe that for each $\iota \in I \backslash \{2^{|C_1|} - 1\}$, $\inef^{\iota + 1}_S = \inef^\iota_S$ for every $S$ such that $S \cap C_1 \ne S_{\iota + 1}$.\footnote{Although it is not needed for the proofs of Claims \ref{clm.bl_c1not_iot} and \ref{clm.bl_c1_iot}, we mention that $\inef^{\iota + 1}_S = \inef^\iota_S + \inef_{S_{\iota + 1}}$ for every $S$ such that $S \cap C_1 = S_{\iota + 1}$.}
\end{observation}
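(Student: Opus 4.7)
The plan is to prove the observation by a direct case analysis on $S \cap C_1$, using the piecewise definition in Eq.~\eqref{eq.inef_iot} applied at indices $\iota$ and $\iota+1$. Fix $S \subseteq \{1, \ldots, n\}$ with $S \cap C_1 \neq S_{\iota+1}$. Recall from the indexing convention that $S_1, \ldots, S_{2^{|C_1|}-1}$ enumerate exactly the nonempty subsets of $C_1$, so $S \cap C_1$ is either empty or equal to $S_\zeta$ for a unique $\zeta \in I$; this partitions the argument into three subcases.

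First, if $S \cap C_1 = \emptyset$, then the second branch of Eq.~\eqref{eq.inef_iot} applies both for $\iota$ and $\iota+1$, giving $\inef^\iota_S = \inef_S = \inef^{\iota+1}_S$. Second, suppose $S \cap C_1 = S_\zeta$ with $\zeta > \iota+1$. Then $\zeta > \iota+1 > \iota$, so the first branch of Eq.~\eqref{eq.inef_iot} triggers at both indices, yielding $\inef^\iota_S = \epsilon = \inef^{\iota+1}_S$. Third, suppose $S \cap C_1 = S_\zeta$ with $\zeta \le \iota$; since $\zeta \neq \iota+1$ is already excluded by hypothesis, the only remaining possibility is $\zeta < \iota+1$, i.e., $\zeta \le \iota$. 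Then neither the first branch ($\zeta > \iota$) nor the second branch ($S \cap C_1 = \emptyset$) fires at index $\iota$, so the third (\emph{otherwise}) branch gives $\inef^\iota_S = \inef_S + \epsilon$; an identical argument at index $\iota+1$ (using $\zeta \le \iota < \iota+1$) gives $\inef^{\iota+1}_S = \inef_S + \epsilon$. Combining the three subcases establishes $\inef^{\iota+1}_S = \inef^\iota_S$ in all of them.

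There is really no obstacle here beyond bookkeeping: the only case in which the value changes between index $\iota$ and index $\iota+1$ is precisely $S \cap C_1 = S_{\iota+1}$, where the first branch (worth $\epsilon$) at step $\iota$ is replaced by the third branch (worth $\inef_S + \epsilon$) at step $\iota+1$; this is exactly the case excluded by the hypothesis, and it is also exactly the case needed in Claim~\ref{clm.bl_c1_iot} to compute the increment $\weight \cdot (\inef_{S_{\iota+1}})$ that feeds into the induction in Lemma~\ref{lem.ap_unq_eps}. No appeal to Lemma~\ref{lem.inef_existence} or to properties of $\blamefunc$ is required: the observation is a purely arithmetic property of the definition \eqref{eq.inef_iot}.
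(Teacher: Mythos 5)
Your case analysis is correct and complete: the paper states this as an immediate consequence of the piecewise definition in Eq.~\eqref{eq.inef_iot} and gives no separate proof, and your three subcases ($S \cap C_1 = \emptyset$, $S \cap C_1 = S_\zeta$ with $\zeta > \iota+1$, and $S \cap C_1 = S_\zeta$ with $\zeta \le \iota$) are exactly the bookkeeping the paper leaves implicit. Your closing remark correctly identifies $S \cap C_1 = S_{\iota+1}$ as the unique case where the value changes, matching the footnote.
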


\begin{claim}
For each $\iota \in I \backslash \{2^{|C_1|} - 1\}$, $\blamei^{\iota + 1} - \blamei^\iota = 0$ for every $i \in C_1 \backslash S_{\iota + 1}$, where $\blame^\iota = \blamefunc(M^\iota, \bpi^\iota)$ and $\blame^{\iota + 1} = \blamefunc(M^{\iota + 1}, \bpi^{\iota + 1})$.
\end{claim}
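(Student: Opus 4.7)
The plan is to use the c-participation monotonicity property together with Claim \ref{clm.c_iot} and Observation \ref{obs} to force the blame for $i \in C_1 \setminus S_{\iota+1}$ to remain unchanged when passing from $(M^\iota, \bpi^\iota)$ to $(M^{\iota+1}, \bpi^{\iota+1})$. Concretely, I would invoke Lemma \ref{lem.cparm}, which states that if two instances agree on the contribution function for every agent and on $\inef_{S \cup \{i\}}$ for every $S$, then the blame of agent $i$ is the same in both instances.

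First I would verify the hypothesis on contribution functions: by Claim \ref{clm.c_iot} applied to both $\iota$ and $\iota+1$, we have $c(M^\iota, \bpi^\iota, j) = c(M, \bpi, j) = c(M^{\iota+1}, \bpi^{\iota+1}, j)$ for every agent $j$, so the two instances induce the same pivotality pattern.

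Next I would check that the marginal inefficiencies match at sets containing $i$. Fix $i \in C_1 \setminus S_{\iota+1}$ and any $S \subseteq \{1, \dots, n\} \setminus \{i\}$. Since $i \in C_1$ but $i \notin S_{\iota+1}$, the set $(S \cup \{i\}) \cap C_1$ contains $i$ and hence differs from $S_{\iota+1}$. By Observation \ref{obs} this forces $\inef^{\iota+1}_{S \cup \{i\}} = \inef^\iota_{S \cup \{i\}}$, which is exactly the hypothesis of Lemma \ref{lem.cparm}. Applying the lemma yields $\blamei^{\iota+1} = \blamei^\iota$, i.e.\ $\blamei^{\iota+1} - \blamei^\iota = 0$, as desired.

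The argument is essentially immediate once Observation \ref{obs} is in place; the only subtlety is the bookkeeping showing that adding $i$ to an arbitrary $S$ never produces a set whose intersection with $C_1$ equals $S_{\iota+1}$, which is where the assumption $i \in C_1 \setminus S_{\iota+1}$ is used. No obstacles beyond this routine check are expected; the heavy lifting is done by Lemma \ref{lem.cparm} and the previously established matching of contribution functions.
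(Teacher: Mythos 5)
Your proposal is correct and follows essentially the same route as the paper's proof: both use Claim \ref{clm.c_iot} to match the contribution functions, Observation \ref{obs} together with the fact that $(S \cup \{i\}) \cap C_1 \ne S_{\iota+1}$ for $i \in C_1 \setminus S_{\iota+1}$ to match the marginal inefficiencies $\inef_{S\cup\{i\}}$, and then invoke Lemma \ref{lem.cparm}. No gaps.
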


\begin{proof}
Notice that for every agent $i \in C_1 \backslash S_{\iota + 1}$ it holds that $S \cup \{i\} \cap C_1 \ne S_{\iota + 1}$ for every $S$. Given Observation \ref{obs} this implies that $\inef^{\iota + 1}_{S \cup \{i\}} = \inef^\iota_{S \cup \{i\}}$ for every $S$. Furthermore, from Claim \ref{clm.c_iot} we have that $c(M, \bpi, i) = c(M^\iota, \bpi^\iota, i) = c(M^{\iota + 1}, \bpi^{\iota + 1}, i)$ for every $i$, and thus Lemma \ref{lem.cparm} applies, and for every $i \in C_1 \backslash S_{\iota + 1}$ we have that $\blamei^{\iota + 1} = \blamei^\iota$.
\end{proof}

\begin{claim}
For each $\iota \in I \backslash \{2^{|C_1|} - 1\}$, $\blamei^{\iota + 1} - \blamei^\iota = r$ for every $i \in S_{\iota + 1}$, where $\blame^\iota = \blamefunc(M^\iota, \bpi^\iota)$ and $\blame^{\iota + 1} = \blamefunc(M^{\iota + 1}, \bpi^{\iota + 1})$, and $r = \frac{1}{|S_{\iota + 1}|} \cdot \sum_{S \subseteq \{1, ...,n\}} \weight \cdot \left[\inef^{\iota + 1}_S - \inef^\iota_S \right]$.
\end{claim}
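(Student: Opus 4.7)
The plan is to establish the result in two steps: first show that all agents in $S_{\iota+1}$ share a common blame difference $\blamei^{\iota+1} - \blamei^\iota$, and then pin down that common value by averaging and invoking $\prop_{AE}$.

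For the first step, I would apply Lemma \ref{lem.rcparm} (the equality-version of $\prop_{RcParM}$) to $(M^{\iota+1}, \bpi^{\iota+1})$ and $(M^\iota, \bpi^\iota)$. The global pivotality hypothesis $c(M^{\iota+1}, \bpi^{\iota+1}, i) = c(M^\iota, \bpi^\iota, i)$ for all $i$ follows from Claim \ref{clm.c_iot}. For any pair $j, k \in S_{\iota+1}$, both lie in $C_1$, so $c(M^{\iota+1}, \bpi^{\iota+1}, j) = c(M^{\iota+1}, \bpi^{\iota+1}, k) = 1$. The marginal-inefficiency condition $\inef^{\iota+1}_{S \cup \{j\}} - \inef^\iota_{S \cup \{j\}} = \inef^{\iota+1}_{S \cup \{k\}} - \inef^\iota_{S \cup \{k\}}$ for all $S \subseteq \{1,\ldots,n\} \setminus \{j,k\}$ reduces, via Observation \ref{obs}, to showing that neither $(S \cup \{j\}) \cap C_1$ nor $(S \cup \{k\}) \cap C_1$ equals $S_{\iota+1}$. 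But since $j \neq k$ and both belong to $S_{\iota+1}$, the set $(S \cup \{j\}) \cap C_1$ contains $j$ yet is missing $k$ (because $k \notin S$), and hence cannot equal $S_{\iota+1}$; the same holds symmetrically for $k$. By Observation \ref{obs} both differences are then zero, so Lemma \ref{lem.rcparm} delivers a common value $r := \blamej^{\iota+1} - \blamej^\iota = \blamek^{\iota+1} - \blamek^\iota$ for every pair $j, k \in S_{\iota+1}$.

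For the second step, I would sum the blame differences over all $i \in \{1,\ldots,n\}$. Claim \ref{clm.bl_c0_iot} annihilates the contributions from $i \in C_0$, and Claim \ref{clm.bl_c1not_iot} forces $\blamei^{\iota+1} - \blamei^\iota = 0$ for $i \in C_1 \setminus S_{\iota+1}$, so $\sum_{i=1}^n (\blamei^{\iota+1} - \blamei^\iota) = |S_{\iota+1}| \cdot r$. Applying $\prop_{AE}$ to each of $\blame^{\iota+1}$ and $\blame^\iota$ converts the left-hand side into $\sum_{S \subseteq \{1,\ldots,n\}} \weight \cdot [\inef^{\iota+1}_S - \inef^\iota_S]$, and dividing by $|S_{\iota+1}|$ yields the stated expression for $r$.

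The main obstacle I anticipate is the careful bookkeeping in verifying the precondition of $\prop_{RcParM}$: one must exploit the fact that both $j$ and $k$ are in $S_{\iota+1}$ and that $S$ avoids each of them to conclude that neither $S \cup \{j\}$ nor $S \cup \{k\}$ can meet $C_1$ precisely in $S_{\iota+1}$. Once that reasoning is in place, Observation \ref{obs} collapses both marginal-inefficiency differences to zero, and the remainder is a standard averaging argument enabled by $\prop_{AE}$ together with the vanishing contributions from $C_0$ and $C_1 \setminus S_{\iota+1}$.
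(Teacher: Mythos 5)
Your proof is correct and follows essentially the same route as the paper's: establish a common difference $r$ for all agents in $S_{\iota+1}$ via Lemma \ref{lem.rcparm} (using Claim \ref{clm.c_iot} and Observation \ref{obs}), then determine $r$ by summing over all agents, invoking Claims \ref{clm.bl_c0_iot} and \ref{clm.bl_c1not_iot} to drop the other terms, and applying $\prop_{AE}$. Your explicit justification that $(S \cup \{j\}) \cap C_1 \ne S_{\iota+1}$ (because $k \in S_{\iota+1}$ is absent from $S \cup \{j\}$) is a small but welcome addition that the paper leaves implicit.
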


\begin{proof}
Notice that for every two agents $j, k \in S_{\iota + 1}$ it holds that $c(M, \bpi, j) = c(M, \bpi, k)$. Given Claim \ref{clm.c_iot}, this implies that $c(M^\iota, \bpi^\iota, j) = c(M^\iota, \bpi^\iota, k)$. Notice also that $S \cup \{j\} \cap C_1 \ne S_{\iota + 1}$ and $S \cup \{k\} \cap C_1 \ne S_{\iota + 1}$ for every $S \subseteq \{1, ..., n\} \backslash \{j, k\}$.
Given Observation \ref{obs}, this implies that $\inef^{\iota + 1}_{S \cup \{j\}} - \inef^\iota_{S \cup \{j\}} = \inef^{\iota + 1}_{S \cup \{k\}} - \inef^\iota_{S \cup \{k\}} = 0$ for every $S \subseteq \{1, ..., n\} \backslash \{j, k\}$. Furthermore, from Claim \ref{clm.c_iot} we have that $c(M, \bpi, i) = c(M^\iota, \bpi^\iota, i) = c(M^{\iota + 1}, \bpi^{\iota + 1}, i)$ for every $i$, and thus Lemma \ref{lem.rcparm} applies, and for every $j, k \in S_{\iota + 1}$ we have that $\blamej^{\iota + 1} - \blamej^\iota = \blamek^{\iota + 1} - \blamek^\iota = r$, where $r$ is some constant. Notice that:
\begin{align*}
    r = \frac{1}{|S_{\iota + 1}|} \cdot \sum_{i \in S_{\iota + 1}} \blamei^{\iota + 1} - \blamei^\iota.
\end{align*}
By using Claim \ref{clm.bl_c0_iot} and Claim \ref{clm.bl_c1not_iot}, we have that $r = \frac{1}{|S_{\iota + 1}|} \cdot \sum_{i \in \{1, ..., n\}} \blamei^{\iota + 1} - \blamei^\iota$. Given that $\blamefunc$ is assumed to satisfy $\prop_{AE}$ (average efficiency), this implies that:
\begin{align*}
    r &= \frac{1}{|S_{\iota + 1}|} \cdot \sum_{i \in \{1, ..., n\}} \blamei^{\iota + 1} - \blamei^\iota = \frac{1}{|S_{\iota + 1}|} \cdot \sum_{S \subseteq \{1, ...,n\}} \weight \cdot \left[\inef^{\iota + 1}_S - \inef^\iota_S \right],
\end{align*}
and hence $\blamei^{\iota + 1} - \blamei^\iota = r = \frac{1}{|S_{\iota + 1}|} \cdot \sum_{S \subseteq \{1, ...,n\}} \weight \cdot \left[\inef^{\iota + 1}_S - \inef^\iota_S \right]$ for every $i \in S_{\iota + 1}$.
\end{proof}

\section{Proofs of the Results from Section \ref{sec.blame_under_uncertainty}}\label{sec.proof_uncertainty}

This section of the appendix contains the proofs of the results from Section \ref{sec.approaches_to_blame}, in particular: Proposition \ref{prop.uncertainty_validity}, Proposition \ref{prop.uncertainty_consistency}, and Theorem \ref{thm.approx_satisfability}. 

\subsection{Proof of Proposition \ref{prop.uncertainty_validity}}

\begin{proposition}
Let $\bpie$ be a solution to the optimization problem $\max_{\pi \in \mathcal P(\bpi)} \return (\pi)$. Then
$\widehat{\blamefunc}_{SV, V}(M, \mathcal P(\bpi)) = \blamefunc_{SV}(M, \bpie)$ satisfies $\prop_V$ (validity).
\end{proposition}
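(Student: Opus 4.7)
The plan is to exploit the efficiency property of the standard Shapley value method (Theorem \ref{thm.shapley_value.unique}) together with the consistency assumption on the uncertainty set $\mathcal{P}(\bpi)$ imposed in Section \ref{sec.setting}. The key observation is that $\widehat{\blamefunc}_{SV, V}(M, \mathcal P(\bpi))$ is defined by applying the exact Shapley value operator to the surrogate joint policy $\bpie$, so efficiency tells us precisely the total blame it distributes --- namely, $\return(\optpi) - \return(\bpie)$ rather than the desired $\return(\optpi) - \return(\bpi) = \inef$.

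First, I would invoke Theorem \ref{thm.shapley_value.unique} to conclude that $\blamefunc_{SV}$ satisfies $\prop_E$ (efficiency). Applied to the MMDP $M$ with behavior policy $\bpie$, this gives $\sum_{i=1}^n \widehat{\blamei} = \return(\optpi) - \return(\bpie)$, where $\widehat{\blame} = \widehat{\blamefunc}_{SV, V}(M, \mathcal P(\bpi)) = \blamefunc_{SV}(M, \bpie)$. Note that $\optpi$ in the efficiency identity is defined as an optimal joint policy over $\Pi$, independent of the behavior policy that appears on the subtracted side.

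Next, I would use the fact that $\bpie$ is a maximizer of $\return$ over $\mathcal{P}(\bpi)$ and that, by the consistency assumption stated in Section \ref{sec.setting}, the true behavior policy $\bpi$ itself lies in $\mathcal{P}(\bpi)$ (more precisely, $\bpi(\cdot|s) \in \mathcal{P}(\bpi, s)$ for every $s$, and $\bpi$ factorizes over agents in the required manner, so $\bpi$ is a feasible element). Therefore $\return(\bpie) \ge \return(\bpi)$, which combined with the efficiency identity yields
\begin{align*}
    \sum_{i=1}^n \widehat{\blamei} \;=\; \return(\optpi) - \return(\bpie) \;\le\; \return(\optpi) - \return(\bpi) \;=\; \inef.
\end{align*}
This is exactly the defining inequality of $\prop_V$, so the result follows.

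I do not expect any genuine obstacle here: the entire argument is a one-line chain that uses efficiency of the underlying Shapley value method to rewrite the total blame, and then the optimality of $\bpie$ together with feasibility of $\bpi$ to bound it by $\inef$. The only subtlety worth mentioning explicitly in the write-up is that the consistency assumption on $\mathcal{P}(\bpi)$ is what guarantees $\return(\bpie) \ge \return(\bpi)$ (as opposed to only $\return(\bpie) \ge \return(\bpi)$ with high probability, which would require the corresponding high-probability version of $\prop_V$ mentioned in the footnote of Section \ref{sec.setting}).
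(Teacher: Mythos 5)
Your argument is exactly the paper's proof: invoke $\prop_E$ of $\blamefunc_{SV}$ from Theorem \ref{thm.shapley_value.unique} to identify the total blame with $\return(\optpi) - \return(\bpie)$, then use consistency of $\mathcal P(\bpi)$ (so $\bpi$ is feasible) and optimality of $\bpie$ to conclude $\return(\bpie) \ge \return(\bpi)$ and hence $\sum_i \widehat{\blamei} \le \inef$. The proposal is correct and takes essentially the same route, with the added (accurate) remark about the high-probability variant being the only difference in presentation.
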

\begin{proof}
In the setting of interest, $P(\bpi)$ is consistent with $\bpi$, that is $\bpi \in P(\bpi)$, and hence $\return(\bpie) \geq \return(\bpi)$. By Theorem \ref{thm.shapley_value.unique}, the blame attribution method $\blamefunc_{SV}$ satisfies property $\prop_E$ (efficiency), which implies that $\sum_{i = 1}^n \widehat{\blamei} = \return(\optpi) - \return(\bpie)$, where $\widehat{\blame} = \blamefunc_{SV}(M, \bpie)$. This implies:
\begin{align*}
    & \return(\bpie) \geq \return(\bpi)\Rightarrow\\
    \Rightarrow& \return(\optpi) - \return(\bpie) \leq \return(\optpi) - \return(\bpi)\Rightarrow\\
    \Rightarrow& \return(\optpi) - \return(\bpie)\leq \inef \Rightarrow\\
    \Rightarrow& \sum_{i = 1}^n \widehat{\blamei} \leq \inef.
\end{align*}
Therefore, $\widehat{\blamefunc}_{SV, V}$ satisfies property $\prop_V$ (validity).
\end{proof}

\subsection{Proof of Proposition \ref{prop.uncertainty_consistency}}

\begin{proposition}
Let $\blame^i_i$ be the minimum value of the objective in \eqref{prob.robust_sv}.
Then $\widehat{\blamefunc}_{SV, BC}(M, \mathcal P(\bpi)) = (\blame^1_1, ..., \blame^n_n)$ satisfies $\prop_V$ (validity) and $\prop_{BC}(\blamefunc_{SV})$ (Blackstone consistency  $w.r.t.$ $\blamefunc_{SV}(M, \bpi)$).
\end{proposition}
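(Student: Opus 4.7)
The plan is to reduce both properties to the single observation that $\bpi \in \mathcal{P}(\bpi)$ (the consistency assumption stated in Section \ref{sec.setting}), so that $\bpi$ is feasible for each of the $n$ minimization problems \eqref{prob.robust_sv}. The key identification I would make first is that, by the definition of Shapley value in Eq. \eqref{eq.def_sv}, the objective in \eqref{prob.robust_sv} evaluated at any fixed $\pi \in \mathcal{P}(\bpi)$ is exactly $\blamefunc_{SV, i}(M, \pi)$. Hence $\blame^i_i = \min_{\pi \in \mathcal{P}(\bpi)} \blamefunc_{SV, i}(M, \pi)$.

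For Blackstone consistency, I would then plug in the specific choice $\pi = \bpi$: since $\bpi$ is feasible, the minimum is at most the value at $\bpi$, giving $\blame^i_i \le \blamefunc_{SV, i}(M, \bpi)$ for every $i$. This is precisely $\prop_{BC}(\blamefunc_{SV})$ by its definition in Section \ref{sec.properties}.

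For validity, I would sum this pointwise inequality over $i$ and invoke $\prop_E$ (efficiency) of $\blamefunc_{SV}$, which is guaranteed by Theorem \ref{thm.shapley_value.unique}:
\begin{align*}
\sum_{i=1}^n \blame^i_i \;\le\; \sum_{i=1}^n \blamefunc_{SV, i}(M, \bpi) \;=\; \return(\optpi) - \return(\bpi) \;=\; \inef,
\end{align*}
which is exactly the statement of $\prop_V$.

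There is no real obstacle; the only subtlety worth flagging is that I should verify $\blame^i_i$ is well-defined, i.e., the infimum in \eqref{prob.robust_sv} is attained. This follows from the compactness of $\mathcal{P}(\bpi)$ (a standard assumption in the robust MDP literature cited in Section \ref{sec.setting}) together with continuity of the objective in $\pi$, since $\return$ depends continuously on the joint policy on a finite MMDP and the inner optimizations $\optpir_S, \optpir_{S \cup \{i\}}$ give max-of-linear expressions. Modulo this well-posedness check, both conclusions are one-line consequences of $\bpi \in \mathcal{P}(\bpi)$ and efficiency of $\blamefunc_{SV}$.
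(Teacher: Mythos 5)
Your proposal is correct and follows essentially the same argument as the paper's proof: identify the objective of \eqref{prob.robust_sv} at a fixed $\pi$ with $\blamefunc_{SV}(M,\pi)$, use $\bpi \in \mathcal P(\bpi)$ to get $\blame^i_i \le \blamei$ for Blackstone consistency, and sum over $i$ together with $\prop_E$ from Theorem \ref{thm.shapley_value.unique} for validity. The well-posedness remark about attainment of the minimum is a reasonable extra precaution but not something the paper addresses.
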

\begin{proof}
Let $\blame = \blamefunc_{SV}(M, \bpi)$. Given Eq. \eqref{eq.def_sv}, $\blame^i_i$ being the minimum value of the objective in \eqref{prob.robust_sv} implies that $\blame^i_i = \min_{\pi \in \mathcal P(\bpi)} \blamei^\pi$ s.t. $\blame^\pi = \blamefunc_{SV}(M, \pi)$. In the setting of interest, $P(\bpi)$ is consistent with $\bpi$, that is $\bpi \in P(\bpi)$, which implies that $\blame^i_i = \min_{\pi \in \mathcal P(\bpi)} \blamei^\pi \le \blamei$. Therefore, $\widehat{\blamefunc}_{SV, BC}(M, \mathcal P(\bpi))$ satisfies $\prop_{BC}(\blamefunc_{SV})$ (Blackstone consistency  $w.r.t.$ $\blamefunc_{SV}(M, \bpi)$).
Furthermore, by applying the same reasoning to all agents, we obtain $\sum_{i \in \{1, ..., n\}}\blamei^i \leq \sum_{i \in \{1, ..., n\}}\blamei$. Given Theorem \ref{thm.shapley_value.unique}, this implies $\sum_{i \in \{1, ..., n\}}\blamei^i \leq \inef$, and hence $\widehat{\blamefunc}_{SV, BC}(M, \mathcal P(\bpi))$ also satisfies $\prop_V$ (validity).
\end{proof}

\subsection{Proof of Theorem \ref{thm.approx_satisfability}}

\begin{theorem}
Consider $\widehat \blamefunc$ and $\blamefunc$ s.t. $\norm{\widehat \blamefunc(M, \mathcal P(\bpi)) -\blamefunc(M, \bpi)}_{1} \le \epsilon$ for any $M$, $\bpi$, and  $\mathcal P(\bpi)$. Then if $\blamefunc$ satisfies a property $\prop \in \{ \prop_V, \prop_E, \prop_R, \prop_S, \prop_I,  \prop_{AE}\}$,  $\widehat \blamefunc$ satisfies $\epsilon$-$\prop$. Moreover, if $\blamefunc$ satisfies a property $\prop \in \{ \prop_{CM} ,\prop_{PerM}, \prop_{cPerM}, \prop_{cParM}, \prop_{RcParM}\}$,  $\widehat \blamefunc$ satisfies $2\epsilon$-$\prop$.
\end{theorem}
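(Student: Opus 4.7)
The plan is to verify each of the eleven properties separately, using as the sole tool the hypothesis that $\|\widehat\blamefunc(M, \mathcal P(\bpi)) - \blamefunc(M, \bpi)\|_{1} \le \epsilon$. This hypothesis immediately yields the coordinate-wise bound $|\widehat\blamei - \blamei| \le \epsilon$ for every $i$, and more generally the subset bound $|\sum_{i \in S} \widehat\blamei - \sum_{i \in S} \blamei| \le \epsilon$ for any $S \subseteq \{1, \dots, n\}$. The distinction between the two groups in the statement comes down to whether the property is a constraint on a single blame assignment (one use of the $L_{1}$ bound, giving $\epsilon$) or a comparison between two blame assignments $\blame^{1} = \blamefunc(M^{1}, \bpi^{1})$ and $\blame^{2} = \blamefunc(M^{2}, \bpi^{2})$ (two uses of the bound, giving $2\epsilon$).

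\textbf{Group 1 (additive slack $\epsilon$).} For $\prop_{V}, \prop_{E}, \prop_{R}, \prop_{AE}$, the defining (in)equality compares $\sum_{i \in S}\blamei$ with a quantity ($\inef$, $\inef_{S}$, or the average of marginal inefficiencies) that does not depend on $\blame$. Substituting $\widehat\blame$ and using $|\sum_{i \in S}\widehat\blamei - \sum_{i \in S}\blamei| \le \epsilon$ gives the $\epsilon$-version. For $\prop_{S}$ (symmetry), under the hypothesis $\inef_{S \cup \{i\}} = \inef_{S \cup \{j\}}$ we have $\blamei = \blamej$, so the triangle inequality together with the fact that $|\widehat\blamei - \blamei| + |\widehat\blamej - \blamej| \le \|\widehat\blame - \blame\|_{1} \le \epsilon$ yields $|\widehat\blamei - \widehat\blamej| \le \epsilon$. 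For $\prop_{I}$ (invariance), $\blamei = 0$ implies $\widehat\blamei = \widehat\blamei - \blamei \le \epsilon$. In each case only one of the two problem instances is involved, so only one $\epsilon$ is accumulated.

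\textbf{Group 2 (additive slack $2\epsilon$).} For $\prop_{CM}, \prop_{PerM}, \prop_{cPerM}, \prop_{cParM}$, the conclusion to be verified has the form $\blamei^{1} \ge \blamei^{2}$ (possibly with different agent indices or different first arguments). Applying the coordinate-wise bound to each instance gives $\widehat\blamei^{1} \ge \blamei^{1} - \epsilon \ge \blamei^{2} - \epsilon \ge \widehat\blamei^{2} - 2\epsilon$, proving the $2\epsilon$-version. For $\prop_{RcParM}$, the conclusion is of the form $\blamej^{1} - \blamej^{2} \ge \blamek^{1} - \blamek^{2}$, which rearranges to $(\widehat\blamej^{1} - \widehat\blamek^{1}) - (\widehat\blamej^{2} - \widehat\blamek^{2}) \ge -2\epsilon$; the key is to exploit the $L_{1}$ bound coordinate-wise within each instance, $|\widehat\blamej^{1} - \blamej^{1}| + |\widehat\blamek^{1} - \blamek^{1}| \le \epsilon$ and analogously for instance $2$, so that the total deviation from $(\blamej^{1} - \blamek^{1}) - (\blamej^{2} - \blamek^{2}) \ge 0$ is at most $2\epsilon$ rather than the $4\epsilon$ one would obtain from a coordinate-by-coordinate $L_{\infty}$ argument.

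\textbf{Main obstacle.} There is no deep difficulty; the proof is essentially a careful bookkeeping of triangle inequalities. The only non-routine step is the $\prop_{RcParM}$ case, where the naive bound would double-count and yield $4\epsilon$; getting the advertised $2\epsilon$ requires pairing the two coordinate errors from the same instance and invoking the $L_{1}$ (not $L_{\infty}$) closeness hypothesis. Once this observation is in place, every case collapses to a two-line estimate.
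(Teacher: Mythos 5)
Your proposal is correct and follows essentially the same route as the paper's proof: the single-instance properties get one application of the $L_1$ bound (pairing the two relevant coordinates within one instance for $\prop_S$, and summing over the relevant subset for $\prop_V,\prop_E,\prop_R,\prop_{AE}$), while the two-instance properties accumulate one $\epsilon$ per instance, and for $\prop_{RcParM}$ you correctly identify that the two coordinate errors from the same instance must be bounded jointly by the $L_1$ hypothesis to avoid a $4\epsilon$ loss — exactly the pairing used in the paper's inequalities (r7) and (r8). No gaps.
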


\begin{proof}
We prove the implication for each property $\prop$:
\begin{itemize}
    \item \underline{$\prop_V$ (validity):} Let $\blame = \blamefunc(M, \bpi)$ and $\widehat \blame = \widehat \blamefunc(M, \mathcal P(\bpi))$. If $\blamefunc$ satisfies $\prop_V$,
    \begin{align*}
        & \norm{\widehat{\blame} - \blame}_1 \leq \epsilon
        \Rightarrow \sum_{i = 1}^n |\widehat{\blamei} - \blamei| \leq \epsilon
        \Rightarrow \bigg|\sum_{i = 1}^n \widehat{\blamei} - \blamei\bigg| \leq \epsilon \Rightarrow\\
        \Rightarrow& \sum_{i = 1}^n \widehat{\blamei} \leq \sum_{i = 1}^n \blamei + \epsilon
        \Rightarrow \sum_{i = 1}^n \widehat{\blamei} \leq \inef + \epsilon,
    \end{align*}
    and hence $\widehat \blamefunc$ satisfies $\epsilon$-$\prop_V$.
    \item \underline{$\prop_E$ (efficiency):} Let $\blame = \blamefunc(M, \bpi)$ and $\widehat \blame = \widehat \blamefunc(M, \mathcal P(\bpi))$. If $\blamefunc$ satisfies $\prop_E$,
    \begin{align*}
        & \norm{\widehat{\blame} - \blame}_1 \leq \epsilon
        \Rightarrow \sum_{i = 1}^n |\widehat{\blamei} - \blamei| \leq \epsilon
        \Rightarrow \bigg|\sum_{i = 1}^n \widehat{\blamei} - \blamei\bigg| \leq \epsilon \Rightarrow\\
        \Rightarrow& \bigg|\sum_{i = 1}^n \widehat{\blamei} - \inef\bigg| \leq \epsilon,
    \end{align*}
    and hence $\widehat \blamefunc$ satisfies $\epsilon$-$\prop_E$.
    \item \underline{$\prop_R$ (rationality):} Let $\blame = \blamefunc(M, \bpi)$ and $\widehat \blame = \widehat \blamefunc(M, \mathcal P(\bpi))$. If $\blamefunc$ satisfies $\prop_R$,
    \begin{align*}
        & \norm{\widehat{\blame} - \blame}_1 \leq \epsilon
        \Rightarrow \sum_{i = 1}^n |\widehat{\blamei} - \blamei| \leq \epsilon
        \Rightarrow \sum_{i \in S} |\widehat{\blamei} - \blamei| \leq \epsilon \Rightarrow\\
        \Rightarrow& \bigg|\sum_{i \in S} \widehat{\blamei} - \blamei\bigg| \leq \epsilon 
        \Rightarrow \sum_{i \in S} \widehat{\blamei} \leq \sum_{i \in S} \blamei + \epsilon
        \Rightarrow \sum_{i \in S} \widehat{\blamei} \leq \inef_S + \epsilon,
    \end{align*}
    and hence $\widehat \blamefunc$ satisfies $\epsilon$-$\prop_R$.
    \item \underline{$\prop_S$ (symmetry):} Let $\blame = \blamefunc(M, \bpi)$ and $\widehat \blame = \widehat \blamefunc(M, \mathcal P(\bpi))$. If $\blamefunc$ satisfies $\prop_S$,
    \begin{align*}\label{ineq.r1}
        & \norm{\widehat{\blame} - \blame}_1 \leq \epsilon
        \Rightarrow \sum_{i = 1}^n |\widehat{\blamei} - \blamei| \leq \epsilon
        \Rightarrow |\widehat{\blamei} - \blamei| + |\widehat{\blamej} - \blamej| \leq \epsilon \Rightarrow\\
        \Rightarrow& \widehat{\blamei} - \blamei - \widehat{\blamej} + \blamej \leq \epsilon
        \Rightarrow \widehat{\blamei} - \widehat{\blamej} \leq \epsilon \tag{r1}
    \end{align*}
    and
    \begin{align*}\label{ineq.r2}
        & \norm{\widehat{\blame} - \blame}_1 \leq \epsilon
        \Rightarrow \sum_{i = 1}^n |\widehat{\blamei} - \blamei| \leq \epsilon
        \Rightarrow |\widehat{\blamei} - \blamei| + |\widehat{\blamej} - \blamej| \leq \epsilon \Rightarrow\\
        \Rightarrow& - \widehat{\blamei} + \blamei + \widehat{\blamej} - \blamej \leq \epsilon
        \Rightarrow - \widehat{\blamei} + \widehat{\blamej} \leq \epsilon \tag{r2}.
    \end{align*}
    From \eqref{ineq.r1} and \eqref{ineq.r2}, we have $|\widehat{\blamei} - \widehat{\blamej}| \leq \epsilon$, and hence $\widehat \blamefunc$ satisfies $\epsilon$-$\prop_S$.
    \item \underline{$\prop_I$ (invariance):} Let $\blame = \blamefunc(M, \bpi)$ and $\widehat \blame = \widehat \blamefunc(M, \mathcal P(\bpi))$. If $\blamefunc$ satisfies $\prop_I$,
    \begin{align*}
        & \norm{\widehat{\blame} - \blame}_1 \leq \epsilon
        \Rightarrow \sum_{i = 1}^n |\widehat{\blamei} - \blamei| \leq \epsilon
        \Rightarrow |\widehat{\blamei} - \blamei| \leq \epsilon \Rightarrow\\
        \Rightarrow& |\widehat{\blamei}| \leq \epsilon
        \Rightarrow \widehat{\blamei} \leq \epsilon,
    \end{align*}
    and hence $\widehat \blamefunc$ satisfies $\epsilon$-$\prop_I$.
    \item \underline{$\prop_{AE}$ (average efficiency):} Let $\blame = \blamefunc(M, \bpi)$ and $\widehat \blame = \widehat \blamefunc(M, \mathcal P(\bpi))$. If $\blamefunc$ satisfies $\prop_{AE}$,
    \begin{align*}
        & ||\widehat \blame - \blame||_1 \leq \epsilon
        \Rightarrow \sum_{i = 1}^n |\widehat{\blamei} - \blamei| \leq \epsilon
        \Rightarrow \bigg|\sum_{i = 1}^n \widehat{\blamei} - \blamei\bigg| \leq \epsilon \Rightarrow\\
        \Rightarrow& \bigg|\sum_{i = 1}^n \widehat{\blamei} - \sum_{S \subseteq \{1, ..., n\}} \frac{1}{2^n - 1} \cdot \inef_S\bigg| \leq \epsilon,
    \end{align*}
    and hence $\widehat \blamefunc$ satisfies $\epsilon$-$\prop_{AE}$.
    \item \underline{$\prop_{CM}$ (contribution monotonicity) and $\prop_{cParM}$ (c-participation monotonicity):} Let $\blame^1 = \blamefunc(M^1, \bpi^1)$, $\blame^2 = \blamefunc(M^2, \bpi^2)$, $\widehat \blame^1 = \widehat \blamefunc(M^1, \mathcal P(\bpi^1))$ and $\widehat \blame^2 = \widehat \blamefunc(M^2, \mathcal P(\bpi^2))$. 
    To show that $\blamefunc$ satisfying $\prop_{CM}$ (resp. $\prop_{cParM}$) implies that $\widehat \blamefunc$ satisfies $2\epsilon$-$\prop_{CM}$ (resp. $2\epsilon$-$\prop_{cParM}$), it suffices to show that $\blamei^1 - \blamei^2 \ge 0$ implies $\widehat{\blamei}^1 \geq \widehat{\blamei}^2 - 2\epsilon$. Let $\blamei^1 - \blamei^2 \ge 0$. We have
    \begin{align*}\label{ineq.r3}
        & \norm{\widehat{\blame}^1 - \blame^1}_1 \leq \epsilon
        \Rightarrow \sum_{i = 1}^n |\widehat{\blamei}^1 - \blamei^1| \leq \epsilon
        \Rightarrow |\widehat{\blamei}^1 - \blamei^1| \leq \epsilon \Rightarrow\\
        \Rightarrow& \blamei^1 - \widehat{\blamei}^1 \leq \epsilon \tag{r3}
    \end{align*}
    and
    \begin{align*}\label{ineq.r4}
        & \norm{\widehat{\blame}^2 - \blame^2}_1 \leq \epsilon
        \Rightarrow \sum_{i = 1}^n |\widehat{\blamei}^2 - \blamei^2| \leq \epsilon
        \Rightarrow |\widehat{\blamei}^2 - \blamei^2| \leq \epsilon \Rightarrow\\
        \Rightarrow& \widehat{\blamei}^2 - \blamei^2 \leq \epsilon \tag{r4}.
    \end{align*}
    By adding \eqref{ineq.r3} and \eqref{ineq.r4}, we obtain
    \begin{align*}
        & \blamei^1 - \widehat{\blamei}^1 + \widehat{\blamei}^2 - \blamei^2 \leq 2\epsilon
        \Rightarrow \widehat{\blamei}^1 \geq \widehat{\blamei}^2 - 2\epsilon,
    \end{align*}
    and hence $\widehat \blamefunc$ satisfies $2\epsilon$-$\prop_{CM}$ (resp. $2\epsilon$-$\prop_{cParM}$).
    
    \item \underline{$\prop_{PerM}$ (performance monotonicity) and $\prop_{cPerM}$ (c-performance monotonicity):} Let $\blame = \blamefunc(M, (\pi_i,\bpi_{-i}))$, $\blame' = \blamefunc(M, (\pi_i', \bpi_{-i}))$, $\widehat \blame = \widehat \blamefunc(M, \mathcal P((\pi_i,\bpi_{-i})))$ and $\widehat \blame' = \widehat \blamefunc(M, \mathcal P((\pi_i', \bpi_{-i})))$.
     To show that $\blamefunc$ satisfying $\prop_{PerM}$ (resp. $\prop_{cPerM}$) implies that $\widehat \blamefunc$ satisfies $2\epsilon$-$\prop_{PerM}$ (resp. $2\epsilon$-$\prop_{cPerM}$), 
     it suffices to show that $\blamei \ge \blamei'$ 
     implies $\widehat{\blamei} \geq \widehat{\blamei}' - 2\epsilon$.
    Let $\blamei \ge \blamei'$. We have
    \begin{align*}\label{ineq.r5}
        & \norm{\widehat{\blame} - \blame}_1 \leq \epsilon
        \Rightarrow \sum_{i = 1}^n |\widehat{\blamei} - \blamei| \leq \epsilon
        \Rightarrow |\widehat{\blamei} - \blamei| \leq \epsilon \Rightarrow\\
        \Rightarrow& \blamei - \widehat{\blamei} \leq \epsilon \tag{r5}
    \end{align*}
    and
    \begin{align*}\label{ineq.r6}
        & \norm{\widehat{\blame}' - \blame'}_1 \leq \epsilon
        \Rightarrow \sum_{i = 1}^n |\widehat{\blamei}' - \blamei'| \leq \epsilon
        \Rightarrow |\widehat{\blamei}' - \blamei'| \leq \epsilon \Rightarrow\\
        \Rightarrow& \widehat{\blamei}' - \blamei' \leq \epsilon \tag{r6}.
    \end{align*}
    By adding \eqref{ineq.r5} and \eqref{ineq.r6}, we obtain
    \begin{align*}
        &\blamei - \widehat{\blamei} + \widehat{\blamei}' - \blamei' \leq 2\epsilon
        \Rightarrow \widehat{\blamei} \geq \widehat{\blamei}' - 2\epsilon,
    \end{align*}
    and hence $\widehat \blamefunc$ satisfies $2\epsilon$-$\prop_{PerM}$ (resp. $2\epsilon$-$\prop_{cPerM}$).
    
    \item \underline{$\prop_{RcParM}$ (relative c-participation monotonicity):} Let $\blame^1 = \blamefunc(M^1, \bpi^1)$, $\blame^2 = \blamefunc(M^2, \bpi^2)$, $\widehat \blame^1 = \widehat \blamefunc(M^1, \mathcal P(\bpi^1))$ and $\widehat \blame^2 = \widehat \blamefunc(M^2, \mathcal P(\bpi^2))$. 
    To show that $\blamefunc$ satisfying $\prop_{RcParM}$ implies that $\widehat \blamefunc$ satisfies $2\epsilon$-$\prop_{RcParM}$,
     it suffices to show that ${\blamej}^1 - {\blamej}^2 \ge {\blamek}^1 - {\blamek}^2$ 
     implies $\widehat{\blamej}^1 - \widehat{\blamej}^2 \geq \widehat{\blamek}^1 - \widehat{\blamek}^2 - 2\epsilon$. Let ${\blamej}^1 - {\blamej}^2 \ge {\blamek}^1 - {\blamek}^2$. We have
    \begin{align*}\label{ineq.r7}
        & \norm{\widehat{\blame}^1 - \blame^1}_1 \leq \epsilon
        \Rightarrow \sum_{i = 1}^n |\widehat{\blamei}^1 - \blamei^1| \leq \epsilon
        \Rightarrow |\widehat{\blamej}^1 - \blamej^1| + |\widehat{\blamek}^1 - \blamek^1| \leq \epsilon \Rightarrow\\
        \Rightarrow& \blamej^1 - \widehat{\blamej}^1 - \blamek^1 + \widehat{\blamek}^1 \leq \epsilon \tag{r7}
    \end{align*}
    and
    \begin{align*}\label{ineq.r8}
        & \norm{\widehat{\blame}^2 - \blame^2}_1 \leq \epsilon
        \Rightarrow \sum_{i = 1}^n |\widehat{\blamei}^2 - \blamei^2| \leq \epsilon
        \Rightarrow |\widehat{\blamej}^2 - \blamej^2| + |\widehat{\blamek}^2 - \blamek^2| \leq \epsilon \Rightarrow\\
        \Rightarrow& - \blamej^2 + \widehat{\blamej}^2 + \blamek^2 - \widehat{\blamek}^2 \leq \epsilon \tag{r8}.
    \end{align*}
    By adding \eqref{ineq.r7} and \eqref{ineq.r8}, we obtain
    \begin{align*}
        & \blamej^1 - \widehat{\blamej}^1 - \blamek^1 + \widehat{\blamek}^1 - \blamej^2 + \widehat{\blamej}^2 + \blamek^2 - \widehat{\blamek}^2 \leq 2\epsilon\Rightarrow\\
        \Rightarrow& \widehat{\blamej}^1 - \widehat{\blamej}^2 \geq \widehat{\blamek}^1 - \widehat{\blamek}^2 - 2\epsilon,
    \end{align*}
    and hence $\widehat \blamefunc$ satisfies $2\epsilon$-$\prop_{RcParM}$.
\end{itemize}
\end{proof}

}
}
{}
%%%%%%%%%%%%%%%%%%%%%%%%%%%%%%%%%%%%%
%%%%%%%%%%%%%%%%%%%%%%%%%%%%%%%%%%%%%
\end{document}

%%%%%%%%%%%%%%%%%%%%%%%%%%%%%%%%%%%%%
%%%%%%%%%%%%%%%%%%%%%%%%%%%%%%%%%%%%%